\documentclass{article}

\usepackage{flow_preprint}

\usepackage[utf8]{inputenc}
\usepackage[T1]{fontenc}
\usepackage{hyperref}
\usepackage{url}
\usepackage{booktabs}
\usepackage{amsfonts}
\usepackage{nicefrac}
\usepackage{microtype}
\usepackage[table]{xcolor}
\usepackage{multirow}
\usepackage{siunitx}
\usepackage{makecell}
\usepackage{graphicx}
\usepackage{natbib}
\usepackage{doi}
\usepackage{subcaption}
\usepackage{caption}
\usepackage{amsmath}
\usepackage{amssymb}
\usepackage{mathtools}
\usepackage{amsthm}

\usepackage{cleveref}
\usepackage{longtable}
\usepackage{array}
\usepackage{wrapfig}

\definecolor{oursblue}{RGB}{42,117,187}

\theoremstyle{plain}
\newtheorem{theorem}{Theorem}[section]
\newtheorem{proposition}[theorem]{Proposition}
\newtheorem{lemma}[theorem]{Lemma}
\newtheorem{corollary}[theorem]{Corollary}
\theoremstyle{definition}

\newtheorem{assumption}[theorem]{Assumption}
\theoremstyle{remark}
\newtheorem{remark}[theorem]{Remark}

\newcommand{\flow}{v_\theta}
\newcommand{\E}{\mathbb{E}}
\newcommand{\X}{\mathcal{X}}

\title{Flow Mismatching: Unsupervised Anomaly Detection via Velocity Discrepancies in Flow Matching Models}

\author{
  \resizebox{\textwidth}{!}{
    \begin{tabular}{cccc}
      {\Large Shengzhe Chen$^{1}$} &
      {\Large Mehrdad Moradi$^{2}$} &
      {\Large Kamran Paynabar$^{2}$} &
      {\Large Hao Yan$^{1}$} \\
      \texttt{schen415@asu.edu} &
      \texttt{mmoradi6@gatech.edu} &
      \texttt{kamran.paynabar@isye.gatech.edu} &
      \texttt{haoyan@asu.edu}
    \end{tabular}
  } \\[0.6em]
  $^{1}$Arizona State University \\
  $^{2}$Georgia Institute of Technology
}
\begin{document}

\raggedbottom

\maketitle

\begin{abstract}
  We propose Flow Mismatching, an unsupervised anomaly detection method that deliberately avoids reconstruction-based paradigms. Instead, we treat flow matching as geometric dynamics and leverage a key insight: anomalies occur at places where the learned normal flow disagrees with the geometric path toward a test image.

  Given a flow matching model trained only on normal images, we probe its learned velocity field along affine paths from Gaussian noise to a target image. Along each path, we compare the model-predicted velocity, which follows normal generative dynamics, with the geometric velocity toward the target, which includes any anomalous content. Anomalies induce strong local disagreement between these velocities. Aggregating the mismatch over different time steps and multiple paths yields pixel-wise heatmaps and image-level scores without test-time optimization, feature memories, or additional calibration.

  Our analysis shows that the population mismatch decomposes into an irreducible denoising term and a Fisher-divergence term between the test-path and normal-path score functions, which identifies the score-gap component that drives anomaly separation and explains the effectiveness of robust path aggregation. Extensive experiments on MVTec-AD and VisA demonstrate superior performance compared with SOTA reconstruction-based and recent flow matching-based approaches.
\end{abstract}

\section{Introduction}
\label{sec:intro}

Industrial visual anomaly detection aims to identify subtle  manufacturing defects, where anomalies are rare, diverse, and expensive to miss. In this setting, training data typically contains only normal samples, requiring unsupervised methods that can detect and localize anomaly at test time.

\textbf{Existing methods and limitations.} Unsupervised anomaly detection methods are commonly reconstruction-based, feature-based, or score-based. Reconstruction methods, including GANs, autoencoders, and diffusion models, detect anomalies through reconstruction residuals, but may also reconstruct plausible defects and weaken the anomaly signal. Feature-based methods compare test features with normal prototypes, often relying on memory banks, patch retrieval, or separate feature extractors. Score-based methods estimate likelihoods or score functions, but may require expensive inference-time gradient computation. These limitations motivate a direct test-time signal that avoids explicit reconstruction, external feature memories, and back-propagation at inference.

\textbf{Flow matching as normal feature dynamics.} Flow matching ~\citep{lipman2023flow} learns a continuous-time generative model by regressing a deterministic velocity field that transports samples from a simple Gaussian prior to the data distribution. When trained on normal data only, it encodes normal sample manifold and defines a “normal” generative dynamics. Therefore, normal flow velocity has the inherent ability in separating anomalies by incorporating the following insight:

\textbf{Anomalies reveal themselves as velocity disagreement.} Our key insight is that \emph{anomalies occur at places where the learned normal flow velocity disagrees with the geometric path velocity to the test image}. In detection, we construct affine paths from random Gaussian noise to a possibly anomalous target image. Along each path, we compare the two velocities: the model-predicted velocity, which follows normal generative dynamics, and the geometric path velocity toward the target, which potentially includes anomalous content. When the test image contains anomalies, the two velocities diverge strongly in anomalous regions, while remaining aligned on normal regions. Building on this signal, we propose \emph{\textbf{Flow Mismatching}}: a test-time detector that aggregates per-pixel velocity discrepancies over multiple paths to produce anomaly heatmaps and image-level scores using only forward passes, without auxiliary components like memory-bank or pretrained feature extractor.

\begin{wrapfigure}{r}{0.52\columnwidth}
\vspace{-0.5em}
\centering
\includegraphics[width=0.50\columnwidth]{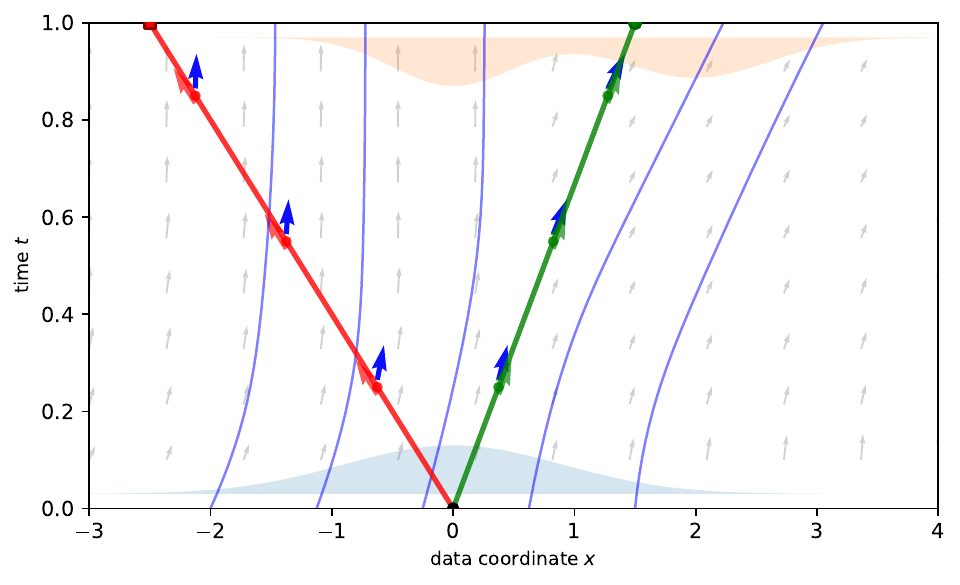}
\caption{Velocity mismatch in the $(x,t)$ plane. Light arrows denote the learned flow $\flow(x,t)$ from prior to normal data, while colored affine paths from the same $x_0$ target either a normal sample (green) or an anomalous sample (red). At selected times, the predicted velocity $\flow(x_t,t)$ aligns with the geometric velocity $y-x_0$ for normal targets but disagrees for anomalous targets, revealing the anomaly signal.}
\label{fig:fig1}
\vspace{-1em}
\end{wrapfigure}

\textbf{Theoretical guarantees.} Our analysis formalizes that the velocity mismatch signal decomposes into two interpretable terms: a denoising error that captures irreducible uncertainty, and a Fisher-divergence term between the test-path distribution and the normal-path distribution. The latter quantifies how much the score function induced by the test-conditioned paths deviates from that of normal paths, explaining why velocity mismatch separates defects. We then provide a concise quantile bridge bound that motivates robust path aggregation.

Our contributions are threefold. \textbf{(1) Method.} We propose \emph{Flow Mismatching}, a purely test-time detector that computes per-pixel velocity discrepancy between a flow matching model trained on normal data and geometric transport direction, aggregating time steps and multiple paths to produce anomaly heatmaps and image-level anomaly scores. \textbf{(2) Theory.} We show that the mismatch signal decomposes into a denoising term and a Fisher-divergence term between score functions, yielding a principled separation mechanism. \textbf{(3) Empirics.} On MVTec-AD and VisA, \textit{Flow Mismatching} with U-Net architecture matches or even outperforms reconstruction-based baselines that rely on advanced architectures like Mamba or powerful pretrained feature extractors like ViT and DINO, and surpasses flow-matching-based AD methods.
It also provides a tunable test-time compute mechanism: a small number of paths and time steps along each path gives high-throughput detection, while larger budgets improve detection performance and robustness through parallelizable forward passes.

\section{Related Works}

\subsection{Industrial Visual Anomaly Detection}

Unsupervised industrial visual anomaly detection aims to learn the distribution of normal samples and identify deviations at test time. A major line of work follows the reconstruction paradigm. A model is trained to reconstruct normal images, and anomalies are detected from reconstruction errors. Early GAN- and autoencoder-based methods improve this principle through stronger normal-pattern modeling and feature localization~\citep{alzarooni2025anomaly}. More recent reconstruction-based approaches further strengthen this paradigm with powerful visual backbones, including Dino-based~\citep{caron2021emerging} feature modeling in Dinomaly~\citep{guo2025dinomaly}, Mamba-based~\cite{mamba} sequence/state-space modeling in MambaAD~\citep{he2024mambaad}, and ViT-based~\citep{dosovitskiyimage} reconstruction in ViTAD~\citep{vitad}.

More recently, diffusion and score-based models have been used for anomaly detection. DiffusionAD~\citep{zhang2025diffusionad} formulates anomaly detection as one-step noise-to-normal reconstruction and reports strong results. TransFusion~\citep{fuvcka2024transfusion} introduces transparency-guided diffusion for defect localization, while HDM~\citep{weng2025hdm} unifies diffusion-based generation and discrimination in a single framework. To reduce inference latency, SR-DM~\citep{wang2024industrial} adopts an implicit diffusion formulation for efficient industrial deployment.
Beyond direct detection, diffusion models have also been used for defect synthesis and data augmentation. DDPM-MoCo~\citep{he2024ddpm} combines diffusion-based generation with contrastive learning, and~\citep{xu2025trainingfree} explores training-free defect generation with pre-trained diffusion models.

\subsection{Flow-Matching-Based Anomaly Detection}
\label{sec:flow-matching-background}

Flow Matching (FM)~\citep{lipman2023flow} provides a geometric view of generative modeling by learning a time-dependent velocity field that transports a standard Gaussian prior $p_0$ to the data distribution $p_1$. FM learns a velocity field $\flow(x,t)$ whose ODE
\begin{align}
\frac{dx_t}{dt} = \flow(x_t,t), \quad t\in[0,1], \quad \mathcal{L}_{\mathrm{CFM}}(\theta)
=
\E_{x_0,x_1,t}
\left[
\left\|
\flow(x_t,t) - (x_1-x_0)
\right\|_2^2
\right],
\end{align}
pushes samples from $x_0\sim p_0$ toward $x_1\sim p_1$. In conditional flow matching, one samples pairs $(x_0,x_1)\sim(p_0,p_1)$ and defines a tractable affine interpolation path $x_t=(1-t)x_0+t x_1$ with target velocity $x_1-x_0$. The training objective $\mathcal{L}_{\mathrm{CFM}}$ then becomes velocity regression. When trained only on normal images, the learned field describes normal generative dynamics. This makes FM naturally suitable for anomaly detection through vector-field diagnostics.

Existing FM-based anomaly detection methods differ mainly in how they use the learned flow. One direction modifies the flow direction or training objective. WT-Flow~\citep{li2025taming} studies time-reversed flow matching, transporting data toward noise for density estimation, and introduces Worst Transport regularization to address non-invertibility of linear paths in high dimensions. Another direction uses FM as a correction mechanism. Reflect~\citep{beizaee2025reflect} transports anomalous latent codes toward the normal distribution and detects defects by comparing the original image with its corrected version. D-Flow~\citep{ben2024d} searches for a closest generated sample under a given condition, which is powerful but requires solving an expensive test-time inverse problem.

Several recent works also support the use of velocity-based signals for anomaly detection. TCCM~\citep{li2025scalable} detects tabular anomalies by measuring one-step deviation from a contraction path, while It's not a FAD~\citep{vaselli2025notfad} uses simplified velocity-norm metrics for real-time anomaly detection in high-energy physics. From a theoretical perspective, the Energy-Tweedie identity~\citep{leban2025energy} connects path derivatives of energy scores with noisy marginal scores, further suggesting that velocity fields encode information about the underlying data score.

\section{Flow Mismatching Anomaly Detection Method}
\label{sec:method}

\textbf{Problem Setup.} We assume access to a training set of images from $p_{\text{data}}$, representing the distribution of normal (anomaly-free) samples. A conditional flow matching model $\flow(x, t)$ is trained on the normal data, learning a velocity field of the dynamics from a standard Gaussian prior $\mathcal{N}(0, I)$ to $p_{\text{data}}$. At test time, given a possibly anomalous target image $y$, our goal is to produce a pixel-wise anomaly heatmap $H$ for anomaly localization and an image-level anomaly score $S$.

\textbf{Single path velocity discrepancy.} Given the target image $y$ to be evaluated and a random $x_0 \sim \mathcal{N}(0, I)$ from the Gaussian prior, we define a deterministic interpolation path $x_t$ and the \emph{geometric velocity} $g$ (conditioned on the target image $y$) as the time derivative:
\begin{align}
    x_t = (1-t)x_0 + t y, \quad t \in [0, 1], \quad g(x_0, y, t) = y - x_0 = \frac{d}{dt}x_t.
\end{align}

When $y$ contains anomalies, at any intermediate point $x_t$ along the path, the geometric velocity $g$ attempts to reproduce the anomaly in $y$, while the learned flow $\flow(x_t, t)$ seeks to generate normal features. Thus, in anomalous regions, the two velocities disagree significantly, while in normal regions they align well. This key insight is visualized in Figure~\ref{fig:velocity_mismatch}, which shows how the velocity mismatch distinguishes normal from abnormal samples. The per-pixel squared discrepancy $ \Vert \flow(x_t, t)(i) - g(x_0, y, t)(i) \Vert_2^2 $  naturally serves as an indicator of anomaly at pixel $i$. The $\ell_2$ norm is taken over the channel dimension (each velocity vector at pixel $i$ is 3-dimensional due to RGB channel). We use squared norms to maintain consistency with our theoretical analysis (see Section~\ref{sec:theory}).

\textbf{Aggregation of paths.} Because $\flow(x_t, t)$ depends on the stochastic $x_0$, an individual path may introduce variability or spurious responses. For robustness, we sample $K$ independent Gaussian seeds, $\X = \{x^1_0, x^2_0, \cdots, x^K_0\}, x^k_0 \sim \mathcal{N}(0, I).$ For each seed $k \in [K]$, we define a deterministic path $x^k_t = (1-t)x^k_0 + t y$ with geometric velocity $g^k = y - x^k_0$. For each pixel $p$ and time $t$, we compute per-path squared discrepancies and aggregate the $K$ paths via a \emph{path aggregator} $\mathcal{A}$:
\begin{align}
    Z_t^{(k)}(i) = \|\flow(x^k_t, t)(i) - g^k(i)\|_2^2, \quad\delta_t(i) = \mathcal{A}\big(\{Z_t^{(k)}(i)\}_{k \in [K]}\big).
    \label{eq:path_agg}
\end{align}
Common choices include: (1) \textit{Minimum} $\delta_t^{\min}(i) = \min_{k \in [K]} Z_t^{(k)}(i)$, which acts as consensus voting so that only pixels consistently anomalous across paths retain high scores, reducing false positives (see Remark~\ref{rem:mc_guarantees}). (2) \textit{Average} $\delta_t^{\mathrm{avg}}(i) = \frac{1}{K}\sum_{k=1}^K Z_t^{(k)}(i)$, which is unbiased for $\mathbb{E}[Z_t(i)]$. (3) $\alpha$-\textit{percentile} $\delta_t^{(\alpha)}(i) = Z_{t,(\lceil \alpha K\rceil)}(i)$ (the $\lceil \alpha K\rceil$-th order statistic), which interpolates between minimum and average and offers exponential tail control. Our theory (Section~\ref{sec:theory}) analyzes the velocity discrepancy term under quantile operators and provides statistical guarantees for the aggregators.

\textbf{Time discretization and weighting.} We discretize the time interval $(0, 1)$ using $T$ evenly spaced interior times $\mathcal{T}=\left\{\frac{j}{T+1}: j=1, \ldots, T\right\}$, which excludes the endpoints $t=0$ and $t=1$ for numerical stability. At $t=0$, the predicted flow velocity has no connection to $y$ since $x_0$ is independent of $y$. At $t=1$, the path collapses to $x_t=y$ while the affine-path velocity target still depends on the unobserved seed $x_0$, and score identities become ill-conditioned; we therefore omit $t=1$ in implementation (Appendix~\ref{sec:appendix-late-time-endpoint} explains how this numerical degeneracy is distinct from the population theory, where the weighted denoising contribution stays bounded as $t\uparrow 1$).

The final pixel-level \textit{anomaly heatmap} and image-level \textit{anomaly score} are computed as:
\begin{align}
    H(i) = \frac{1}{|\mathcal{T}|}\sum_{t \in \mathcal{T}} w(t) \delta_t(i), \quad S = \max_{i\in\Omega}H(i) + \frac{1}{|\Omega_{\text{topn}}|}\sum_{i \in \Omega_{\text{topn}}} H(i),
    \label{eq:heatmap}
\end{align}
where $w(t) = t^2$ is a time-dependent weight function. Proposition~\ref{prop:time-weight} shows that the choice of $w(t)$ arises naturally from variance-weighted loss in $y$-space, which translates to $t^2$ on velocity residuals after accounting for the $(1-t)^2$ scaling factor.   $\Omega$ denotes the pixel domain, and $\Omega_{\text{topn}}$ denotes the set of pixels corresponding to the top $1\%$ of values in the heatmap $H$.

\textbf{Adjustable test-time compute: path/time scaling.} Flow Mismatching exposes an explicit test-time compute knob: $K$ controls path-wise robustness and $T$ controls multi-noise-scale evidence. Unlike reconstruction or inverse-optimization methods, increasing compute only requires additional forward passes, which are parallelizable across paths. This gives an flexible detector: low $K,T$ for high-throughput processing and larger $K,T$ for accuracy-critical inspection. The computational complexity is $O(KT)$ evaluations of $\flow$ per image. Reasons for aggregating over multiple paths and time steps are discussed in Section~\ref{sec:theory}. In short, scaling up $K$ and $T$ improves statistical reliability of the velocity mismatch signal. Appendix~\ref{app:toy_budget} provides a controlled half-moon toy setup and visualizes how increasing \(T\) and \(K\) improves stability and smoothness of the anomaly score maps. Figure~\ref{fig:budget_tradeoff} shows the trade-off curves of performance metrics and FPS.

Empirical ablation confirms that accuracy improves with increased test-time compute, while modest $K,T$ still yield reasonable results.

\begin{figure}
    \centering
    \includegraphics[width=1.0\linewidth]{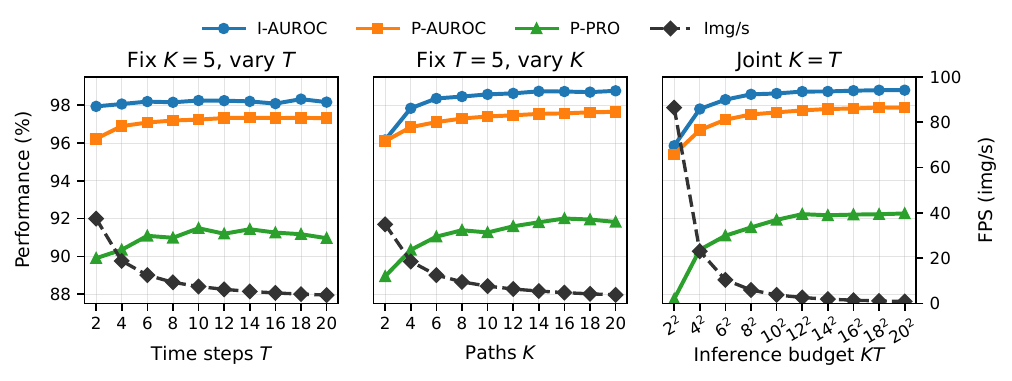}
    \caption{Accuracy--speed trade-offs under varying numbers of paths $K$ and time steps $T$.}
    \label{fig:budget_tradeoff}
\end{figure}

\section{Theory: Why Flow Mismatching Works}
\label{sec:theory}

We now provide a rigorous theoretical foundation for Flow Mismatching. Our analysis establishes formal guarantees on how velocity mismatches distinguish normal from anomalous samples. The goal is to understand the population-level anomaly score that underlies our method. The population score is defined as:
\begin{equation}
\mathcal{S}(y):=\int_0^1 w(t) \,
\mathbb{E}_{x_0}\!\left[\left\|\flow\left(x_t, t\right)-\left(y-x_0\right)\right\|_2^2\right]dt,
\label{eq:population_score_continuous}
\end{equation}
where $x_t=(1-t) x_0+t y$, $x_0 \sim \mathcal{N}(0,I)$, and $w(t)$ is a time-dependent weight function. This is the continuous-time analogue of the discrete heatmap $H(i)$ defined in Equation~\eqref{eq:heatmap} from Section~\ref{sec:method}, both aggregate squared velocity residuals $\|\flow(x_t,t)-(y-x_0)\|_2^2$. Our main population guarantee is Theorem~\ref{thm:weighted-velocity-decomp}, which corresponds to this weighted mismatch and relates it to denoising error and Fisher divergence via Proposition~\ref{prop:y-velocity-connection}.

\subsection{Oracle marginal velocity and learned flow velocity}

Fix $t \in (0,1)$ and consider $x_0 \sim \mathcal{N}(0,I)$ independent of $y \sim p$, where $p$ denotes normal data distribution, with path $x_t^{(0)} := (1-t)x_0 + t y$. Define the \emph{oracle marginal velocity} under $p$:
\[
v_p(t,x) := \mathbb{E}_p[y-x_0 \mid x_t^{(0)}=x],
\]
which represents the expected velocity field that would transport from the prior toward normal data. As established in Section~\ref{sec:flow-matching-background}, conditional flow matching (CFM) regresses this conditional expectation by minimizing $\mathcal{L}_{\text{CFM}}(\theta) = \mathbb{E}_{x_0, y, t}[\|\flow(x_t, t) - (y-x_0)\|_2^2]$ over training pairs $(x_0, y) \sim (p_0, p)$.

Guaranteed by~\citep{zhou2025error}, when trained well on normal data, the learned flow velocity satisfies $\flow(x,t) \approx v_p(t,x)$, approximating the oracle marginal velocity  along normal paths within small enough approximation error. Thus, our anomaly signal can be viewed as the mismatch between the oracle marginal velocity  and the test-conditioned geometric velocity.

The following identity further shows that the oracle velocity residual $\|v_p(t,x)-(y-x_0)\|^2$ directly measures the deviation from the normal posterior mean, i.e. the residual $\Vert y-\mu_p(x_t) \Vert$:

\begin{proposition}[Deterministic link between $y$-residual and velocity residual]\label{prop:y-velocity-connection}
Assume $x_t^{(0)}=(1-t)x_0+t y$. Define $\mu_p(x):=\mathbb{E}_p[y\mid x_t^{(0)}=x]$. Then for $x=x_t^{(0)}$,
$y-\mu_p(x)=(1-t)\left((y-x_0)-v_p(t,x)\right)$,
and hence $\|y-\mu_p(x)\|^2=(1-t)^2\|v_p(t,x)-(y-x_0)\|^2$.
\end{proposition}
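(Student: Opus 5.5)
The plan is to express both conditional expectations through the single random vector $y$, using the fact that on the affine path the seed is a deterministic function of the current position and the endpoint. Fix $t\in(0,1)$ and write $x=x_t^{(0)}=(1-t)x_0+ty$. Since $1-t\neq 0$, the path equation inverts as
\[
x_0=\frac{x-ty}{1-t},
\qquad\text{hence}\qquad
y-x_0=\frac{(1-t)y-x+ty}{1-t}=\frac{y-x}{1-t}.
\]
This is the key observation: conditionally on $x_t^{(0)}=x$, the geometric velocity is an affine function of $y$ with coefficients depending only on $(x,t)$.

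Next, take the conditional expectation given $x_t^{(0)}=x$ under the joint law of $(x_0,y)$, with $x_0\sim\mathcal{N}(0,I)$ independent of $y\sim p$. By linearity, and because $x$ is measurable with respect to the conditioning,
\[
v_p(t,x)=\mathbb{E}_p\!\left[\frac{y-x}{1-t}\,\middle|\,x_t^{(0)}=x\right]=\frac{\mu_p(x)-x}{1-t}.
\]
Subtracting this from the pathwise identity for $y-x_0$ gives
\[
(y-x_0)-v_p(t,x)=\frac{(y-x)-(\mu_p(x)-x)}{1-t}=\frac{y-\mu_p(x)}{1-t}.
\]
Multiplying through by $(1-t)$ yields the first claimed identity. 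Taking squared Euclidean norms then gives $\|y-\mu_p(x)\|^2=(1-t)^2\|v_p(t,x)-(y-x_0)\|^2$, since the norm is symmetric under sign change.

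The only point needing care is well-definedness. I would note that $\mathbb{E}\|y\|<\infty$ is needed so that $\mu_p$ exists; then $v_p$ exists too, because $y-x_0$ is integrable given Gaussian $x_0$. The identities are understood pointwise for a fixed version of the conditional expectations, or almost surely in $x$. Because $x_t^{(0)}$ has a smooth, positive Gaussian-convolved density for $t<1$, versions can be chosen so that the identity holds for every $x$. I do not expect a genuine obstacle: the statement is an algebraic consequence of the inversion $x_0=(x-ty)/(1-t)$, and the restriction $t<1$ is exactly what makes that inversion legitimate.
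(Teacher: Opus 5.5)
Your proof is correct and takes essentially the paper's route: both rest on the deterministic identity $y-x_t=(1-t)(y-x_0)$ and linearity of conditional expectation. The only difference is direction: the paper solves for $\mu_p(x)=x+(1-t)v_p(t,x)$, you solve for $v_p(t,x)=(\mu_p(x)-x)/(1-t)$, and your division by $1-t$ is harmless since $t\in(0,1)$.
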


\begin{proof}
See Appendix~\ref{sec:appendix-prop-y-velocity}.
\end{proof}

\subsection{Population mismatch decomposition}
\label{sec:main-decomp}

We first state an affine-path lemma used below, and then state our population decomposition for the \emph{weighted velocity mismatch}, the detector-facing quantity aligned with the CFM training target and with the heatmap after inverse-variance weighting $w(t)=t^2$ (Section~\ref{sec:time-weighting}).
The proof route passes through the endpoint residual $R_t:=\|Y-\mu_p(X_t)\|_2$, whose $\mathcal{L}^2$ decomposition is recorded as the endpoint-space identity in Appendix~\ref{app:endpoint-decomp} by using Lemma~\ref{lem:noise-tweedie}. After that, Proposition~\ref{prop:y-velocity-connection} converts $\|Y-\mu_p(X_t)\|_2$ into the velocity residual $\|v_p(t,X_t)-(Y-X_0)\|_2$ used at inference. Theorem~\ref{thm:weighted-velocity-decomp} is the implementation-aligned weighted velocity decomposition.

\begin{lemma}[Tweedie]\label{lem:noise-tweedie}
Under Assumption~\ref{ass:A} in Appendix, fix $t\in(0,1)$ and let $\mu_r(x):=\mathbb{E}_r[y\mid x_t=x]$ and $s_{r_t}(x):=\nabla\log r_t(x)$ for $r\in\{p,q\}$, where $p,q$ denote the normal and abnormal image laws for $y$ (with $x_t=(1-t)x_0+ty$ and $x_0\sim\mathcal{N}(0,I)$ independent of $y$). The posterior mean satisfies $\mu_r(x) = \frac{1}{t}x + \frac{(1-t)^2}{t}\, s_{r_t}(x)$, and consequently
\begin{equation}
\mu_q(x)-\mu_p(x)=\frac{(1-t)^2}{t}\bigl(s_{q_t}(x)-s_{p_t}(x)\bigr).
\end{equation}
\end{lemma}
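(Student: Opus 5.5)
The plan is to derive the standard Tweedie formula for the Gaussian convolution that defines $r_t$, then rescale it to the affine-path parameterization. Fix $t\in(0,1)$ and $r\in\{p,q\}$. Conditionally on $y$, the point $x_t=(1-t)x_0+ty$ is Gaussian, $x_t\mid y\sim\mathcal{N}(ty,(1-t)^2I)$. Hence the marginal density is
\[
r_t(x)=\int \varphi_{\sigma_t}(x-ty)\,r(dy),\qquad \sigma_t:=1-t,
\]
where $\varphi_\sigma$ is the $\mathcal{N}(0,\sigma^2 I)$ density. I would use Assumption~\ref{ass:A} only to guarantee three things: $r_t$ is positive and $C^1$, and differentiation under the integral is permitted. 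This holds as long as $r$ has a finite first moment, since the Gaussian kernel and its gradient are dominated uniformly on compacts. I expect this justification to be the only real obstacle; if the assumption is weaker than I hope, I would localize to compact sets of $x$ and use dominated convergence with the bound $|x-ty|\varphi_{\sigma_t}(x-ty)\lesssim (1+|y|)$.

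Next I would differentiate. Since $\nabla_x\varphi_{\sigma_t}(x-ty)=-\sigma_t^{-2}(x-ty)\,\varphi_{\sigma_t}(x-ty)$, we get
\[
\nabla r_t(x)=-\frac{1}{(1-t)^2}\int (x-ty)\,\varphi_{\sigma_t}(x-ty)\,r(dy).
\]
Dividing by $r_t(x)$ and recognizing the posterior $r(dy\mid x_t=x)\propto\varphi_{\sigma_t}(x-ty)\,r(dy)$ (Bayes' rule) gives
\[
s_{r_t}(x)=-\frac{1}{(1-t)^2}\bigl(x-t\,\mu_r(x)\bigr).
\]
Solving for $\mu_r$ yields $t\,\mu_r(x)=x+(1-t)^2 s_{r_t}(x)$. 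Dividing by $t>0$ gives exactly the stated formula $\mu_r(x)=\frac1t x+\frac{(1-t)^2}{t}s_{r_t}(x)$.

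Finally, the difference identity is immediate. Apply the formula for $r=p$ and for $r=q$ at the same point $x$; the common term $\frac1t x$ cancels, leaving $\mu_q(x)-\mu_p(x)=\frac{(1-t)^2}{t}\bigl(s_{q_t}(x)-s_{p_t}(x)\bigr)$. Here it matters that both laws share the same noise $x_0\sim\mathcal{N}(0,I)$ and the same affine schedule, so the kernel $\varphi_{\sigma_t}(x-ty)$ is identical for both. I would also note that the identity requires $t\in(0,1)$ strictly: at $t=0$ the division fails, and at $t=1$ the kernel degenerates.
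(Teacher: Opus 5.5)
Your proposal is correct and follows essentially the same route as the paper: write $r_t$ as the Gaussian convolution with kernel $\varphi_{1-t}(x-ty)$, differentiate under the integral, identify the posterior mean by Bayes' rule, solve for $\mu_r$, and subtract the $p$ and $q$ cases so that the $\frac{1}{t}x$ term cancels. Your justification of differentiating under the integral is more careful than the paper's, and that step needs no finite-first-moment hypothesis: $\sup_u |u|\varphi_\sigma(u)<\infty$ already gives a dominating constant that is uniform in $x$ and $y$ and integrable against the probability measure $r$.
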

\begin{proof}
See Appendix~\ref{sec:appendix-lem-noise-tweedie}.
\end{proof}

\begin{theorem}[Population decomposition of the weighted velocity mismatch]
\label{thm:weighted-velocity-decomp}
Assume Assumption~\ref{ass:A}. Fix $t\in(0,1)$ and assume the path marginals
$p_t,q_t$ have smooth positive densities. Let
\[
v_p(t,x):=\mathbb E_p[Y-X_0\mid X_t=x],
\qquad
D_q(t):=\mathbb E_q\|Y-\mu_q(X_t)\|_2^2,
\]
where $\mu_q(x):=\mathbb E_q[Y\mid X_t=x]$.
If $J(q_t\|p_t):=\mathbb E_{X\sim q_t}\big[\|s_{q_t}(X)-s_{p_t}(X)\|_2^2\big]<\infty$, then
\begin{equation}
t^2\,
\mathbb E_q\!\left[
\|v_p(t,X_t)-(Y-X_0)\|_2^2
\right]
=
\underbrace{\frac{t^2}{(1-t)^2}\,D_q(t)}_{\text{ weighted denoising error}}
+
\underbrace{(1-t)^2J(q_t\|p_t)}_{\text{Fisher divergence}}.
\label{eq:weighted_velocity_main}
\end{equation}
Moreover, $\frac{t^2}{(1-t)^2}D_q(t)\le d$ for residual dimension $d$, uniformly on $(0,1)$, in particular the weighted denoising term stays bounded as $t\to 1$.
\end{theorem}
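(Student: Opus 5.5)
Apply Proposition~\ref{prop:y-velocity-connection} pointwise to turn the velocity residual into an endpoint residual, then split that endpoint residual orthogonally around the $q$-posterior mean. Lemma~\ref{lem:noise-tweedie} will convert the resulting mean gap into a score gap.

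\textbf{Step 1: reduce to the endpoint residual.} Proposition~\ref{prop:y-velocity-connection} is a deterministic identity for any realization $(X_0,Y)$ with $X_t=(1-t)X_0+tY$. It does not require $Y\sim p$, since $\mu_p$ and $v_p$ are just fixed functions of $x$. It therefore holds almost surely under $q$ and gives
\[
\|v_p(t,X_t)-(Y-X_0)\|_2^2=(1-t)^{-2}\|Y-\mu_p(X_t)\|_2^2 .
\]
Multiplying by $t^2$ and taking $\mathbb E_q$ reduces the claim to computing $\frac{t^2}{(1-t)^2}\mathbb E_q\|Y-\mu_p(X_t)\|_2^2$.

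\textbf{Step 2: orthogonal split.} Write $Y-\mu_p(X_t)=(Y-\mu_q(X_t))+(\mu_q(X_t)-\mu_p(X_t))$. The second piece is $\sigma(X_t)$-measurable. The first has zero conditional mean given $X_t$ under $q$, because $\mu_q(X_t)=\mathbb E_q[Y\mid X_t]$. The cross term therefore vanishes by the tower property, provided both pieces are in $L^2(q)$:
\begin{itemize}
\item The first piece is in $L^2(q)$ under Assumption~\ref{ass:A}, which I take to include finite second moments of $q$.
\item The second piece is in $L^2(q)$ by the assumption $J(q_t\|p_t)<\infty$ together with Lemma~\ref{lem:noise-tweedie}.
\end{itemize}
This gives $\mathbb E_q\|Y-\mu_p\|^2=D_q(t)+\mathbb E_q\|\mu_q-\mu_p\|^2$.

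\textbf{Step 3: convert the mean gap to a score gap.} Lemma~\ref{lem:noise-tweedie} gives $\mu_q-\mu_p=\frac{(1-t)^2}{t}(s_{q_t}-s_{p_t})$. Hence
\[
\mathbb E_q\|\mu_q-\mu_p\|^2=\frac{(1-t)^4}{t^2}\,J(q_t\|p_t),
\]
and multiplying by $\frac{t^2}{(1-t)^2}$ produces exactly $(1-t)^2J(q_t\|p_t)$. This establishes \eqref{eq:weighted_velocity_main}.

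\textbf{Step 4: the uniform bound.} This is the main obstacle, because $D_q(t)$ must decay like $(1-t)^2$ as $t\to1$. I will compare $\mu_q$ with a suboptimal estimator of $Y$. Since $\mu_q(X_t)$ is the $L^2$-optimal $\sigma(X_t)$-measurable estimator of $Y$, it beats $X_t/t$. Now $Y-X_t/t=-\frac{1-t}{t}X_0$, so
\[
D_q(t)\le\frac{(1-t)^2}{t^2}\,\mathbb E\|X_0\|^2=\frac{(1-t)^2}{t^2}\,d .
\]
This gives $\frac{t^2}{(1-t)^2}D_q(t)\le d$ for every $t\in(0,1)$, including the limit $t\to1$. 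Here $d$ is the dimension of the residual. For a per-pixel residual, the same argument applies coordinatewise and $d$ is the number of channels.

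The remaining technical points are integrability and the legitimacy of Tweedie's formula for $q$. I would rely on Assumption~\ref{ass:A} and the smooth positive densities hypothesis for differentiating under the integral.
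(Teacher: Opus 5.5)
Your proposal is correct and follows the paper's proof essentially step for step. You use Proposition~\ref{prop:y-velocity-connection} to pass to the endpoint residual, then the orthogonal split around $\mu_q(X_t)$ with Lemma~\ref{lem:noise-tweedie} turning the mean gap into $\frac{(1-t)^4}{t^2}J(q_t\|p_t)$ (which the paper packages as Proposition~\ref{prop:endpoint-L2-decomp}), and finally the comparison of $\mu_q(X_t)$ with the suboptimal estimator $X_t/t$ for the uniform bound; your extra bookkeeping (that the proposition is a pointwise identity valid under $q$, and that both pieces lie in $L^2(q)$ so the cross term vanishes) makes explicit what the paper leaves implicit.
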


\begin{proof}
See Appendix~\ref{sec:appendix-proof-weighted-velocity}.
\end{proof}

Equation~\eqref{eq:weighted_velocity_main} shows how the velocity mismatch term works as an anomaly detection signal. Under normal data distribution $y \sim p$, the mismatch is governed only by a denoising error term (irreducible uncertainty). Under anomaly distribution $y \sim q$, the Fisher term appears, which quantifies discrepancy between the test-conditioned path marginal $q_t$ and the normal marginal $p_t$. This Fisher term is the separation signal exploited by Flow Mismatching.

For localization, the same reasoning applies coordinate-wise. At each pixel, the squared oracle velocity mismatch $\|v_p(t,x)-(y-x_0)\|^2$ also decomposes into denoising error versus score mismatch as in Theorem~\ref{thm:weighted-velocity-decomp}.
For RGB images, we average squared residuals over the three color channels, yielding a scalar anomaly score per pixel in the heatmap.

\subsection{Time weighting: information-aware aggregation and endpoint stability}
\label{sec:time-weighting}

Along the affine path, the time $t$ controls the effective noise level at which the model is queried.

We define the anomaly heatmap by aggregating velocity mismatches over time along the path, thereby obtaining a multi-scale score. Because each time $t$ provide different amounts of information and estimator stability, time weighting is necessary to balance their contributions.

Different $t$ corresponds to different noise levels, and variance scales differ across $t$. We therefore weight by inverse variance weighting in endpoint estimating error in $y$-space, yielding:

\begin{proposition}[Time weight from inverse-variance weighting]\label{prop:time-weight}
For the linear path $x_t=(1-t)x_0+t y$ with $x_0\sim\mathcal{N}(0,I)$, inverse-variance weighting of endpoint errors yields $w(t)=t^2$.
\end{proposition}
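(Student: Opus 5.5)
The plan is to make "inverse-variance weighting of endpoint errors" precise and then convert it to velocity space with Proposition~\ref{prop:y-velocity-connection}. Everything is expressed as an estimate of the endpoint $y$ built from the path state $x_t$.

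\textbf{Step 1: the endpoint estimator and its variance.} Inverting the affine path gives the natural (Tweedie-free) endpoint estimator
\[
\hat y_t := \frac{x_t}{t} = y + \frac{1-t}{t}\,x_0 .
\]
Its error $\hat y_t - y = \frac{1-t}{t}x_0$ is Gaussian with per-coordinate variance
\[
\sigma^2(t)=\frac{(1-t)^2}{t^2}.
\]
This is exactly the effective noise level that also appears in Lemma~\ref{lem:noise-tweedie}, since $\mu_r(x)=x/t+\frac{(1-t)^2}{t}s_{r_t}(x)$ has an inner Gaussian channel $x/t = y + \frac{1-t}{t}x_0$. Inverse-variance weighting of endpoint errors at time $t$ therefore assigns the $y$-space weight
\[
\lambda(t)=\frac{1}{\sigma^2(t)}=\frac{t^2}{(1-t)^2}
\]
to the squared endpoint residual $\|y-\mu_p(x_t)\|^2$. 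In words: times at which $x_t$ carries more precise information about $y$ receive proportionally more weight.

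\textbf{Step 2: transfer to velocity space.} Proposition~\ref{prop:y-velocity-connection} gives
\[
\|y-\mu_p(x_t)\|^2=(1-t)^2\|v_p(t,x_t)-(y-x_0)\|^2 .
\]
Hence
\[
\lambda(t)\,\|y-\mu_p(x_t)\|^2=\frac{t^2}{(1-t)^2}\,(1-t)^2\,\|v_p-(y-x_0)\|^2=t^2\,\|v_p-(y-x_0)\|^2 .
\]
So the effective weight on the velocity residual is $w(t)=t^2$, which matches the left-hand side of Theorem~\ref{thm:weighted-velocity-decomp}. Replacing $v_p$ by $\flow$ gives the weight used in the heatmap \eqref{eq:heatmap}.

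\textbf{Main obstacle.} The delicate point is justifying which variance is meant. One could instead use the posterior variance of $y$ given $x_t$, but that is data-dependent and not available in closed form. I would argue for the noise variance of the unbiased path inversion $x_t/t$ because it depends only on the path and the Gaussian prior, which is what the statement fixes.

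As a sanity check, I would note two consequences:
\begin{itemize}
\item The weight $t^2$ cancels the $1/(1-t)^2$ blow-up of the denoising term, consistent with the bound $\frac{t^2}{(1-t)^2}D_q(t)\le d$ in Theorem~\ref{thm:weighted-velocity-decomp}.
\item The weight vanishes as $t\to0$, where $x_t$ carries no information about $y$.
\end{itemize}
Any constant rescaling of the weight is immaterial, since it does not change the ranking of anomaly scores.
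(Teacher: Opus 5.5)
Your argument is essentially the paper's proof: the same rescaled observation $x_t/t=y+\frac{1-t}{t}x_0$ with variance $(1-t)^2/t^2$, the same $y$-space weight $t^2/(1-t)^2$, and the same $(1-t)^2$ conversion to velocity residuals. The one difference is that the paper converts directly for the learned field through the induced endpoint estimate $\hat y_\theta=x_t+(1-t)\flow(x_t,t)$, which makes your ``replace $v_p$ by $\flow$'' step an exact algebraic identity rather than a substitution into the oracle statement.

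One correction to your first sanity check: as $t\to1$ the factor $t^2\to1$ cancels nothing. It is posterior contraction, $D_q(t)\le d(1-t)^2/t^2$, that absorbs the $1/(1-t)^2$ prefactor, and the $t^2$ weight only turns the resulting bound $d/t^2$ on the unweighted term into the uniform bound $d$.
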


\begin{proof}
See Appendix~\ref{sec:appendix-optimal-weighting} for details.
\end{proof}

Theorem~\ref{thm:weighted-velocity-decomp} already packages this time weighting $w(t)=t^2$ to the corresponding population decomposition. Further, we have the following analysis.

\paragraph{Endpoint behavior.}
Equation~\eqref{eq:weighted_velocity_main} matches the time-weighted oracle analogue of the population score integrand in~\eqref{eq:population_score_continuous}. It shows a time-dependent scale separation.
As $t\to 0$, denoising term contributes $\sim t^2\,\mathbb{E}_q[\|y-\mu_q(x_t)\|^2]$ while $(1-t)^2 J(q_t\|p_t)$ stays bounded if $J(q_t\|p_t)$ is bounded on $(0,1)$.
As $t\to 1$, $(1-t)^2 J(q_t\|p_t)$ vanishes and $\frac{t^2}{(1-t)^2}D_q(t)$ stays bounded in population because $D_q(t)=\mathbb{E}_q\|Y-\mu_q(X_t)\|^2$ contracts as $\mathcal O((1-t)^2/t^2)$, so $\frac{t^2}{(1-t)^2}D_q(t)\le d$ uniformly even though unweighted residuals can blow up in variance.
Appendix~\ref{sec:appendix-late-time-endpoint} proves this contraction and explains excluding $t=1$ numerically (velocity supervision still depends on unobserved $X_0$ once $x_t=y$).
Thus early times emphasize Fisher (score) mismatch and late times emphasize denoising error under~\eqref{eq:weighted_velocity_main}.

\paragraph{Bias-variance trade-off across time.} For the affine path $x_t=(1-t)x_0+t y$ with $x_0\sim\mathcal N(0,I)$, the rescaled state $z_t := x_t/t = y + \tfrac{1-t}{t}x_0$ is a noisy observation of $y$ with conditional variance $\mathrm{Var}(z_t\mid y)=\tfrac{(1-t)^2}{t^2}I$. Thus small $t$ has low signal-to-noise ratio, producing smooth but biased maps that poorly discriminate fine-scale perturbations, while large $t$ gives sharper localization but higher finite-$K$ Monte Carlo variance and more sensitivity to curvature near the data manifold. Aggregating over time with $w(t)=t^2$ therefore balances stable early semantic evidence with sharp late localization, reducing estimator variance while keeping the integral well posed near $t=0$. Figure~\ref{fig:velocity_mismatch} summarizes the above analysis in one panel, and Appendix~\ref{app:toy_timeweight} provides a dedicated toy visualization of per-time maps and the weighted combination.

\begin{figure}[t]
\centering
\includegraphics[width=\linewidth]{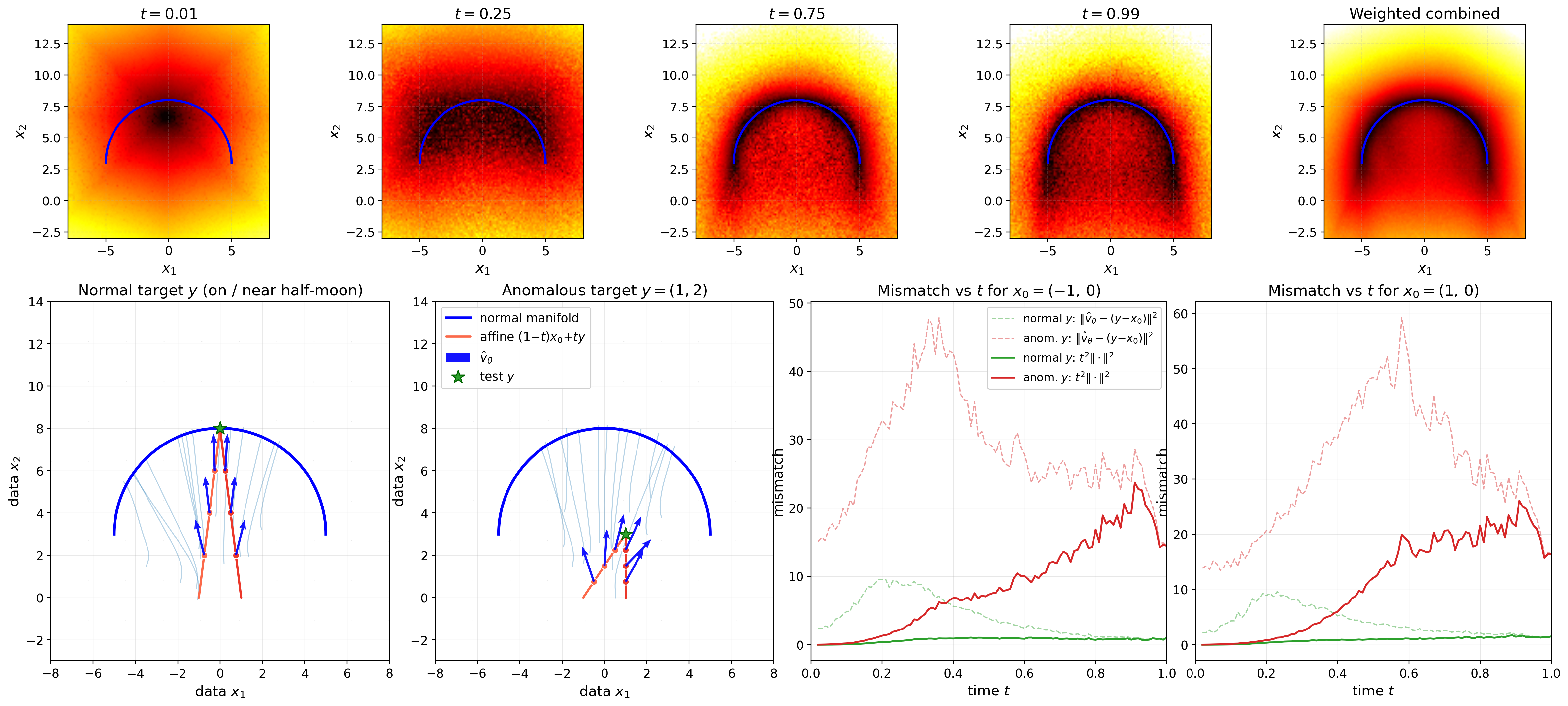}
\caption{\textbf{Illustration of Flow Mismatching.} \emph{Top:} normalized mismatch maps of $\|\flow(x_t,t)-(y-x_0)\|^2$ over $(x_1,x_2)$ at several $t$ and a $t^2$-weighted aggregate (right); the blue arc is the normal support and scores grow off-manifold. Small $t$ is smooth but weakly discriminative; large $t$ tracks geometry better but is noisier; multi-$t$ weighting combines both. \emph{Bottom:} flows and affine paths for on-manifold $y$ vs.\ off-manifold $y{=}(1,2)$, with mismatch vs.\ $t$ for $x_0{\in}\{(-2,0),(2,0)\}$: the learned field stays aligned with the geometric velocity $y-x_0$ for normal targets (low green), but not for anomalies (high red). Dashed: raw squared residual; solid: $t^2$-scaled.}
\label{fig:velocity_mismatch}
\end{figure}

\subsection{Finite-$K$ path aggregation and quantile bridge}
\label{sec:pixel-path}

We now connect finite-$K$ path aggregation to the population signal in Theorem~\ref{thm:weighted-velocity-decomp}. The implementation samples $K$ random source seeds and aggregates per-path mismatch at each pixel.

\textbf{Path aggregation for localization.}
To approximate expectations over $x_0$ for a test image $y$, we sample $K$ independent seeds $x^k_0 \sim \mathcal{N}(0,I)$ and form per-pixel squared mismatches $Z_t^{(k)}(i) = \|\flow(x^k_t,t)(i)-(y-x^k_0)(i)\|_2^2$ with $x^k_t=(1-t)x^k_0+ty$.
We aggregate via $\delta_t(i) = \mathcal{A}(\{Z_t^{(k)}(i)\}_{k=1}^K)$; in particular $\delta_t^{\min}(i)=\min_{k\in[K]}Z_t^{(k)}(i)$ targets a lower tail over seeds (typical quantile order $\approx 1/K$).
Corollary~\ref{cor:min_consensus} formalizes how independence across seeds turns single-path exceedance probabilities into exponential false-positive control under minimum aggregation.

\paragraph{Quantile bridge.}
Minimum (and lower-percentile) path aggregation targets lower tails of mismatch under random seeds $X_0$ for fixed $y$ (Corollary~\ref{cor:min_consensus}), complementing the expectation-level decomposition in Theorem~\ref{thm:weighted-velocity-decomp}.
For the oracle marginal velocity field, fixing $y$ and sampling only $X_0\sim\mathcal{N}(0,I)$ gives an exact correspondence between conditional quantiles of the residual and of the score gap: for any $\alpha\in(0,1)$,
\begin{equation}
Q_\alpha^{X_0}\!\left(\left\|v_p\left(t, X_t\right)-\left(y-X_0\right)\right\|_2 \,\middle|\, y\right)
=
\frac{1-t}{t}\,
Q_\alpha^{X_0}\!\left(\left\|s_{q_{t \mid y}}\left(X_t\right)-s_{p_t}\left(X_t\right)\right\|_2 \,\middle|\, y\right).
\label{eq:oracle_quantile_fixed_y}
\end{equation}
Thus seed-wise quantiles track the score mismatch along the same test paths without a separate ``quantile Tweedie'' step; Appendix~\ref{sec:appendix-learned-quantile} derives~\eqref{eq:oracle_quantile_fixed_y} from the fixed-$y$ pathwise velocity--score identity.

At test time we observe $\flow$ rather than $v_p$.
Let $G_t:=\|s_{q_{t \mid y}}(X_t)-s_{p_t}(X_t)\|_2$ and $U_t:=\frac{t}{1-t}\,\|\flow(X_t,t)-v_p(t,X_t)\|_2$.
Then for $\alpha,\beta\in(0,1)$ with $\alpha+\beta<1$:
\begin{equation}
\mathbb{E}_Y\!\left[Q_{\alpha+\beta}^{X_0}\!\left(\left\|\flow(X_t,t)-(Y-X_0)\right\|_2^2 \,\middle|\, Y\right)\right]
\geq
\frac{(1-t)^2}{t^2}\,
\mathbb{E}_Y\!\left[\left(
Q_\alpha^{X_0}\!\left(G_t\mid Y\right)
-
Q_{1-\beta}^{X_0}\!\left(U_t\mid Y\right)
\right)_{+}^2\right].
\label{eq:quantile_bridge_main}
\end{equation}
Equation~\eqref{eq:quantile_bridge_main} is the learned-field analogue: it averages over $Y\sim q$ and separates an oracle score-gap quantile from an approximation tail via $(\alpha,\beta)$ after squaring (Appendix~\ref{sec:appendix-learned-quantile}).
Thus the mean-level Theorem~\ref{thm:weighted-velocity-decomp} captures expectations over $(y,x_0)$, while~\eqref{eq:oracle_quantile_fixed_y}--\eqref{eq:quantile_bridge_main} formalize seed-wise robustness for a given test image.
Finite-sample quantile arguments are in Appendix~\ref{sec:appendix-statistical-guarantees}.

\begin{remark}[Average versus minimum path aggregation]\label{rem:mc_guarantees}
The average aggregator $\delta_t^{\mathrm{avg}}(i)=\frac{1}{K}\sum_{k=1}^K Z_t^{(k)}(i)$ targets the population mean in Theorem~\ref{thm:weighted-velocity-decomp}. In contrast, the minimum (or lower-percentile) aggregator targets robust lower-tail evidence across seeds, and is intentionally conservative: high anomaly scores persist only when the mismatch is consistently large across sampled paths. This conservative behavior is why our default uses minimum aggregation. While Theorem~\ref{thm:weighted-velocity-decomp} explains the score-gap mechanism, the fixed-$y$ quantile identity and minimum-aggregation corollary justify the robust path aggregation used in the heatmap.
\end{remark}

Combining time aggregation (Section~\ref{sec:time-weighting}) with path aggregation yields our implemented heatmap $H(i) = \frac{1}{|\mathcal{T}|}\sum_{t \in \mathcal{T}} w(t) \delta_t(i)$, where $\delta_t(i) = \min_{k \in [K]} Z_t^{(k)}(i)$ uses the minimum aggregator, as justified in Remark~\ref{rem:mc_guarantees}. This directly implements the theoretical decomposition at the pixel level, and a conservative lower-tail analogue of the population mismatch signal.

\section{Experiments}

\subsection{Setup}

\textbf{Datasets and metrics.} \textbf{MVTec-AD}~\citep{bergmann2019mvtec} contains 15 categories (5 objects, 10 textures) with 3,629 training and 1,725 test images. \textbf{VisA}~\citep{zou2022spot} includes 12 SMD categories with 8,659 training and 2,162 test images. Training sets contain only normal images, while test sets include both normal and anomalous samples. All images are uniformly resized to $128{\times}128$ and normalized corresponding to normal images' mean and standard deviation, without any other augmentation. AUROC, AP, and F1-max are reported for both image-level detection and pixel-level localization. AUPRO is also reported for localization, yielding 7 metrics in total.

\textbf{Implementation details.} We use U-Net with attention ~\citep{lipman2024flowmatchingguidecode} as the flow matching model. Our \textit{default} setting is as follows. A single model with conditional category label is trained per dataset on normal images for 2,000 epochs, with StableAdamW optimizer, AMSGrad, a learning rate of $1e-4$, and batch size of $32$. The basic channel dimension in U-Net is set to $64$. See Appendix~\ref{app:detail} for details. Experiments are conducted on 80GB NVIDIA A100. At inference, we sample $K=10$ paths and use $T=10$ time steps (i.e., $KT=100$ model forward evaluations per image).  The minimum aggregator over paths is applied.

\subsection{Comparison with Reconstruction-based Methods}

Flow Mismatching achieves competitive performances on MVTec and VisA (Table~\ref{tab:main2}), matching or surpassing SOTA reconstruction-based benchmarks including
CFLOW-AD~\citep{gudovskiy2022cflow},
RD4D~\citep{deng2022anomaly},
UniAD~\citep{you2022unified},
SimpleNet~\citep{liu2023simplenet}, DeSTSeg~\citep{zhang2023destseg}, HVQ-Trans~\citep{lu2023hierarchical}, DiAD~\citep{he2024diffusion}, MambaAD~\citep{he2024mambaad}, ViTAD~\citep{vitad}, AMDM~\cite{feng2025omiad}. Efficiency comparison is in Table~\ref{tab:efficiency_comparison}.  Remarkably, our method demonstrates strong pixel-level localization performance, capable of capturing small anomalies accurately. See Figure~\ref{fig:visa},~\ref{fig:mvtec} in Appendix~\ref{app:quality} for visualized results.

\begin{table*}[t]
\centering
\caption{Comparison of reconstruction-based anomaly detection methods on MVTec-AD and VisA.}
\label{tab:main2}
\setlength{\tabcolsep}{4.4pt}
\renewcommand{\arraystretch}{1.15}
\resizebox{\textwidth}{!}{
\begin{tabular}{lcccccccccccccc}
\toprule
\multirow{4}{*}{\textbf{Method}}
& \multicolumn{7}{c}{\textbf{MVTec-AD}}
& \multicolumn{7}{c}{\textbf{VisA}} \\
\cmidrule(lr){2-8}\cmidrule(lr){9-15}
& \multicolumn{3}{c}{Image-level} & \multicolumn{4}{c}{Pixel-level}
& \multicolumn{3}{c}{Image-level} & \multicolumn{4}{c}{Pixel-level} \\
\cmidrule(lr){2-4}\cmidrule(lr){5-8}
\cmidrule(lr){9-11}\cmidrule(lr){12-15}
& AUROC & AP & $F_{1}$-max
& AUROC & AP & $F_{1}$-max & PRO
& AUROC & AP & $F_{1}$-max
& AUROC & AP & $F_{1}$-max & PRO \\
\midrule
RD4AD        & 94.6 & 96.5 & 95.2 & 96.1 & 48.6 & 53.8 & 91.1 & 92.4 & 92.4 & 89.6 & 98.1 & 38.0 & 42.6 & \textbf{91.8} \\
CFLOW-AD   & 92.7 & 97.2 & 94.0 & 95.8 & 46.8 & 49.6 & 89.0 & 87.2 & 89.5 & 85.1 & 97.8 & 34.2 & 37.2 & 87.3 \\
UniAD     & 96.5 & 98.8 & 96.2 & 96.8 & 43.4 & 49.5 & 90.7 & 88.8 & 90.8 & 85.8 & 98.3 & 33.7 & 39.0 & 85.5 \\
SimpleNet    & 95.3 & 98.4 & 95.8 & 96.9 & 45.9 & 49.7 & 86.5 & 87.2 & 87.0 & 81.8 & 96.8 & 34.7 & 37.8 & 81.4 \\
DeSTSeg      & 89.2 & 95.5 & 91.6 & 93.1 & 54.3 & 50.9 & 64.8 & 88.9 & 89.0 & 85.2 & 96.1 & 39.6 & 43.4 & 67.4 \\
HVQ-Trans & 98.0 & 99.5 & 97.5 & 97.3 & 48.2 & 53.3 & 91.4 & 93.2 & 92.8 & 87.6 & 98.7 & 35.0 & 39.6 & 86.3 \\
DiAD       & 97.2 & 99.0 & 96.5 & 96.8 & 52.6 & 55.5 & 90.7 & 86.8 & 88.3 & 85.1 & 96.0 & 26.1 & 33.0 & 75.2 \\
MambaAD   & 98.5 & 99.5 & 97.6 & 97.7 & 56.1 & 58.7 & \textbf{93.6} & 94.3 & 94.5 & 89.4 & 98.5 & 39.4 & 44.0 & 91.0 \\
ViTAD      & 98.3 & 99.4 & 97.3 & 97.7 & 55.3 & 58.7 & 91.4 & 90.5 & 91.7 & 86.3 & 98.2 & 36.6 & 41.1 & 85.1 \\
AMDM  &  98.4 & 99.0 & 97.4 & 97.5 & 51.5 & 56.1 & 92.6 & 94.8 & 95.6 & 91.1 & \textbf{98.8} & 39.8 & 43.5 & 88.4 \\
\midrule
\rowcolor{blue!5} \textbf{Ours} &  \textbf{98.7} & \textbf{99.5} & \textbf{97.7} & \textbf{97.7} & \textbf{67.5} & \textbf{64.8} & 91.9 & \textbf{98.0} & \textbf{98.4} & \textbf{95.2}&  97.5 & \textbf{40.2} & \textbf{45.8} &  90.6\\
\bottomrule
\end{tabular}}
\end{table*}

\subsection{Comparison with Flow Matching-based Methods}

\begin{figure*}[t]
\centering

\begin{minipage}[t]{0.49\textwidth}
  \vspace{0pt}
  \centering
  \captionof{table}{Comparison of flow matching based anomaly detection methods on MVTec-AD.}
  \setlength{\tabcolsep}{3.2pt}
  \renewcommand{\arraystretch}{1.08}
  \resizebox{\linewidth}{!}{
  \begin{tabular}{lccccccc}
  \toprule
  \multirow{2}{*}{\textbf{Method}}
  & \multicolumn{3}{c}{\textbf{Image-level}}
  & \multicolumn{4}{c}{\textbf{Pixel-level}} \\
  \cmidrule(lr){2-4}\cmidrule(lr){5-8}
  & AUROC & AP & $F_{1}$-max
  & AUROC & AP & $F_{1}$-max & PRO \\
  \midrule
  \multicolumn{8}{c}{\textbf{Training-Modified FM}} \\
  \midrule
  Reflect & 66.4 & 86.3 & 84.6 & 60.2 & 19.5 & 24.9 & 31.6 \\
  TCCM & 53.6 & 77.7 & 85.1 & 73.0 & 23.1 & 28.8 & 49.9 \\
  WT-Flow & 85.0 & 92.3 & 90.7 & 85.4 & 46.5 & 50.6 & 70.2 \\
  \midrule
  \multicolumn{8}{c}{\textbf{Inference-Only FM}} \\
  \midrule
  D-Flow & 75.3 & 88.1 & 88.3 & 87.1 & 37.6 & 41.8 & 62.3 \\
  ReconFlow & 91.8 & 97.2 & 94.3 & 92.4 & 53.8 & 56.1 & 72.7 \\
  \rowcolor{blue!5}
  \textbf{Ours} &  \textbf{98.7} & \textbf{99.5} & \textbf{97.7} & \textbf{97.7} & \textbf{67.5} & \textbf{64.8} & \textbf{91.9} \\
  \bottomrule
  \end{tabular}}
  \label{tab:main1}
\end{minipage}
\hfill
\begin{minipage}[t]{0.49\textwidth}
  \vspace{0pt}
  \centering
  \includegraphics[width=\linewidth]{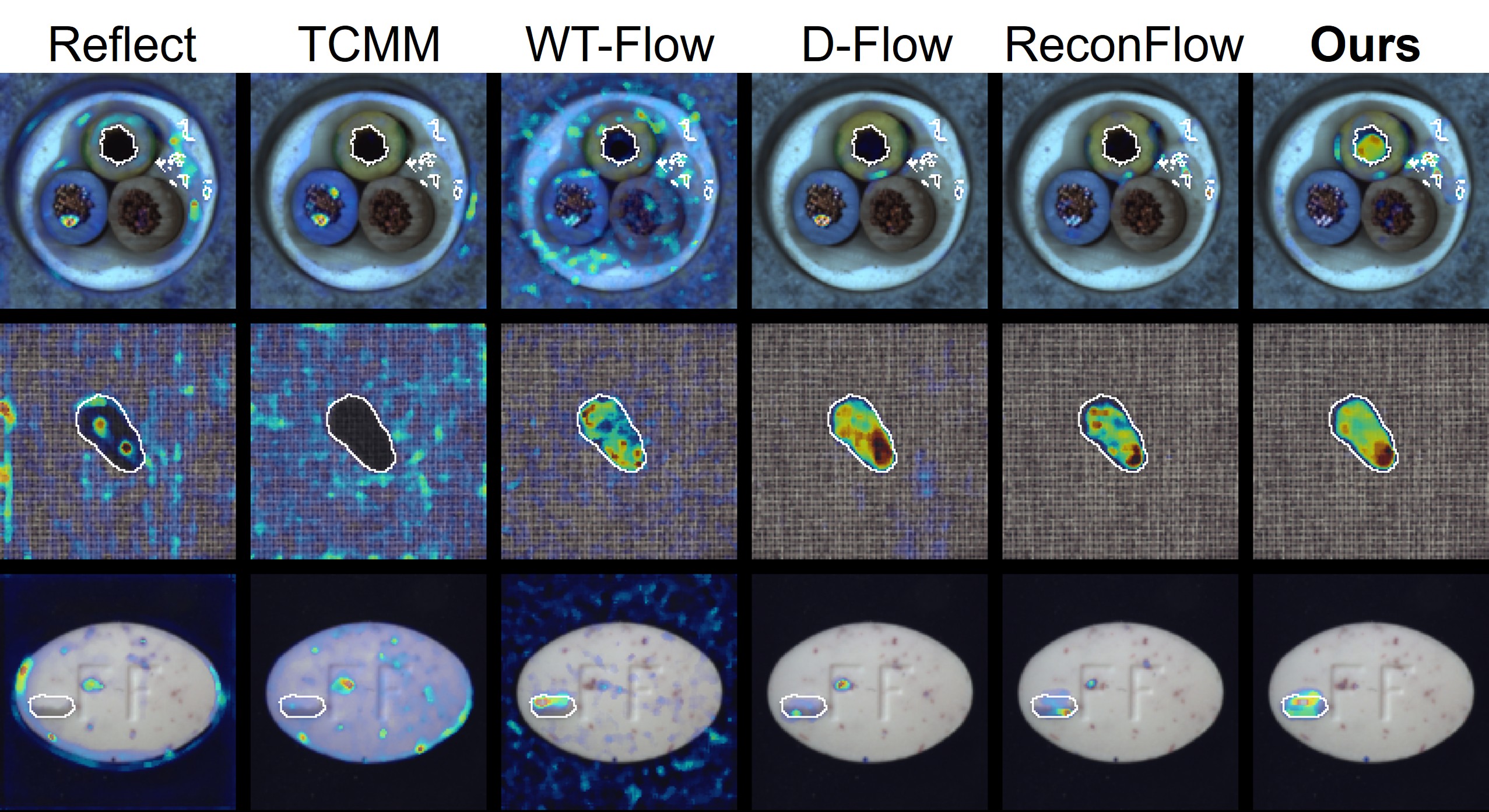}
  \captionof{figure}{Visualization of FM-based heatmaps. Ground truths are shown by white lines.}
  \label{fig:flowcompare}
\end{minipage}
\end{figure*}

Recent works have leveraged flow matching for anomaly detection. We implement and compare training-modified methods that apply different FM training objectives, including Reflect~\citep{beizaee2025reflect}, TCCM~\citep{li2025scalable}, and WT-Flow~\citep{li2025taming}, and inference-only methods that rely on test-time detection, including D-Flow~\citep{ben2024d} accelerated by FlowGrad~\cite{liu2023flowgrad}, and ReconFlow, a reconstruction-style counterpart of our method. For controlled comparison, we adapt all FM baselines to the same image-domain U-Net backbone and identical training/inference protocol. Therefore these numbers should be interpreted as same-backbone controlled comparisons rather than reproductions of each method's best feature-domain setting; see Appendix~\ref{app:detail} for implementation details.
Overall, Flow Mismatching yields the leading best results. See visualization results in Figure~\ref{fig:flowcompare} and detailed per-class comparison in Appendix~\ref{app:table}.

\subsection{Ablation Study}

\textbf{Test-time compute trade-offs.}
We analyze how the number of paths $K$ and time steps $T$ affects model performance and throughput on MVTec. Detailed results are listed in Table~\ref{tab:mvtec_budget_ablation_full}, and the trade-off curves are shown in Figure~\ref{fig:budget_tradeoff}. Larger $K,T$ improves performance monotonically, and even the smallest setting we consider ($K{=}T{=}2$, FPS${=}86.4$) surpasses a few reconstruction based SOTAs and all the flow matching based methods, yielding reasonable and acceptable results. The qualitative mechanism behind this trend is consistent with the controlled toy visualization in Appendix~\ref{app:toy_budget}.

\textbf{Path aggregator.}
By default, we aggregate the $K$ paths using the minimum aggregator, which is effective in practice and consistent with the quantile-based analysis in Section~\ref{sec:theory}: minimum aggregation suppresses path specific spurious responses and reduces false positives. We also compare alternative aggregators, including the average and the 10th, 20th, and 30th percentiles, in Table~\ref{tab:path_aggregator_ablation}.

\textbf{Condition label vs. pooled categories.} In the default setting we applied a single flow matching model with conditional category label to handle all the categories. We also trained an unconditional model, and figured out the method also works for pooled categories case. Under this setting, our method still achieves the best pixel-level AP/F1-max on both datasets, as well as the best image-level AUROC/AP/F1-max on VisA, and surpasses a few SOTAs. Detailed per-class comparison is in Table~\ref{tab:ablation_multiclass} in Appendix~\ref{app:table}. A toy example reasoning for the pooled category setting is in Appendix~\ref{app:toy}.

\newpage
\section{References}
\renewcommand{\bibsection}{}
\bibliographystyle{plainnat}
\bibliography{main}

\newpage
\appendix

\section{Proofs and Additional Theorems}

In this appendix, we provide detailed derivations.

\subsection{Test-time Linear Interpolation Assumption}
\begin{assumption}[Linear interpolation for test-time analysis]\label{ass:A}
Fix $t \in(0,1)$. Let $y \in \mathbb{R}^d$ be the (possibly anomalous) data variable, distributed according to a law $r \in \{p, q\}$. Let $x_0 \sim \mathcal{N}\left(0, I_d\right)$ be independent of $y$. Define the test-time linear interpolation path $x_t=(1-t) x_0+t y$. Then conditional on $y$, $x_t$ is Gaussian: $x_t \mid y \sim \mathcal{N}\left(t y, (1-t)^2 I_d\right)$.
\end{assumption}

\subsection{Proof of Proposition~\ref{prop:y-velocity-connection}}
\label{sec:appendix-prop-y-velocity}

\begin{proof}
Fix $t\in(0,1)$ and define the affine path
\[
x_t=(1-t)x_0+t y.
\]
First note the deterministic identity
\[
y-x_t = y-(1-t)x_0-ty = (1-t)(y-x_0). \tag{A.0}
\]
Now define
\[
\mu_p(x):=\mathbb E_p[y\mid x_t=x],\qquad
v_p(t,x):=\mathbb E_p[y-x_0\mid x_t=x].
\]
Using (A.0), we can rewrite $y$ pointwise as
\[
y = x_t + (1-t)(y-x_0).
\]
Taking conditional expectation given $x_t=x$ (under the normal law $p$) yields
\[
\mu_p(x)
= \mathbb E_p[y\mid x_t=x]
= \mathbb E_p\!\left[x_t+(1-t)(y-x_0)\mid x_t=x\right]
= x + (1-t)\,\mathbb E_p[y-x_0\mid x_t=x]
= x+(1-t)v_p(t,x).
\]
Substituting $x=x_t$ and using (A.0) again gives
\[
y-\mu_p(x_t)
= y-x_t-(1-t)v_p(t,x_t)
= (1-t)(y-x_0)-(1-t)v_p(t,x_t)
= (1-t)\Big((y-x_0)-v_p(t,x_t)\Big),
\]
which proves the first identity. Taking squared norms yields
\[
\|y-\mu_p(x_t)\|^2=(1-t)^2\|v_p(t,x_t)-(y-x_0)\|^2.
\]
\end{proof}

\subsection{Proof of Lemma~\ref{lem:noise-tweedie}}
\label{sec:appendix-lem-noise-tweedie}

\begin{proof}
Fix $t\in(0,1)$ and write the path as
\[
x_t = (1-t)x_0 + t y = t y + \sigma \varepsilon,\qquad
\varepsilon\sim\mathcal N(0,I_d),\ \sigma:=1-t,
\]
so that conditional on $y$ we have $x_t\mid y \sim \mathcal N(t y,\sigma^2 I_d)$.
Let $\varphi_{\sigma}(u)$ denote the $\mathcal N(0,\sigma^2 I_d)$ density.

For any law $r$ on $y$, the marginal density of $x_t$ is the Gaussian convolution
\[
r_t(x) = \int_{\mathbb R^d} r(y)\,\varphi_{\sigma}(x-t y)\,dy.
\]
Differentiate under the integral sign (justified since $\varphi_\sigma$ is smooth and integrable):
\[
\nabla_x r_t(x)
=
\int r(y)\,\nabla_x \varphi_\sigma(x-t y)\,dy
=
-\frac{1}{\sigma^2}\int (x-t y)\,r(y)\,\varphi_\sigma(x-t y)\,dy.
\]
Rearranging gives
\[
\sigma^2 \nabla_x r_t(x)
=
- x\,r_t(x) + t\int y\,r(y)\,\varphi_\sigma(x-t y)\,dy.
\]
Divide both sides by $t\,r_t(x)$ to obtain
\[
\frac{1}{r_t(x)}\int y\,r(y)\,\varphi_\sigma(x-t y)\,dy
=
\frac{1}{t}x + \frac{\sigma^2}{t}\,\frac{\nabla_x r_t(x)}{r_t(x)}
=
\frac{1}{t}x + \frac{\sigma^2}{t}\,\nabla_x \log r_t(x).
\]
The left-hand side is exactly $\mathbb E_r[y\mid x_t=x]$ by Bayes' rule, hence
\[
\mathbb E_r[y\mid x_t=x] = \frac{1}{t}x + \frac{(1-t)^2}{t}\,s_{r_t}(x),
\]
where $s_{r_t}=\nabla\log r_t$ is the score of the marginal.
The ``consequently'' statement follows by subtracting the cases $r=q$ and $r=p$.
\medskip

\noindent\emph{Part (ii).}
Fix $y$ and $t\in(0,1)$, and write $X_t=(1-t)x_0+ty$ with $x_0\sim\mathcal{N}(0,I)$.
Conditional on $y$, $X_t\sim q_{t\mid y}=\mathcal{N}(ty,(1-t)^2I)$.
The same convolution calculation as in part (i), applied to a Dirac law on $y$ convolved with Gaussian noise, yields
\[
y=\frac{1}{t}X_t+\frac{(1-t)^2}{t}\,s_{q_{t\mid y}}(X_t).
\]
Also, part (i) applied to the marginal $p_t$ gives
\[
\mu_p(X_t)=\mathbb{E}_p[y\mid X_t]=\frac{1}{t}X_t+\frac{(1-t)^2}{t}\,s_{p_t}(X_t).
\]
Subtracting yields
\[
y-\mu_p(X_t)=\frac{(1-t)^2}{t}\bigl(s_{q_{t\mid y}}(X_t)-s_{p_t}(X_t)\bigr).
\]
Taking Euclidean norms and using Proposition~\ref{prop:y-velocity-connection} in the form $\|y-\mu_p(X_t)\|=(1-t)\|v_p(t,X_t)-(y-X_0)\|$ yields the pathwise velocity--score identity (conditional on $y$)
\begin{equation}
(1-t)\left\|v_p\left(t, X_t\right)-\left(y-X_0\right)\right\|_2
=
\frac{(1-t)^2}{t}\left\|s_{q_{t \mid y}}\left(X_t\right)-s_{p_t}\left(X_t\right)\right\|_2.
\label{eq:pathwise_identity}
\end{equation}
\end{proof}

\subsection{Learned flows: pointwise control}
\label{sec:appendix-learned-quantile}

Lemma~\ref{lem:noise-tweedie}(ii) establishes~\eqref{eq:pathwise_identity}.
Dividing~\eqref{eq:pathwise_identity} by the scalar $1-t>0$ gives $\|v_p(t,X_t)-(y-X_0)\|_2=\frac{1-t}{t}\|s_{q_{t\mid y}}(X_t)-s_{p_t}(X_t)\|_2$, so scaling by $\frac{1-t}{t}>0$ preserves conditional quantiles in $X_0$ and yields~\eqref{eq:oracle_quantile_fixed_y} from Section~\ref{sec:pixel-path}.

For fixed $y$ and $t\in(0,1)$, let
\[
\widetilde{L}_t^\theta(y):=\|\flow(X_t,t)-(y-X_0)\|_2,\quad
\widetilde{E}_t^\theta(y):=\|\flow(X_t,t)-v_p(t,X_t)\|_2,
\]
\[
G_t(y):=\|s_{q_{t|y}}(X_t)-s_{p_t}(X_t)\|_2,
\]
where $X_t=(1-t)X_0+t y$ and $X_0\sim\mathcal N(0,I)$.

\begin{proposition}[Learned-field quantile lower bound]\label{prop:learned-quantile-lower}
For any $\alpha,\beta\in(0,1)$ with $\alpha+\beta<1$,
\begin{equation}
Q_{\alpha+\beta}^{X_0}\!\left(\widetilde{L}_t^\theta(y)\mid y\right)\ge
\left[
\frac{1-t}{t}\,Q_\alpha^{X_0}\!\left(G_t(y)\mid y\right)
-
Q_{1-\beta}^{X_0}\!\left(\widetilde{E}_t^\theta(y)\mid y\right)
\right]_+.
\label{eq:appendix_learned_quantile}
\end{equation}
Consequently,
\begin{equation}
Q_{\alpha+\beta}^{X_0}\!\left((\widetilde{L}_t^\theta(y))^2\mid y\right)\ge
\left[
\frac{1-t}{t}\,Q_\alpha^{X_0}\!\left(G_t(y)\mid y\right)
-
Q_{1-\beta}^{X_0}\!\left(\widetilde{E}_t^\theta(y)\mid y\right)
\right]_+^2.
\label{eq:appendix_learned_quantile_sq}
\end{equation}
\end{proposition}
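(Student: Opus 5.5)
The plan is a pathwise triangle inequality, then a union bound over two events, then the fact that quantiles are monotone under increasing transformations. Throughout, $y$ and $t\in(0,1)$ are fixed, and every probability and quantile is over $X_0\sim\mathcal N(0,I)$ alone. The starting point is the pathwise velocity--score identity \eqref{eq:pathwise_identity} from Lemma~\ref{lem:noise-tweedie}(ii), divided by $1-t>0$:
\[
\|v_p(t,X_t)-(y-X_0)\|_2=\tfrac{1-t}{t}\,G_t(y)
\]
almost surely. The reverse triangle inequality applied to $\flow(X_t,t)-(y-X_0)=\big(v_p(t,X_t)-(y-X_0)\big)+\big(\flow(X_t,t)-v_p(t,X_t)\big)$ then gives the almost-sure bound
\[
\widetilde L_t^\theta(y)\ \ge\ \tfrac{1-t}{t}\,G_t(y)-\widetilde E_t^\theta(y).
\]
Since $\widetilde L_t^\theta\ge 0$, this also holds with the positive part on the right.

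Next, set $a:=Q_\alpha^{X_0}(G_t(y)\mid y)$ and $b:=Q_{1-\beta}^{X_0}(\widetilde E_t^\theta(y)\mid y)$, taking the lower quantile $Q_\gamma(Z)=\inf\{z:P(Z\le z)\ge\gamma\}$. Consider the events $A=\{G_t<a\}$ and $B=\{\widetilde E_t^\theta>b\}$. By the definition of the quantile, $P(A)\le\alpha$ and $P(B)\le 1-(1-\beta)=\beta$. Off $A\cup B$, the pathwise bound gives $\widetilde L_t^\theta\ge[\tfrac{1-t}{t}a-b]_+=:c$. Hence $P(\widetilde L_t^\theta<c)\le P(A\cup B)\le\alpha+\beta<1$.

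The main obstacle is converting this tail bound into $Q_{\alpha+\beta}(\widetilde L_t^\theta)\ge c$ without a tie-handling error at the boundary level. For any $z<c$ we have $P(\widetilde L_t^\theta\le z)\le P(\widetilde L_t^\theta<c)\le\alpha+\beta$, which only yields $Q\ge c$ when that inequality is strict, because $\inf\{z:P(\le z)\ge\alpha+\beta\}$ could otherwise sit below $c$ exactly at equality. To fix this, I would first run the argument with $\alpha'<\alpha$ (or $\beta'<\beta$) so that the probability bound becomes strictly below $\alpha+\beta$, which gives $Q_{\alpha+\beta}(\widetilde L_t^\theta)\ge[\tfrac{1-t}{t}Q_{\alpha'}(G_t)-b]_+$. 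I would then let $\alpha'\uparrow\alpha$, using the left-continuity of $\gamma\mapsto Q_\gamma$ for the lower quantile. Equivalently, the quantile convention can be fixed so that this step is immediate. This proves \eqref{eq:appendix_learned_quantile}.

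For \eqref{eq:appendix_learned_quantile_sq}, the map $u\mapsto u^2$ is nondecreasing on $[0,\infty)$ and $\widetilde L_t^\theta\ge0$, so $Q_\gamma((\widetilde L_t^\theta)^2)=(Q_\gamma(\widetilde L_t^\theta))^2$. This holds because $P(L^2\le z^2)=P(L\le z)$ for $z\ge0$. Squaring both sides of \eqref{eq:appendix_learned_quantile}, which are nonnegative, gives the claim.
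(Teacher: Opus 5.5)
Your proof is correct and follows the paper's route: the pathwise identity plus the reverse triangle inequality give $\widetilde L_t^\theta\ge\frac{1-t}{t}G_t-\widetilde E_t^\theta$. The one-sided quantile inequality $Q_{\alpha+\beta}(A-B)\ge Q_\alpha(A)-Q_{1-\beta}(B)$, followed by clipping at zero and monotonicity of squaring, then finishes it; the paper simply cites that inequality, while you prove it with a union bound plus a valid left-continuity limit $\alpha'\uparrow\alpha$. That limit is unnecessary: for $0\le z<c$ use $\{\widetilde L_t^\theta\le z\}\subseteq\{G_t\le \tfrac{t}{1-t}(z+b)\}\cup\{\widetilde E_t^\theta>b\}$, since $\tfrac{t}{1-t}(z+b)<a$ already gives the first event probability strictly below $\alpha$.
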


\begin{proof}
By triangle inequality,
\[
\widetilde{L}_t^\theta(y)\ge \|v_p(t,X_t)-(y-X_0)\|_2-\widetilde{E}_t^\theta(y).
\]
Using~\eqref{eq:pathwise_identity} after dividing by $1-t$,
\[
\|v_p(t,X_t)-(y-X_0)\|_2=\frac{1-t}{t}\,G_t(y).
\]
Hence pointwise
\[
\widetilde{L}_t^\theta(y)\ge \frac{1-t}{t}\,G_t(y)-\widetilde{E}_t^\theta(y).
\]
Applying the standard one-sided quantile inequality
$Q_{\alpha+\beta}(A-B)\ge Q_\alpha(A)-Q_{1-\beta}(B)$
and then clipping at zero gives \eqref{eq:appendix_learned_quantile}. Since
$x\mapsto x^2$ is monotone on $[0,\infty)$ and $\widetilde{L}_t^\theta(y)\ge 0$,
\eqref{eq:appendix_learned_quantile_sq} follows.
\end{proof}

Taking expectation over $Y$, rewrite \eqref{eq:appendix_learned_quantile_sq} using
$Q_{1-\beta}^{X_0}(\widetilde{E}_t^\theta(Y)\mid Y)=\frac{1-t}{t}\,Q_{1-\beta}^{X_0}(U_t\mid Y)$
for $U_t=\frac{t}{1-t}\,\widetilde{E}_t^\theta(Y)$ (matching the main-text definition of $U_t$):
\[
Q_{\alpha+\beta}^{X_0}\!\left((\widetilde{L}_t^\theta(Y))^2\mid Y\right)\ge
\frac{(1-t)^2}{t^2}\left[
Q_\alpha^{X_0}\!\left(G_t(Y)\mid Y\right)
-
Q_{1-\beta}^{X_0}\!\left(U_t\mid Y\right)
\right]_+^2,
\]
which yields the main-text quantile bridge \eqref{eq:quantile_bridge_main}.

\subsection{Endpoint-space decomposition}
\label{app:endpoint-decomp}

\begin{proposition}[Endpoint $\mathcal{L}^2$ decomposition]\label{prop:endpoint-L2-decomp}
Assume Assumption~\ref{ass:A}. Fix $t\in(0,1)$ and suppose the path marginals $p_t,q_t$ admit smooth positive densities.
Let $(X_0,Y)\sim q$ denote independent $X_0\sim\mathcal N(0,I)$ and anomaly image $Y$, and set $X_t:=(1-t)X_0+tY$.
Define $\mu_r(x):=\mathbb E_r[Y\mid X_t=x]$ for $r\in\{p,q\}$ and assume $J(q_t\|p_t)<\infty$.
Then
\begin{equation}
\mathbb E_q\|Y-\mu_p(X_t)\|_2^2
=
\mathbb E_q\|Y-\mu_q(X_t)\|_2^2
+
\frac{(1-t)^4}{t^2}J(q_t\|p_t).
\label{eq:endpoint_L2_decomp}
\end{equation}
\end{proposition}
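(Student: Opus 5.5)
The plan is a Pythagorean (bias--variance) split in $\mathcal L^2(q)$, followed by Lemma~\ref{lem:noise-tweedie} to identify the cross-model term. Under the joint law $(X_0,Y)\sim q$, I would write
\[
Y-\mu_p(X_t)=\bigl(Y-\mu_q(X_t)\bigr)+\bigl(\mu_q(X_t)-\mu_p(X_t)\bigr)
\]
and expand the squared norm. This gives three terms: $\mathbb E_q\|Y-\mu_q(X_t)\|^2$, the term $\mathbb E_q\|\mu_q(X_t)-\mu_p(X_t)\|^2$, and twice the cross term $\mathbb E_q\langle Y-\mu_q(X_t),\,\mu_q(X_t)-\mu_p(X_t)\rangle$.

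The cross term vanishes by the tower property. Conditioning on $X_t$, the factor $\mu_q(X_t)-\mu_p(X_t)$ is $\sigma(X_t)$-measurable, and $\mathbb E_q[Y-\mu_q(X_t)\mid X_t]=0$ by the definition of $\mu_q$ as the $q$-posterior mean. The integrability needed for this is the main technical point. I need $Y\in\mathcal L^2(q)$, which I take as implicit in $D_q(t)<\infty$, and $\mu_q-\mu_p\in\mathcal L^2(q_t)$, which follows from the next step together with $J(q_t\|p_t)<\infty$. Cauchy--Schwarz then makes the cross term absolutely integrable, so conditioning is legitimate. If $\mathbb E_q\|Y\|^2=\infty$, both sides are infinite or the statement should be read with $D_q(t)$ finite; I would state this as a standing convention.

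For the middle term, I apply Lemma~\ref{lem:noise-tweedie} with $r=q$ and $r=p$. It gives
\[
\mu_q(x)-\mu_p(x)=\frac{(1-t)^2}{t}\bigl(s_{q_t}(x)-s_{p_t}(x)\bigr)
\]
pointwise, using the assumed smooth positive densities so that both scores are defined everywhere. Since $X_t\sim q_t$ under $q$,
\[
\mathbb E_q\|\mu_q(X_t)-\mu_p(X_t)\|^2=\frac{(1-t)^4}{t^2}\,\mathbb E_{X\sim q_t}\|s_{q_t}(X)-s_{p_t}(X)\|^2=\frac{(1-t)^4}{t^2}J(q_t\|p_t).
\]

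Combining the three terms yields \eqref{eq:endpoint_L2_decomp}. One subtlety is that $\mu_p$ is the $p$-posterior mean evaluated at a point $X_t$ whose law is $q_t$. This is well defined because $p_t>0$ everywhere, so the formula from Lemma~\ref{lem:noise-tweedie} defines $\mu_p$ on all of $\mathbb R^d$ rather than only $p_t$-a.e. I expect this well-definedness and integrability bookkeeping, not the algebra, to be the only real obstacle.
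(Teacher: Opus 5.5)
Your proposal is correct and is essentially the paper's proof: you split $Y-\mu_p(X_t)$ into $Y-\mu_q(X_t)$ plus $\mu_q(X_t)-\mu_p(X_t)$, remove the cross term via $\mathbb E_q[Y-\mu_q(X_t)\mid X_t]=0$, and use Lemma~\ref{lem:noise-tweedie} to turn $\mathbb E_q\|\mu_q(X_t)-\mu_p(X_t)\|_2^2$ into $\frac{(1-t)^4}{t^2}J(q_t\|p_t)$. Your integrability bookkeeping is sound, but the caveat about $\mathbb E_q\|Y\|_2^2=\infty$ is unnecessary, and the claim that both sides would then be infinite is wrong: since $Y-X_t/t=-\frac{1-t}{t}X_0\in\mathcal L^2$, one always has $D_q(t)\le d\,(1-t)^2/t^2$ (as in the paper's proof of Theorem~\ref{thm:weighted-velocity-decomp}), so both sides are finite whenever $J(q_t\|p_t)<\infty$.
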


\begin{proof}
Fix $t\in(0,1)$ and work under the anomaly law $q$ for $(X_0,Y)$.
Let
\[
\mu_r(x):=\mathbb E_r[Y\mid X_t=x],\qquad r\in\{p,q\}.
\]
Define the $q$-residual and the mean-shift term
\[
e := Y-\mu_q(X_t),\qquad \delta := \mu_q(X_t)-\mu_p(X_t).
\]
Then $Y-\mu_p(X_t) = e + \delta$.
Expanding the squared norm gives
\[
\|Y-\mu_p(X_t)\|_2^2 = \|e\|_2^2 + \|\delta\|_2^2 + 2\langle e,\delta\rangle.
\]
Taking expectation under $q$ yields
\[
\mathbb E_q\|Y-\mu_p(X_t)\|_2^2
=
\mathbb E_q\|e\|_2^2 + \mathbb E_q\|\delta\|_2^2 + 2\,\mathbb E_q\langle e,\delta\rangle.
\]
The cross term vanishes by orthogonality of conditional expectation: $\delta$ is measurable with respect to $X_t$, while
\[
\mathbb E_q[e\mid X_t] = \mathbb E_q[Y-\mu_q(X_t)\mid X_t]=0.
\]
Hence
\[
\mathbb E_q\|Y-\mu_p(X_t)\|_2^2
=
\mathbb E_q\|Y-\mu_q(X_t)\|_2^2 + \mathbb E_q\|\mu_q(X_t)-\mu_p(X_t)\|_2^2.
\]
Applying Lemma~\ref{lem:noise-tweedie} (Tweedie),
\[
\mu_q(x)-\mu_p(x)=\frac{(1-t)^2}{t}\big(s_{q_t}(x)-s_{p_t}(x)\big),
\]
so
\[
\mathbb E_q\|\mu_q(X_t)-\mu_p(X_t)\|_2^2
=
\left(\frac{(1-t)^2}{t}\right)^2 J(q_t\|p_t),
\]
which is~\eqref{eq:endpoint_L2_decomp}.
\end{proof}

\subsection{Proof of Theorem~\ref{thm:weighted-velocity-decomp}}
\label{sec:appendix-proof-weighted-velocity}

\begin{proof}
Let $R_t:=\|Y-\mu_p(X_t)\|_2$. By Proposition~\ref{prop:y-velocity-connection},
\[
t^2\mathbb E_q\|v_p(t,X_t)-(Y-X_0)\|_2^2
=
\frac{t^2}{(1-t)^2}\,\mathbb E_q[R_t^2].
\]
Proposition~\ref{prop:endpoint-L2-decomp} (Appendix~\ref{app:endpoint-decomp}) gives the $\mathcal{L}^2$ decomposition of $\mathbb E_q[R_t^2]$ in Equation~\eqref{eq:endpoint_L2_decomp}; substituting yields~\eqref{eq:weighted_velocity_main}.

For the boundedness claim, let
\[
Z_t:=X_t/t=Y+\frac{1-t}{t}X_0.
\]
Since $\mu_q(X_t)=\mathbb E_q[Y\mid X_t]$ is the minimum-MSE estimator of $Y$ from $X_t$,
\[
D_q(t)
\le
\mathbb E_q\|Y-Z_t\|_2^2
=
d\frac{(1-t)^2}{t^2}.
\]
Multiplying by $t^2/(1-t)^2$ gives $\frac{t^2}{(1-t)^2}D_q(t)\le d$.
\end{proof}

\subsection{Late-time behavior: contraction versus algebraic prefactors}
\label{sec:appendix-late-time-endpoint}

Theorem~\ref{thm:weighted-velocity-decomp} rewrites $t^2\mathbb E_q\|v_p(t,X_t)-(Y-X_0)\|_2^2$ as in~\eqref{eq:weighted_velocity_main}, with $D_q(t)=\mathbb E_q\|Y-\mu_q(X_t)\|_2^2$ and $\mu_q(x)=\mathbb E_q[Y\mid X_t=x]$ the Bayes estimator of $Y$ under $q$ from the noisy affine statistic $X_t=(1-t)X_0+tY$.
Read naïvely from the velocity parametrization, the coefficient $\frac{t^2}{(1-t)^2}$ diverges as $t\to 1$, which can suggest catastrophic endpoint scaling.
This is misleading for the weighted population mismatch in~\eqref{eq:weighted_velocity_main}.

\paragraph{Posterior contraction cancels the prefactor.}
The term $D_q(t)$ is mean squared error (Bayes risk) under law $q$, hence bounded above by the risk of any measurable estimator of $Y$ from $X_t$.
Define $Z_t:=X_t/t=Y+\frac{1-t}{t}X_0$.
Conditional on $Y$, $Z_t$ is an unbiased estimator of $Y$ with isotropic error variance $\frac{(1-t)^2}{t^2}I_d$, so
\[
\mathbb E_q\|Y-Z_t\|_2^2
=
\mathbb E_q\!\left[\mathbb E_q\!\left[\|Y-Z_t\|_2^2\mid Y\right]\right]
=
d\,\frac{(1-t)^2}{t^2}.
\]
Since $\mu_q(X_t)$ attains the minimum mean squared error among estimators of $Y$ from $X_t$ under $q$,
\[
D_q(t)\le \mathbb E_q\|Y-Z_t\|_2^2 = d\,\frac{(1-t)^2}{t^2},
\]
and therefore
\[
\frac{t^2}{(1-t)^2}\,D_q(t)\le d.
\]
Thus the dangerous-looking prefactor is exactly compensated by posterior contraction of $\mu_q(X_t)$ toward $Y$ as $t\to 1$; the weighted denoising contribution is dimension-wise $\mathcal O(1)$ uniformly on $(0,1)$.

\paragraph{Why $t=1$ is still excluded in practice.}
Population boundedness as $t\uparrow 1$ concerns expectations under the transport path and does not remove numerical degeneracy at the literal endpoint.
At $t=1$ one has $X_t=Y$ almost surely, so conditioning on $X_t$ yields a singular posterior for $(Y,X_0)$ given only $Y$.
Moreover, along $t<1$ the affine-path velocity target depends on the pair $(Y,X_0)$ through $Y-X_0$, while at $t=1$ the state $X_t=Y$ no longer identifies $X_0$; score identities also exhibit $(1-t)^{-1}$ scaling.
Accordingly, implementations discretize $\mathcal T\subset(0,1)$ as in Section~\ref{sec:method}.

\subsection{Proof of Proposition~\ref{prop:time-weight}}
\label{sec:appendix-optimal-weighting}

\begin{proof}
Along the linear path $x_t=(1-t)x_0+t y$, define the rescaled observation
\[
z_t := \frac{x_t}{t} = y + \frac{1-t}{t}x_0,
\]
so that conditional on $y$, $z_t\mid y \sim \mathcal N\!\left(y,\sigma_t^2 I\right)$ with
\[
\sigma_t^2 = \left(\frac{1-t}{t}\right)^2.
\]
Thus the noise level depends on $t$ and a standard inverse-variance (GLS) weight in $y$-space is
\[
w_y(t) \propto \frac{1}{\sigma_t^2}=\frac{t^2}{(1-t)^2}.
\]

Now define the velocity residual
\[
r_\theta(x_0,y,t):=\flow(x_t,t)-(y-x_0).
\]
If we form the induced endpoint/denoiser estimate
\[
\widehat y_\theta(x_t,t):=x_t+(1-t)\flow(x_t,t),
\]
then using the deterministic identity $y-x_t=(1-t)(y-x_0)$ we obtain
\[
\widehat y_\theta - y
=
x_t+(1-t)\flow(x_t,t)-y
=
-(1-t)\big(\flow(x_t,t)-(y-x_0)\big)
=-(1-t)\,r_\theta(x_0,y,t),
\]
hence
\[
\|\widehat y_\theta-y\|^2 = (1-t)^2\|r_\theta(x_0,y,t)\|^2.
\]
Therefore inverse-variance weighting of squared endpoint errors gives
\[
w_y(t)\,\|\widehat y_\theta-y\|^2
\propto
\frac{t^2}{(1-t)^2}\,(1-t)^2\|r_\theta\|^2
=
t^2\|r_\theta\|^2,
\]
so the induced weight on squared velocity residuals is $w(t)=t^2$ (up to an irrelevant constant).
\end{proof}

\subsection{Statistical Guarantees for Monte Carlo Estimators}
\label{sec:appendix-statistical-guarantees}

This section provides formal concentration bounds for the Monte Carlo path averaging estimator. These results justify the finite-$K$ approximation of the population score and explain why minimum aggregation reduces false positives.

\subsubsection{What we are estimating}

For a test image $y$, time $t \in \mathcal{T}$, and pixel $p \in \Omega$, define
\[
Z_t^{(k)}(p) := \|\flow(x_t^{(k)}, t)(p) - (y-x_0^{(k)})(p)\|_2^2,
\]
where $x_0^{(k)} \sim \mathcal{N}(0,I)$ are i.i.d. Gaussian seeds and $x_t^{(k)} = (1-t)x_0^{(k)} + t y$.
The population target is $m_{t,p} := \mathbb{E}_{x_0}[Z_t(p)]$, which corresponds to the expected squared velocity mismatch at pixel $p$ and time $t$.
The Monte Carlo average estimator is
\[
\hat{m}_{t,p} := \frac{1}{K}\sum_{k=1}^K Z_t^{(k)}(p) = \delta_t^{\mathrm{avg}}(p).
\]
This connects directly to the population score and Theorem~\ref{thm:weighted-velocity-decomp}.

\subsubsection{Concentration of the path average}

\begin{lemma}[Concentration of path averaging]\label{lem:mc_concentration}
Fix $(t,p) \in \mathcal{T} \times \Omega$. Let $Z_t^{(1)}(p),\dots,Z_t^{(K)}(p)$ be i.i.d. with mean $m_{t,p}$.
Assume $Z_t(p)-m_{t,p}$ is sub-exponential with $\|Z_t(p)-m_{t,p}\|_{\psi_1}\le \nu_{t,p}$.
Then for all $\varepsilon>0$,
\[
\Pr\!\left(\left|\hat{m}_{t,p}-m_{t,p}\right|\ge \varepsilon\right)
\le 2\exp\!\left[-cK\min\!\left(\frac{\varepsilon^2}{\nu_{t,p}^2},\frac{\varepsilon}{\nu_{t,p}}\right)\right],
\]
where $c>0$ is a universal constant.
\end{lemma}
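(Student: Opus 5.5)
This is a direct application of Bernstein's inequality for sums of independent, centered sub-exponential random variables, applied at a fixed pair $(t,p)$.

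\textbf{Step 1 (centering).} Set $W_k := Z_t^{(k)}(p) - m_{t,p}$ for $k\in[K]$. Independence of the seeds $x_0^{(k)}$ makes the $Z_t^{(k)}(p)$ i.i.d. for fixed $y$ and $t$, since each is the same measurable function of an independent seed. Hence the $W_k$ are i.i.d. with mean zero, and by assumption $\|W_k\|_{\psi_1}\le \nu_{t,p}$. We also have $\hat m_{t,p}-m_{t,p} = \frac1K\sum_{k=1}^K W_k$.

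\textbf{Step 2 (moment generating function bound).} Use the standard sub-exponential MGF bound: there are universal constants $C_1, C_2$ such that
\[
\mathbb E\, e^{\lambda W_k}\le \exp\!\left(C_1\lambda^2\nu_{t,p}^2\right) \quad\text{for } |\lambda|\le \frac{1}{C_2\,\nu_{t,p}}.
\]
This follows from expanding the exponential and using the moment bound $\mathbb E|W_k|^q\le (C\nu_{t,p} q)^q$, together with $\mathbb E W_k=0$ to kill the linear term.

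\textbf{Step 3 (Chernoff and optimization).} Apply Markov's inequality to $e^{\lambda\sum_k W_k}$ and use independence to obtain
\[
\Pr\!\left(\sum_k W_k\ge K\varepsilon\right)\le \exp\!\left(-\lambda K\varepsilon + C_1 K\lambda^2\nu_{t,p}^2\right).
\]
Then optimize $\lambda$ over the admissible range $|\lambda|\le 1/(C_2\nu_{t,p})$:
\begin{itemize}
\item If the unconstrained optimizer $\lambda^*=\varepsilon/(2C_1\nu_{t,p}^2)$ is admissible, the exponent becomes $-K\varepsilon^2/(4C_1\nu_{t,p}^2)$.
\item Otherwise take $\lambda = 1/(C_2\nu_{t,p})$, which gives an exponent of order $-cK\varepsilon/\nu_{t,p}$.
\end{itemize}
Combining the two cases yields $\exp[-cK\min(\varepsilon^2/\nu_{t,p}^2,\varepsilon/\nu_{t,p})]$. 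Repeating the argument for $-W_k$ bounds the lower tail, and a union bound over the two tails gives the factor $2$.

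\textbf{Main obstacle.} The only delicate part is the constant bookkeeping in Step 2, namely getting the MGF bound with a universal constant from the $\psi_1$-norm via the moment characterization. I would cite the standard reference for it (e.g., Vershynin, \emph{High-Dimensional Probability}, Prop.~2.7.1 and Thm.~2.8.1) rather than rederive it. Nothing specific to flow matching enters: the sub-exponential hypothesis is assumed. One could remark that it is plausible here because $Z_t(p)$ is a squared norm of a 3-dimensional vector that is roughly Gaussian in $x_0$, hence sub-exponential when $\flow$ is Lipschitz, but this is not needed for the proof.
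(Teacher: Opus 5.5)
Your proposal is correct and follows the same route as the paper: center $W_k=Z_t^{(k)}(p)-m_{t,p}$ and apply Bernstein's inequality for independent mean-zero sub-exponential variables at deviation level $K\varepsilon$. The paper cites Bernstein directly, whereas you unpack its standard MGF--Chernoff proof. Your two-case optimization over $\lambda$ and the two-tail union bound giving the factor $2$ are correct, so this adds detail but no new idea.
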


\begin{proof}
Let $X_k := Z_t^{(k)}(p)-m_{t,p}$, so $\mathbb{E}[X_k]=0$ and $\|X_k\|_{\psi_1}\le \nu_{t,p}$.
By the Bernstein inequality for sums of independent sub-exponential variables
\[
\Pr\!\left(\left|\frac{1}{K}\sum_{k=1}^K X_k\right|\ge \varepsilon\right)
\le 2\exp\!\left[-cK\min\!\left(\frac{\varepsilon^2}{\nu_{t,p}^2},\frac{\varepsilon}{\nu_{t,p}}\right)\right].
\]
Since $\frac{1}{K}\sum_{k=1}^K X_k = \hat{m}_{t,p}-m_{t,p}$, the claim follows.
\end{proof}

The sub-exponential assumption is reasonable in our setting: $Z_t(p)$ is a squared norm of a quantity that is roughly sub-Gaussian when the randomness comes from Gaussian $x_0$ and the network output $\flow$ is Lipschitz/bounded. The fact that ``squared sub-Gaussian is sub-exponential'' is standard in high-dimensional probability.

\subsubsection{Minimum aggregation suppresses false positives}

\begin{corollary}[Minimum aggregation suppresses false positives]\label{cor:min_consensus}
Fix $(t,p) \in \mathcal{T} \times \Omega$ and let $\delta_t^{\min}(p):=\min_{k\in[K]} Z_t^{(k)}(p)$.
Then for any threshold $\tau$,
\[
\Pr(\delta_t^{\min}(p)>\tau) = \Pr(Z_t(p)>\tau)^K.
\]
\end{corollary}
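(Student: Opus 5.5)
The statement to prove is Corollary~\ref{cor:min_consensus}. The argument reduces to writing the tail event of a minimum as an intersection of independent single-path events. Fix $(t,p)$ and the test image $y$, which is deterministic at test time, and treat the trained field $\flow$ as a fixed measurable map. Each per-path mismatch $Z_t^{(k)}(p)=\|\flow(x_t^{(k)},t)(p)-(y-x_0^{(k)})(p)\|_2^2$ is then a fixed measurable function $F_{t,p}$ of the seed $x_0^{(k)}$ alone, since $x_t^{(k)}=(1-t)x_0^{(k)}+ty$. Because the seeds $x_0^{(1)},\dots,x_0^{(K)}$ are i.i.d. $\mathcal N(0,I)$, the variables $Z_t^{(1)}(p),\dots,Z_t^{(K)}(p)$ are i.i.d. copies of $Z_t(p):=F_{t,p}(x_0)$. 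This uses the standard fact that measurable functions of independent variables are independent.

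The key step is the event identity
\[
\{\delta_t^{\min}(p)>\tau\}=\bigcap_{k=1}^K\{Z_t^{(k)}(p)>\tau\}.
\]
A minimum of finitely many reals exceeds $\tau$ if and only if every entry does. Independence then factors the probability into $\prod_{k=1}^K\Pr(Z_t^{(k)}(p)>\tau)$, and identical distribution turns the product into $\Pr(Z_t(p)>\tau)^K$.

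The only point needing care is the independence claim. It requires that $\flow$ is not refit or otherwise made dependent on the seeds at test time, and that $y$ is held fixed, so the probability is conditional on $y$ and consistent with the seed-wise quantile statements of Section~\ref{sec:pixel-path}. I will state this conditioning explicitly. It is also worth remarking on the interpretation, though it is not required for the proof. On a normal pixel with single-path exceedance probability $\pi=\Pr(Z_t(p)>\tau)<1$, the false-positive probability decays as $\pi^K=\exp(-K\log(1/\pi))$, which is the exponential control invoked in Remark~\ref{rem:mc_guarantees}.
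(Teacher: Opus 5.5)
Your proof is correct and follows the paper's argument: the minimum exceeds $\tau$ exactly when every $Z_t^{(k)}(p)$ does, independence factors the probability, and identical distribution gives $\Pr(Z_t(p)>\tau)^K$. You also show why the $Z_t^{(k)}(p)$ are i.i.d.: with $y$ and $\flow$ held fixed, they are one fixed measurable function of i.i.d. seeds. The paper simply assumes this, so your conditioning on $y$ is a worthwhile clarification rather than a different route.
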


\begin{proof}
Because $Z_t^{(k)}(p)$ are i.i.d.,
\[
\Pr(\delta_t^{\min}(p)>\tau)
=\Pr(\forall k,\;Z_t^{(k)}(p)>\tau)
=\prod_{k=1}^K \Pr(Z_t(p)>\tau)
=\Pr(Z_t(p)>\tau)^K.
\]
\end{proof}

Thus if a spurious pixel exceeds $\tau$ with probability $r$ for a single seed, minimum aggregation reduces it to $r^K$. This exponential suppression directly supports our design choice of using minimum aggregation for robustness to spurious detections.

\subsubsection{Uniform control over all pixels and times}

\begin{corollary}[Uniform control over all pixels and times]\label{cor:uniform_mc}
Assume Lemma~\ref{lem:mc_concentration} holds with a common bound
$\|Z_t(p)-m_{t,p}\|_{\psi_1}\le \nu$ for all $(t,p)\in\mathcal{T}\times\Omega$.
Let $M:=|\mathcal{T}||\Omega|$. Then with probability at least $1-\delta$,
\[
\max_{t\in\mathcal{T}}\max_{p\in\Omega} |\hat{m}_{t,p}-m_{t,p}|
\le C\nu \max\!\left(\sqrt{\frac{\log(2M/\delta)}{K}},\frac{\log(2M/\delta)}{K}\right),
\]
for a universal constant $C>0$.
\end{corollary}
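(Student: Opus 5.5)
The plan is to derive Corollary~\ref{cor:uniform_mc} from Lemma~\ref{lem:mc_concentration} via a union bound over the $M=|\mathcal{T}||\Omega|$ pairs $(t,p)$, followed by inverting the resulting tail bound. No independence across pixels or times is needed, since the union bound holds for arbitrary dependence. This matters because all pixels and times share the same seeds $x_0^{(k)}$.

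\textbf{Step 1: per-pair bound.} Fix $\varepsilon>0$. For each $(t,p)$, Lemma~\ref{lem:mc_concentration} with the common bound $\nu_{t,p}\le\nu$ gives
\[
\Pr\bigl(|\hat m_{t,p}-m_{t,p}|\ge\varepsilon\bigr)\le 2\exp\!\bigl[-cK\min(\varepsilon^2/\nu^2,\varepsilon/\nu)\bigr].
\]
The right-hand side is nonincreasing in the sub-exponential parameter, so replacing $\nu_{t,p}$ by $\nu$ is legitimate.

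\textbf{Step 2: union bound.} Summing over the $M$ pairs gives
\[
\Pr\Bigl(\max_{t,p}|\hat m_{t,p}-m_{t,p}|\ge\varepsilon\Bigr)\le 2M\exp\!\bigl[-cK\min(\varepsilon^2/\nu^2,\varepsilon/\nu)\bigr].
\]

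\textbf{Step 3: inversion.} Set $L:=\log(2M/\delta)$. We need to choose $\varepsilon$ so that $cK\min(\varepsilon^2/\nu^2,\varepsilon/\nu)\ge L$. Take
\[
\varepsilon=\nu\max\!\left(\sqrt{L/(cK)},\,L/(cK)\right),
\]
and write $u=\varepsilon/\nu$.
\begin{itemize}
\item If $u\ge1$, then $\min(u^2,u)=u\ge L/(cK)$.
\item If $u<1$, then $\min(u^2,u)=u^2\ge L/(cK)$.
\end{itemize}
In either case the bound in Step 2 is at most $\delta$.

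\textbf{Step 4: constants.} Finally, absorb $c$ into a universal constant. Since $\max(\sqrt{a/c},a/c)\le\max(c^{-1/2},c^{-1})\max(\sqrt a,a)$, we may take $C=\max(c^{-1/2},c^{-1})$, which yields the claimed bound.

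The only delicate point is this inversion of the two-regime Bernstein tail, specifically checking that the same $\max$ expression works in both regimes. The case split in Step 3 handles it.
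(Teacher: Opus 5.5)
Your proposal is correct and follows the paper's proof: apply Lemma~\ref{lem:mc_concentration} to each pair $(t,p)$ at failure level $\delta/M$, take a union bound over the $M$ pairs, and invert the two-regime tail; your case split and explicit constant $C=\max(c^{-1/2},c^{-1})$ spell out what the paper leaves implicit. One small slip: the tail bound is nondecreasing (not nonincreasing) in the sub-exponential parameter, and that is exactly why substituting the larger $\nu$ is valid (or you can simply apply the lemma with $\nu_{t,p}:=\nu$, since its hypothesis only requires an upper bound).
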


\begin{proof}
Apply Lemma~\ref{lem:mc_concentration} to each $(t,p)$ with failure probability $\delta/M$:
\[
\Pr\!\left(|\hat{m}_{t,p}-m_{t,p}|\ge \varepsilon\right)
\le \frac{\delta}{M},
\]
where choosing $\varepsilon = C\nu \max\!\left(\sqrt{\frac{\log(2M/\delta)}{K}},\frac{\log(2M/\delta)}{K}\right)$
makes the RHS at most $\delta/M$ by the lemma's exponential tail.
Taking a union bound over all $M$ pairs $(t,p)$ yields the claim.
\end{proof}

This shows that we only need $K \gtrsim \log(|\Omega||\mathcal{T}|)$ paths to get uniform control over all pixels and times. This is the standard ``log dimension'' effect in high-dimensional statistics: the sample complexity scales logarithmically with the ambient dimension, making the method practical even for high-resolution images.

\clearpage
\newpage

\section{Toy examples and Analysis}
\label{app:toy}
\subsection{Toy example 1: visualization of how increasing test-time compute affects anomaly scores}
\label{app:toy_budget}
To illustrate the behavior of flow mismatching in a geometry-constrained setting, we use a $2$D half-moon toy example. The normal data distribution is supported on the upper semicircle with center \(c=(0,3)\) and radius \(r=5\), i.e., \(\{(x_1,x_2):(x_1-c_1)^2+(x_2-c_2)^2=r^2,\ x_2\ge c_2\}\). The source distribution is a two-dimensional standard Gaussian. We evaluate scores on a rectangular grid \(x_1\in[-8,8],\ x_2\in[-3,14]\), train a time-conditioned MLP flow-matching model, and compute anomaly scores for each grid point \(p\). Path-wise residuals are aggregated using the minimum across Gaussian draws (Eq.~\ref{eq:path_agg}), then aggregated over time using Eq.~\ref{eq:heatmap} with \(w(t)=t^2\).

Figure~\ref{fig:complete_toy} focuses on sensitivity to \((T,K)\) in the half-moon setting. Across panels, anomaly scores remain low near the normal manifold and increase away from it, showing that the recovered score geometry is consistent with the underlying support. Increasing temporal resolution \(T\) reduces discretization noise and yields smoother maps. Increasing the number of Gaussian draws \(K\) stabilizes the path aggregation and, under the minimum operator in Eq.~\ref{eq:path_agg}, produces a more conservative estimate with broader low-score regions near normal-support geometry.

\begin{figure}[h]
\centering
\includegraphics[width=1.0\columnwidth]{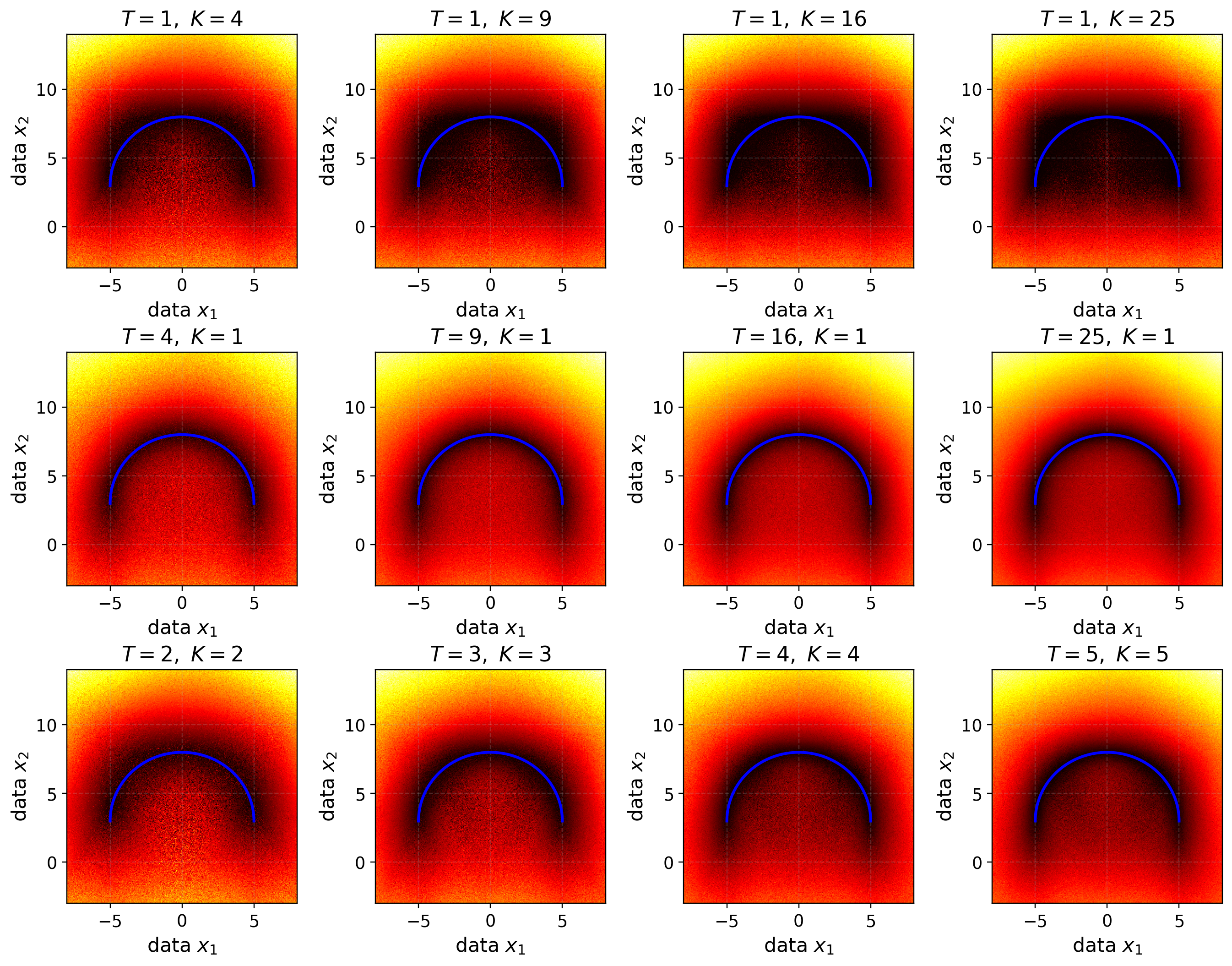}
\caption{Half-moon toy analysis: anomaly score maps under different \((T,K)\) settings. Larger \(T\) and \(K\) reduce sampling artifacts and produce smoother, more stable score maps.}
\label{fig:complete_toy}
\end{figure}

\subsection{Toy example 2: visualization of the time weighting effect on anomaly scores.}
\label{app:toy_timeweight}

In this toy example, we show the time weighting behaviors on the geometry-constrained setting. The normal data distribution is supported on the upper semicircle, while the source distribution is a two-dimensional standard Gaussian. We train a time-conditioned MLP flow-matching model and evaluate mismatch score maps on a dense grid. For each grid point \(p\), we compute path-wise residuals across Gaussian draws and aggregate them using the minimum across paths (Eq.~\ref{eq:path_agg}), then aggregate over time using Eq.~\ref{eq:heatmap} with \(w(t)=t^2\).

Figure~\ref{fig:toy2} visualizes per-time mismatch maps and their weighted combination. At small \(t\), interpolation states are dominated by source noise, so the map shows stronger global bias and weaker geometric localization. At larger \(t\), localization around the half-moon manifold becomes sharper, but estimates are also more variable. The weighted combination with \(w(t)=t^2\) balances these effects by emphasizing informative later times while still integrating over the full trajectory, producing a smoother and more stable final score map.

\begin{figure}[h]
  \centering
  \includegraphics[width=1.0\columnwidth]{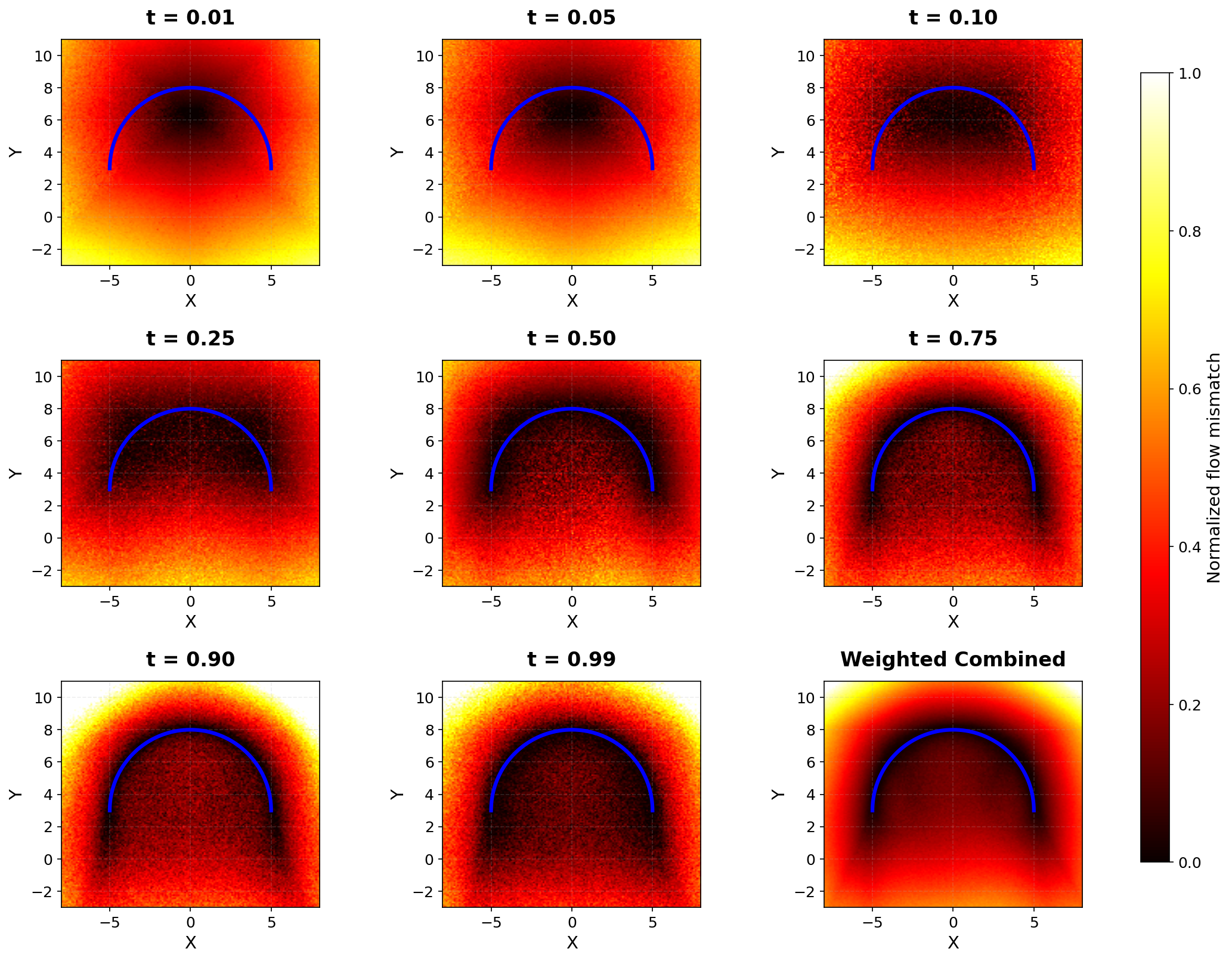}
  \caption{Toy example 2: time-dependent flow mismatch score maps and weighted aggregation. }
  \label{fig:toy2}
\end{figure}

\subsection{Toy example 3: visualization of flow mismatching under a pooled categories setting.}

In this toy example, we further illustrate how Flow Mismatching behaves when normal data are pooled from several categories. In high-dimensional spaces, normal samples are rarely unimodal. Instead, they form multiple clusters corresponding to different normal modes. A flow-matching model trained on pooled normal data learns a velocity field over the entire space, and from a given state it tends to move toward a nearby plausible normal cluster.

This pooled-category structure does not invalidate the mismatch signal because each test path is anchored at the same endpoint $y$. For a normal target, $y$ belongs to one of the valid normal clusters. Although early-time mismatch may be affected by interference among clusters, the mismatch decreases as the path approaches the target cluster, where the learned flow velocity becomes more consistent with the affine transport velocity. The $t^2$ weighting further suppresses early-time discrepancies, yielding a stable aggregated score.

Path aggregation provides an additional stabilizing effect. Since different seeds $x_0$ may induce different transient trajectories, aggregating over multiple paths reduces sensitivity to any individual seed and preserves mismatch patterns that are consistent across paths. This agrees with the empirical trend that increasing path and time scaling improves stability.

For visualization, we consider a toy example with $3$ normal clusters and compare affine paths from $(0,0)$, $(-2,-4)$, and $(2,0)$ to both a normal target and an anomalous target. We plot the mismatch scores before and after applying the $t^2$ weighting along each path. The results show that for anomalous targets outside the pooled normal support, the learned flow velocity tends to point toward nearby normal clusters, whereas the affine transport velocity points directly toward the target. This disagreement produces a large flow mismatch. In contrast, for normal targets, the two velocities become increasingly aligned as the path approaches the corresponding normal cluster.

\begin{figure}[h]
  \centering
  \includegraphics[width=1.0\columnwidth]{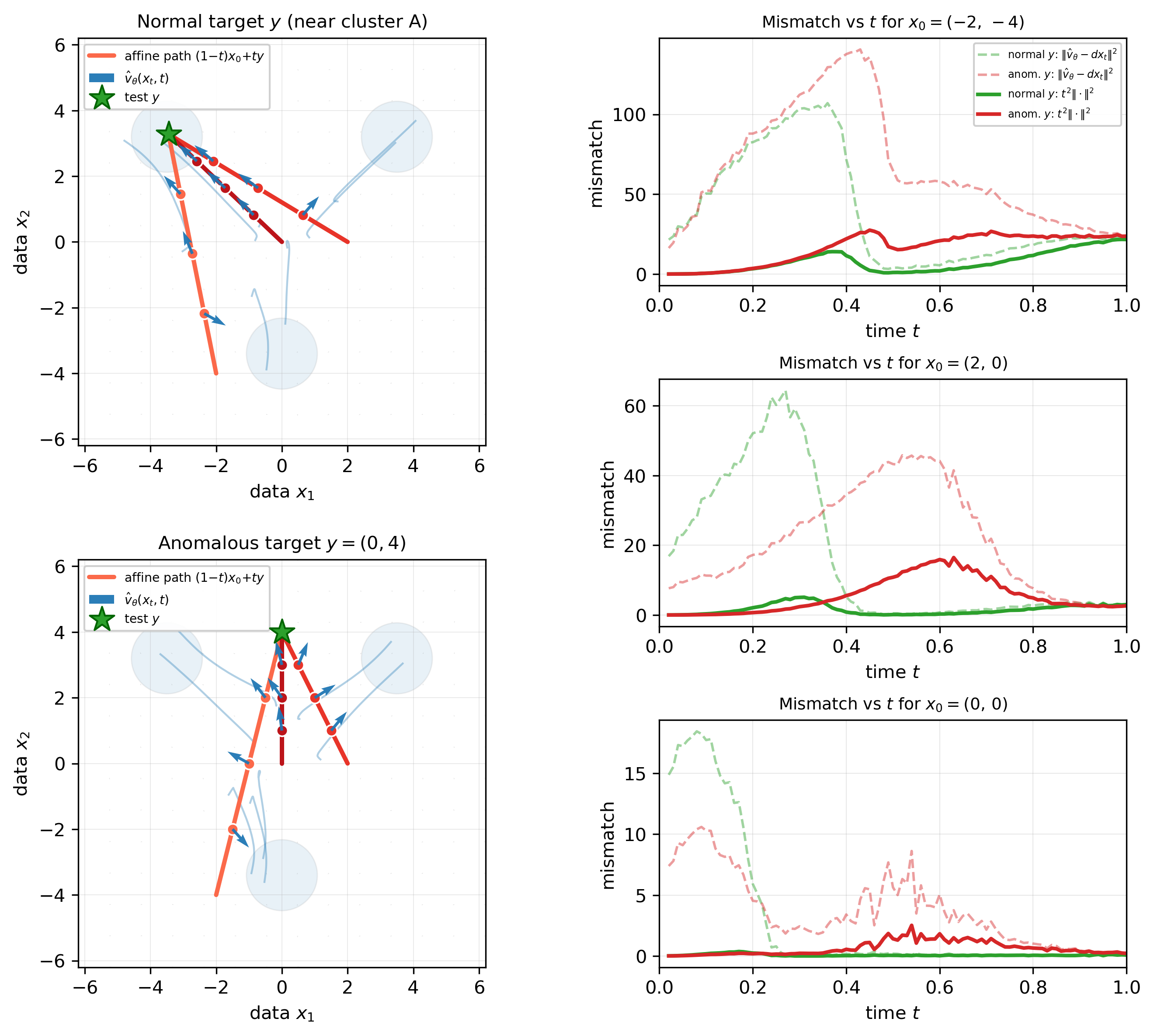}
  \caption{Toy example 3: flow mismatching under a pooled category case.}
  \label{fig:toy3}
\end{figure}

\clearpage
\newpage
\section{Implementation Details}
\label{app:detail}

\subsection{Flow Matching U-Net Architecture}
\label{app:unet_architecture}
We parameterize the velocity field $v_\theta(x_t,t)$ using a time-conditioned attention U-Net~\citep{lipman2024flowmatchingguidecode}. The model takes a 3-channel image as input and predicts a 3-channel velocity field at the same spatial resolution. By default, we use 64 base channels, 3 residual blocks per level, channel multipliers $(1,1,2,2,4,4)$, attention downsample rates $(4,8,16,32)$, 32 channels per attention head, and dropout rate $0.05$. The network uses residual up/downsampling blocks, scale-shift normalization, and the new QKV attention ordering. For $128\times128$ inputs, this configuration produces feature maps with channels $64,64,128,128,256,256$ at resolutions $128,64,32,16,8,4$, respectively. The attention resolutions are specified as downsample rates rather than absolute spatial sizes, so self-attention is applied at the 32$\times$32, 16$\times$16, 8$\times$8, and 4$\times$4 feature maps. The middle block additionally includes one attention layer at the bottleneck.

Each residual block consists of group normalization, SiLU activation,
$3\times3$ convolutions, timestep-conditioned scale-shift modulation, and
dropout. The scalar time $t$ is embedded using sinusoidal features and passed
through a two-layer MLP of dimension $64 \rightarrow 256 \rightarrow 256$.
The resulting embedding conditions every residual block. Attention uses a fixed
head width of 32 channels, giving 4 heads at 128 channels and 8 heads at
256 channels. The encoder contains six resolution stages, each with three residual blocks,
and downsamples the input to a 4$\times$4 bottleneck. The decoder mirrors the
encoder and concatenates the corresponding encoder activations through U-Net
skip connections. A final group normalization, SiLU activation, and
zero-initialized $3\times3$ convolution produce the predicted velocity
$v_\theta(x_t,t)$. We use this relatively lightweight backbone because Flow Mismatching evaluates
the velocity model multiple times per image across sampled paths and time
steps. The 64-channel U-Net provides a favorable speed--accuracy tradeoff while
retaining multi-scale semantic context through low-resolution attention.

We train the flow model using the standard flow-matching regression objective with affine interpolants $x_t=(1-t)x_0+t x_1$ and target velocity $x_1-x_0$. The network is optimized with StableAdamW using AMSGrad, learning rate $10^{-4}$, weight decay $10^{-4}$, betas $(0.9,0.99)$, and $\epsilon=10^{-6}$. We use update RMS clipping with threshold $1.0$; additional gradient-norm clipping is disabled by default. Training uses mixed precision with bfloat16, TF32 acceleration, batch size $32$, image resolution $128\times128$, and gradient accumulation step $1$. The default training runs for $2000$ epochs with distributed data parallelism, using class-conditional labels when available.

\subsection{Details for the compared Flow Matching-based methods}

We group flow-matching (FM) baselines into: (i) \emph{training-modified} methods that change the FM objective, and (ii) \emph{inference-only} methods that keep vanilla FM training but alter test-time inference. For fairness, all methods use the same backbone architecture and optimizer as our default setting, and no extra feature extractor is applied.  For the three inference-only methods D-Flow, ReconFlow, and Flow Mismatching, they are evaluated using the same trained flow model. Per-class metrics comparison is shown in Appendix~\ref{app:table}.

\paragraph{Reflect (training-modified).}
Reflect~\citep{beizaee2025reflect} trains a rectified-flow that maps a \emph{synthetically corrupted} (pseudo-abnormal) sample back to its clean normal counterpart.
Given a normal latent $y_1$ and a binary mask $m\in\{0,1\}^{H\times W}$, Reflect forms a corrupted sample
\begin{equation}
y_0 \;=\; \big(\sqrt{1-\alpha}\,y_1 + \sqrt{\alpha}\,r\big)\odot m \;+\; y_1\odot (1-m),
\qquad \alpha \sim \mathcal{U}[0,1],
\end{equation}
where $r$ is a replacement vector,
\begin{equation}
r \;=\; \sqrt{\beta}\, q \;+\; \sqrt{1-\beta}\, p,
\label{eq:reflect_replacement}
\end{equation}
where \(q \in \mathbb{R} \sim \mathcal{N}(0,1)\) is a scalar, \(p \in \mathbb{R}^{H\times W} \sim \mathcal{N}(0,\mathbf{I})\) is spatial i.i.d.\ Gaussian noise, and \(\beta \sim \mathcal{U}[0,1]\). All channels are sampled independently. To synthesize realistic, irregular anomaly shapes, Reflect generates up to \(N\) non-overlapping binary masks \(\{m_i\}_{i=1}^N\).
Each mask \(m_i \in \{0,1\}^{H\times W}\) is produced by a 2D random walk: starting from a uniformly sampled seed pixel, take a random number of steps, where at each step the walk moves to one of the (4- or 8-connected) neighboring pixels with equal probability.
The masked region corresponds to the set of visited pixels during the walk; multiple regions are enforced to be non-overlapping.
We followed the same mask generation strategy as their official implementation.

Training follows standard rectified FM with linear interpolants $y_t=(1-t)y_0+ty_1$ and constant target velocity, with the pair $(y_0, y_1)$.
\begin{equation}
\min_{\theta}\; \mathbb{E}_{t\sim \mathcal{U}[0,1]}\Big[\;\big\|\, (y_1-y_0) - v_{\theta}(y_t,t)\,\big\|_2^2\;\Big].
\end{equation}
At test time, given the test image $y^0_{test}$, Reflect used a one-step generation to get the ``anomaly-free'' reconstruction,
\begin{equation}
    y^1_{test} = y^0_{test} -  v_{\theta}(y^0_{test} ,0),
\end{equation}
and therefore we used $\|-v_{\theta}(y^0_{test} ,0)(p)\|_2$ as the anomaly heatmap.

\paragraph{TCCM (training-modified).}
TCCM~\citep{li2025scalable} learns a \emph{time-conditioned contraction} vector field that contracts normal samples toward the origin.
Instead of transporting Gaussian noise to data, the velocity is supervised toward the contraction direction:
\begin{equation}
v_{\theta}(x,t)\;\approx\;-x,
\end{equation}
for $t$ across all time. Following the paper, we train a time-conditioned contraction field by minimizing
\begin{equation}
\min_{\theta}\; \mathbb{E}_{x\sim p_{\mathrm{data}},\, t\sim \mathcal{U}(0,1)}
\Big[\; \big\|\, v_{\theta}\big(x,t\big) + x \,\big\|_{2} \;\Big].
\label{eq:tcmm_objective}
\end{equation}

At test time with test image $y$, anomaly heatmap is computed from the contraction residual at a fixed time (default $t=1$):
\begin{equation}
\;\big\|\, v_{\theta}(y,t) + y \,\big\|_2,
\qquad t_{\text{fixed}}=1.
\end{equation}

Remarkably, TCCM is not initially developed for visual anomaly localization. We also found the training objective is very likely to yield an overfitting model that outputs exactly $-y$ for even an anomalous sample $y$, thus giving a trivial heatmap that has no effect. To avoid this issue, we only trained $50$ epochs on the whole dataset for TCCM.

\paragraph{WT-Flow (training-modified).}
WT-Flow~\citep{li2025taming} trains a \emph{reverse} rectified-flow that transports normal samples toward a standard Gaussian and detects anomalies via a Gaussian density proxy at the transported endpoint. In the original method, the flow is learned in a frozen WRN-50 feature space; in our implementation, we remove the feature extractor and apply WT-Flow directly in the \emph{image domain}.

Given an input image (or tensor) $x\in\mathbb{R}^{C\times H\times W}$, WT-Flow applies the non-learnable affine normalization
\begin{equation}
\mathrm{WT}(x) \;=\; \gamma x + \beta,
\qquad
\gamma=\frac{1}{\sqrt{\sigma^2+\epsilon}},\;\;
\beta=-\frac{\mu}{\sqrt{\sigma^2+\epsilon}},
\label{eq:wtflow_wtmap_img}
\end{equation}
where $\mu,\sigma$ denote the (per-sample) mean and standard deviation of $x$, and $\epsilon$ is a small constant. This corresponds to standardizing each sample before transport.

Let $x_0$ be a normal training image and $x_0''=\mathrm{WT}(x_0)$ its WT-mapped version. $x_1\sim\mathcal{N}(0,\mathbf{I})$ with the same shape as $x_0''$, draw $t\sim\mathcal{U}[0,1]$, and define the linear interpolant
\begin{equation}
x_t \;=\; (1-t)x_0'' + tx_1.
\end{equation}
We train a time-dependent velocity field $v_\theta$ using the rectified-flow regression loss following their paper.
\begin{equation}
\min_\theta\; \mathbb{E}_{t\sim \mathcal{U}[0,1]}\Big[\;\big\|\, (x_1 - x_0'') - v_\theta(x_t,t)\,\big\|_2^2\;\Big].
\label{eq:wtflow_train_img}
\end{equation}

Given a test image $y$, we first compute $y''=\mathrm{WT}(y)$ and then integrate the learned flow ODE from $t=0$ to $1$ (``backward sampling'') starting at $y''$,
\begin{equation}
\hat x_1 \;=\; \mathrm{ODE}(y'', v_\theta).
\label{eq:wtflow_ode_img}
\end{equation}
then form a pixel-wise Gaussian density proxy from the transported endpoint. Concretely, the Gaussian energy is used as the anomaly heatmap:
\begin{equation}
E(p) \;=\; \frac{1}{2}\sum_{c=1}^{C} \hat x_1(p,c)^2,
\label{eq:wtflow_energy_img}
\end{equation}
which is proportional to $-\log p(\hat x_1(p))$ under a standard Gaussian.

\paragraph{D-Flow accelerated by FlowGrad (inference-only).}
D-Flow~\citep{ben2024d} performs \emph{test-time optimization} over the flow's initial condition (or latent) to reconstruct a given observation.
Let $\Phi_{\theta}$ denote the flow/ODE solution map from $x_0$ to $x_1$ (endpoint after integration). D-Flow solves:
\begin{equation}
\hat{x}_0=\arg\min_{x_0}\; \mathcal{L}\big(\Phi_{\theta}(x_0),\,y\big),
\end{equation}
where $\mathcal{L}$ is a guidance loss (we use MSE), $y$ is the given test image.
We run gradient-based optimization to obtain $\hat{x}_1=\Phi_{\theta}(\hat{x}_0)$ and use reconstruction residuals as the anomaly heatmap,
\begin{equation}
h(i)\;=\;\big\|\,y(i)-\hat{x}_1(i)\,\big\|_2.
\end{equation}
Vanilla D-Flow is very slow in solving the optimization problem. To reduce the cost of backpropagating through ODE integration, we adopt FlowGrad~\citep{liu2023flowgrad}, which provides an efficient gradient computation scheme for guided flow sampling and supports fewer/selected discretization steps when the trajectory is locally straight.

\paragraph{ReconFlow (ours, inference-only reconstruction baseline).}
ReconFlow is a reconstruction-style counterpart of our method. it keeps \emph{vanilla} FM training (same objective and backbone as our model) and only changes test-time inference to output an explicit reconstruction $\hat{y}$ of the test image. In specific,
\begin{equation}
    y_t = (1-t_{\text{init}})x_0 + t_{\text{init}}y, \quad x_0\sim\mathcal{N}(0,\mathbf{I}), \quad \hat{y}=\Phi_\theta(y_t, t_{\text{init}}).
\end{equation}

We then define the pixel-level heatmap:
\begin{equation}
h(i)\;=\;\big\|\,y(i)-\hat{y}(i)\,\big\|_2.
\end{equation}

For fairness and consistency with Flow Mismatching, we set $t_{\text{init}} = 0.5$, and also sample $K=10$ noise $x_0$, and the integration steps $T=10$ in $\hat{y}=\Phi_\theta(y_t,t_{\text{init}})$. The final heatmap takes minimum over the $K=10$ reconstructed image.

\clearpage
\newpage

\section{Additional Experiments and Discussion}
\label{app:ablation}

\subsection{Training epochs.}
In our default setting, we train the flow matching model for 2000 epochs. This choice is intended to ensure that the model sufficiently learns the normal-data dynamics, so that the learned velocity field is a faithful approximation of the normal generative flow assumed in our theory. To investigate whether longer training improves this approximation or instead causes overfitting to the normal training set, we evaluate checkpoints from epoch 100 to epoch 2000 with the same inference setting. The results in Figure~\ref{fig:ablation-epoch} show a clear convergence pattern. At early epochs, the learned velocity field is still under-trained, leading to noticeably weaker image-level detection and pixel-level localization. As training proceeds, all three mean metrics improve, indicating that a better normal-flow estimate produces a stronger and more reliable velocity-mismatch signal. We do not observe the typical signature of harmful overfitting, namely a late-epoch drop in anomaly detection or localization performance. Instead, the 2000 epoch checkpoint lies on the stable high-performance plateau, supporting its use as the default model. See detailed results in Table~\ref{tab:mvtec_epoch_ablation_full}.

\begin{figure}[h]
    \centering
    \includegraphics[width=1.0\linewidth]{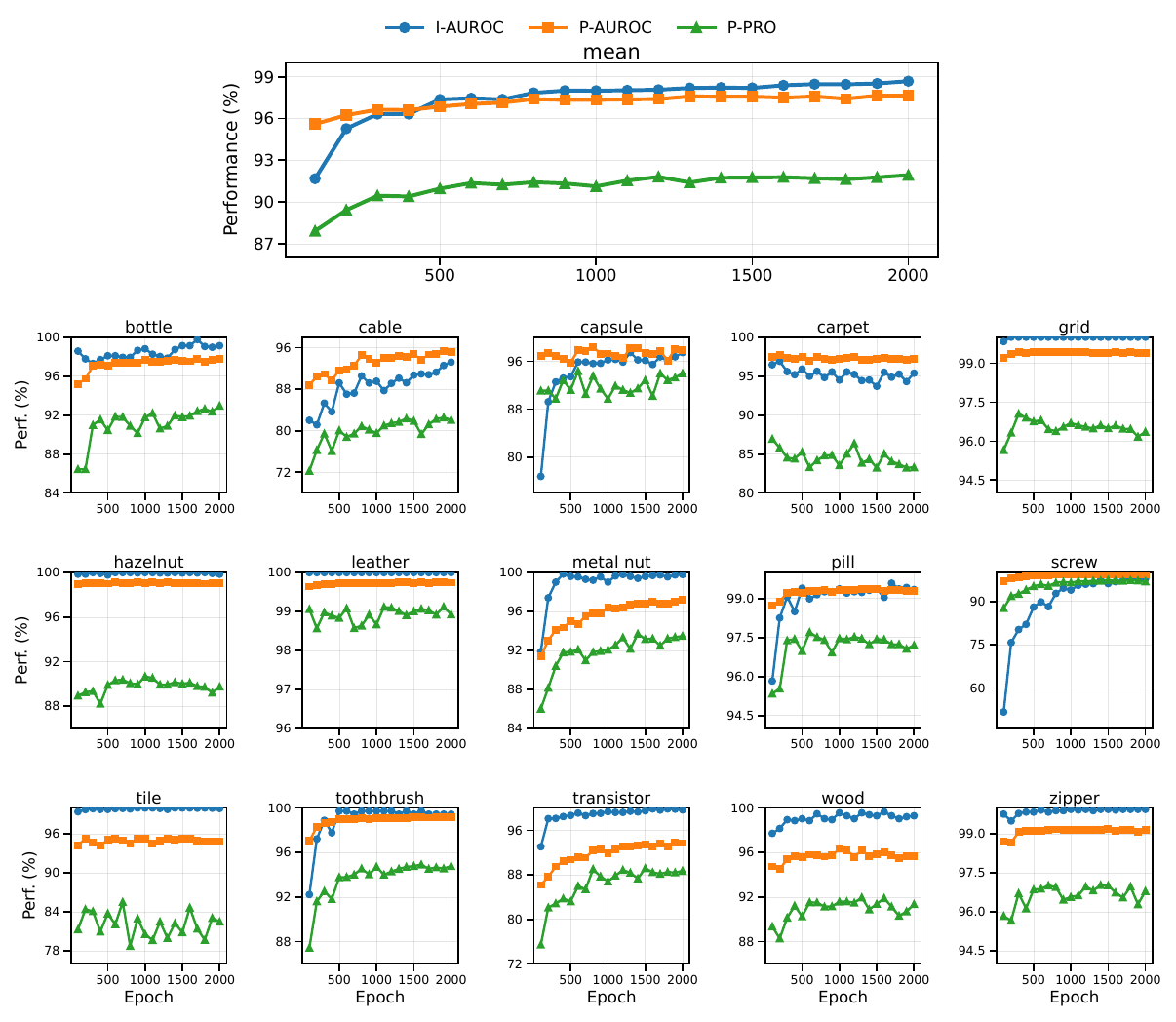}
    \caption{Training-epoch ablation on MVTec-AD.
We evaluate checkpoints from epoch 100 to epoch 2000 under a fixed inference budget.
The top panel shows the mean performance, while the remaining panels show per-category trends.
I-AUROC, P-AUROC, and P-PRO improve rapidly at early epochs and then plateau, suggesting that longer training yields a more reliable normal-flow estimate without evident overfitting.}
    \label{fig:ablation-epoch}
\end{figure}

\subsection{Failure Case Analysis}
The main failure cases of Flow Mismatching occur on large object-level anomalies. As shown in the qualitative results, our method is effective at localizing small defects and fine-grained appearance changes. However, when the anomaly occupies a large semantic region, Flow Mismatching may fail to cover the entire anomalous area and instead highlight only its boundaries. See Figure~\ref{fig:fail}.

A possible explanation is that large object-level anomalies are often caused by semantic or logical discrepancies rather than purely local appearance changes. Since our current flow matching model is evaluated directly in pixel space, its representation capacity is biased toward local texture and boundary-level differences, while its ability to capture high-level semantic inconsistency is limited. This suggests that incorporating stronger semantic representations, more advanced architectures, or latent-space flow formulations may further improve performance on large structural anomalies.

\begin{figure}[ht]
    \centering
    \includegraphics[width=1.0\linewidth]{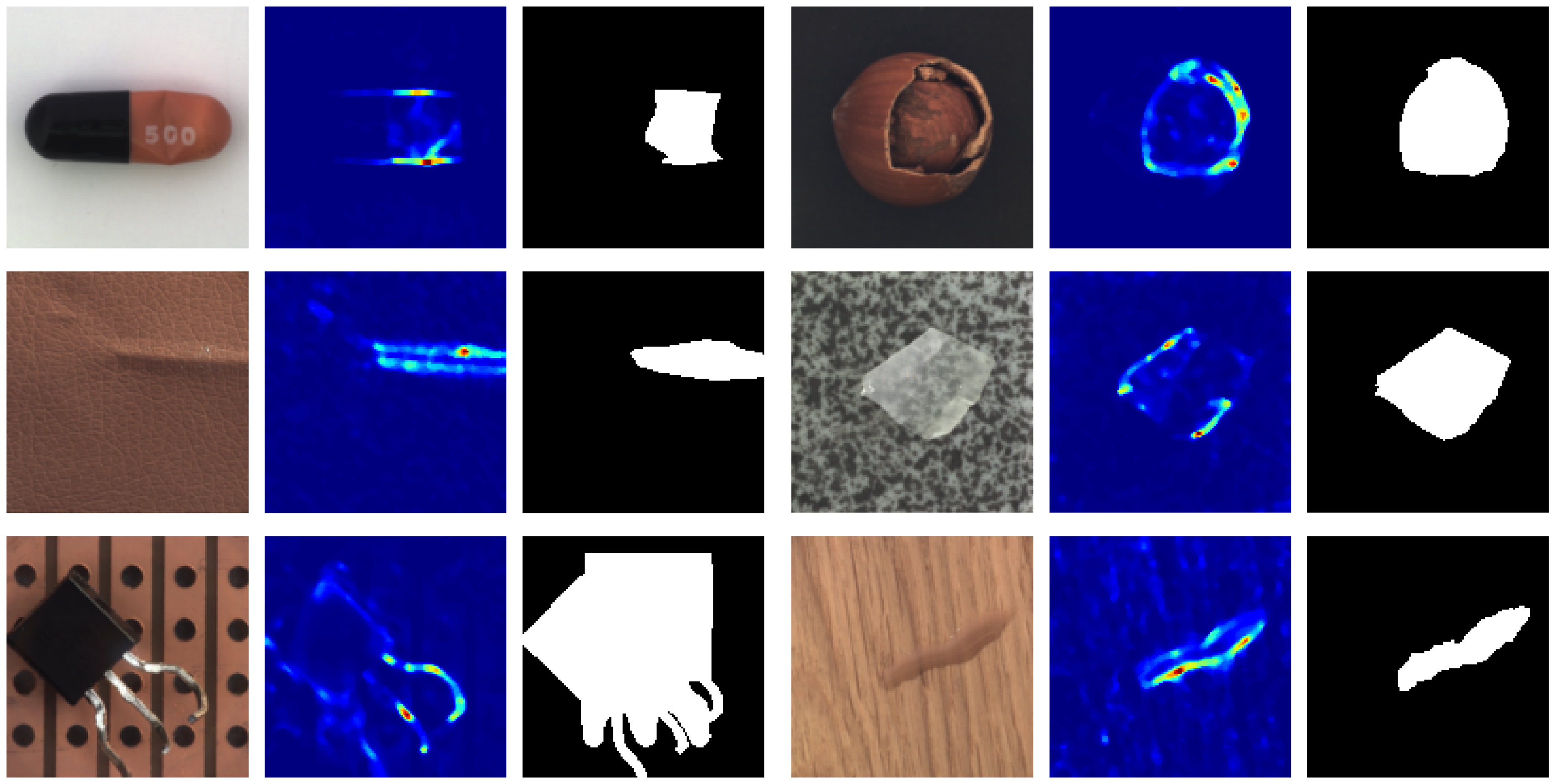}
    \caption{Failure cases of our method.}
    \label{fig:fail}
\end{figure}

\subsection{Limitation.}

In our perspective, there are three main limitations of the current work.

First, Flow Mismatching requires $O(KT)$ forward evaluations per test image, where $K$ is the number of sampled paths and $T$ is the number of time steps. Thus, its throughput depends directly on the chosen test-time compute budget. To make this trade-off explicit, we report performance across different $(K,T)$ settings, showing how accuracy improves with additional path/time evaluations and enabling practitioners to choose an appropriate operating point depending on their latency and accuracy requirements. Also, in Table~\ref{tab:efficiency_comparison} we show that our method's FPS is within a reasonable range compared with other SOTAs.

Second, our theoretical interpretation also assumes that the learned flow is a good approximation of the oracle normal flow. If the model is under-trained, misspecified, or insufficiently expressive, the resulting score gap may no longer cleanly reflect the population-level Fisher-divergence signal. Empirically, we partially address this issue by reporting epoch--performance curves, which show how detection performance evolves as training progresses. From the results, training successfully recovers the target normal-path distribution. Moreover, recent universal approximation theoretical guarantee~\cite{zhou2025error} also supports that flow matching model itself is capable of approximating the oracle velocity field within small enough error.

Finally, our experiments focus on industrial anomaly benchmarks at moderate image resolutions. Scaling Flow Mismatching to substantially larger images may increase both memory usage and inference cost, especially because the method evaluates multiple paths and time steps. In ongoing work, we are exploring more advanced architectures and latent-space flow formulations, which can reduce the computational burden and make the approach more practical for higher-resolution settings and faster inference.

\clearpage
\newpage
\section{Detailed Tables and Figures in Experiments}
\label{app:table}

\begin{table*}[h]
\centering
\caption{
Efficiency comparison of industrial anomaly detection baselines.
Params and FLOPs are collected from literature-reported or benchmark-reported sources and may not be strictly comparable across methods due to differences in backbone inclusion, image resolution, FLOP-counting conventions, hardware, and batch size.
FPS is either self-implemented, reported,  or converted from reported inference time when available.
}
\label{tab:efficiency_comparison}
\resizebox{0.85\textwidth}{!}{
\begin{tabular}{lccc}
\toprule
Method & Params & FLOPs & FPS \\
\midrule
CFLOW-AD~\citep{gudovskiy2022cflow}
& 237M & 28.7G & 30.8  \\

RD4AD~\citep{deng2022anomaly}
& 80.6M & 28.4G & 4.82 \\

UniAD~\citep{you2022unified}
& 24.5M & 3.4G--3.6G & 5.29 \\

SimpleNet~\citep{liu2023simplenet}
& 72.8M & 16.1G--17.7G & 4.93  \\

DeSTSeg~\citep{zhang2023destseg}
& 35.2M & 30.7G--122.7G & 90.8 \\

HVQ-Trans~\citep{lu2023hierarchical}
& 18.0M$^\dagger$ & 7.38G$^\dagger$ & 5.57 \\

DiAD~\citep{he2024diffusion}
& 1331M & 451.5G--$>$2.2T & 1.64  \\

MambaAD~\citep{he2024mambaad}
& 25.7M & 8.3G & 50.7  \\

ViTAD~\citep{vitad}
& 39.0M & 9.7G & 115.0  \\

AMDM / OmiAD~\citep{feng2025omiad}
& -- & -- & 39.37  \\

D-Flow~\citep{ben2024d} & -- & -- & 0.07 \\

Flow Mismatching ($K=T=5$) & 45.3M & 25.3G & 15.92 \\

Flow Mismatching (\textit{default}) & 45.3M & 25.3G & 3.87 \\
\bottomrule
\end{tabular}
}
\begin{flushleft}
\footnotesize
$^\dagger$ For HVQ-Trans, reported parameters/FLOPs exclude the backbone in the available source. For AMDM/OmiAD, only FPS was found from the accessible source; Params and FLOPs were not reported. D-Flow is an inference only method, and implemented with the same model as ours, so only the model FPS is reported.
\end{flushleft}
\end{table*}

\newpage

\begin{table*}[h]
  \centering
  \caption{Per-class image-level comparison of flow-matching-based methods on MVTec-AD. Each entry reports AUROC/AP/$F_1$-max. Best values are shown in \textbf{bold}.}
  \label{tab:mvtec_image_level_refined}
  \setlength{\tabcolsep}{4.0pt}
  \renewcommand{\arraystretch}{1.10}
  \scriptsize
  \resizebox{\textwidth}{!}{
  \begin{tabular}{@{}lcccccc@{}}
    \toprule
    & \multicolumn{3}{c}{\textit{Training-modified FM}} & \multicolumn{3}{c}{\textit{Inference-only FM}} \\
    \cmidrule(lr){2-4}\cmidrule(l){5-7}
    \textbf{Category} & \makecell{Reflect\\MICCAI'25} & \makecell{TCMM\\NeurIPS'25} & WT-Flow & \makecell{D-Flow\\ICML'24} & ReconFlow & \cellcolor{oursblue!12}\textbf{Ours} \\
    \midrule
    \multicolumn{7}{@{}l}{\textit{Image-level metrics: AUROC/AP/$F_1$-max}} \\
    \midrule
    \rowcolors{2}{white}{gray!5}
    bottle       & 60.2/86.4/86.3 & 28.3/66.9/86.3 & 84.4/95.4/87.0 & 92.5/97.8/92.4 & 98.3/99.5/\textbf{98.4} & \cellcolor{oursblue!12}\textbf{99.1}/\textbf{99.7}/97.6 \\
    cable        & 67.8/78.8/76.0 & 36.8/57.9/76.0 & 62.6/73.7/76.5 & 67.8/73.2/78.1 & 85.9/91.0/83.7 & \cellcolor{oursblue!12}\textbf{93.2}/\textbf{96.6}/\textbf{90.8} \\
    capsule      & 36.5/80.2/90.5 & 49.2/84.1/90.5 & 95.9/99.1/96.0 & 70.0/91.6/91.6 & 80.9/95.2/91.6 & \cellcolor{oursblue!12}\textbf{97.4}/\textbf{99.5}/\textbf{96.4} \\
    carpet       & 55.8/83.1/86.4 & 25.3/67.3/86.4 & 80.1/92.1/90.8 & 57.1/83.5/86.4 & 80.7/93.8/87.9 & \cellcolor{oursblue!12}\textbf{95.4}/\textbf{98.7}/\textbf{93.0} \\
    grid         & 59.8/81.9/84.6 & 67.3/86.3/84.8 & 99.7/99.9/99.1 & 70.9/83.8/85.9 & \textbf{100}/\textbf{100}/\textbf{100} & \cellcolor{oursblue!12}\textbf{100}/\textbf{100}/\textbf{100} \\
    hazelnut     & 84.4/91.8/82.4 & 92.9/96.4/89.5 & 98.1/98.3/97.2 & 89.8/94.2/87.1 & 99.4/99.7/97.8 & \cellcolor{oursblue!12}\textbf{99.9}/\textbf{99.9}/\textbf{99.3} \\
    leather      & 85.9/95.8/86.6 & 65.4/88.7/85.2 & 97.9/99.2/96.8 & 97.4/99.1/95.0 & 99.5/99.8/98.3 & \cellcolor{oursblue!12}\textbf{100}/\textbf{100}/\textbf{100} \\
    metal\_nut   & 52.4/87.0/89.4 & 49.8/79.5/90.3 & 89.0/97.4/91.8 & 62.0/85.6/89.9 & 94.3/98.6/94.6 & \cellcolor{oursblue!12}\textbf{99.8}/\textbf{100}/\textbf{99.5} \\
    pill         & 68.7/92.8/91.9 & 67.5/93.0/91.9 & 78.3/95.5/92.2 & 62.4/90.9/91.9 & 70.1/92.4/93.3 & \cellcolor{oursblue!12}\textbf{99.3}/\textbf{99.9}/\textbf{98.9} \\
    screw        & 78.0/91.2/87.1 & 8.4/55.3/85.3 & 60.2/82.7/85.3 & 44.6/71.8/85.9 & 92.6/96.9/94.0 & \cellcolor{oursblue!12}\textbf{97.8}/\textbf{99.2}/\textbf{96.7} \\
    tile         & 74.8/90.2/83.6 & 66.1/84.2/84.7 & 99.8/99.9/98.8 & 72.9/88.3/86.2 & 91.3/97.0/90.7 & \cellcolor{oursblue!12}\textbf{99.9}/\textbf{100}/\textbf{99.4} \\
    toothbrush   & 73.6/88.7/87.5 & 61.9/84.5/84.1 & 90.8/96.2/91.5 & 98.6/99.4/96.8 & \textbf{99.7}/\textbf{99.9}/\textbf{98.4} & \cellcolor{oursblue!12}99.4/99.8/\textbf{98.4} \\
    transistor   & 64.3/67.6/59.1 & 41.2/42.3/57.8 & 75.9/68.9/69.3 & 81.5/77.8/74.5 & 99.2/98.9/96.2 & \cellcolor{oursblue!12}\textbf{99.7}/\textbf{99.6}/\textbf{97.5} \\
    wood         & 85.1/96.0/89.5 & 96.4/99.0/94.9 & 99.1/99.7/98.3 & 94.7/98.3/94.3 & 96.8/99.0/94.9 & \cellcolor{oursblue!12}\textbf{99.3}/\textbf{99.8}/\textbf{98.4} \\
    zipper       & 48.9/83.1/88.1 & 47.6/80.9/88.1 & 62.8/86.8/90.2 & 67.1/86.0/88.8 & 87.7/96.0/95.2 & \cellcolor{oursblue!12}\textbf{99.9}/\textbf{100}/\textbf{99.6} \\
    \midrule
    \rowcolor{gray!12}
    \textbf{Mean} & 66.4/86.3/84.6 & 53.6/77.7/85.1 & 85.0/92.3/90.7 & 75.3/88.1/88.3 & 91.8/97.2/94.3 & \cellcolor{oursblue!12}\textbf{98.7}/\textbf{99.5}/\textbf{97.7} \\
    \bottomrule
  \end{tabular}
  }
\end{table*}

\begin{table*}[h]
  \centering
  \caption{Per-class pixel-level comparison of flow-matching-based methods on MVTec-AD. Each entry reports AUROC/AP/$F_1$-max/AUPRO. Best values are shown in \textbf{bold}.}
  \label{tab:mvtec_pixel_level_refined}
  \setlength{\tabcolsep}{4.0pt}
  \renewcommand{\arraystretch}{1.10}
  \scriptsize
  \resizebox{\textwidth}{!}{
  \begin{tabular}{@{}lcccccc@{}}
    \toprule
    & \multicolumn{3}{c}{\textit{Training-modified FM}} & \multicolumn{3}{c}{\textit{Inference-only FM}} \\
    \cmidrule(lr){2-4}\cmidrule(l){5-7}
    \textbf{Category} & \makecell{Reflect\\MICCAI'25} & \makecell{TCMM\\NeurIPS'25} & WT-Flow & \makecell{D-Flow\\ICML'24} & ReconFlow & \cellcolor{oursblue!12}\textbf{Ours} \\
    \midrule
    \multicolumn{7}{@{}l}{\textit{Pixel-level metrics: AUROC/AP/$F_1$-max/AUPRO}} \\
    \midrule
    \rowcolors{2}{white}{gray!5}
    bottle       & 59.9/17.4/25.0/20.4 & 68.8/15.8/26.3/29.4 & 76.9/22.9/30.5/41.0 & 91.4/57.7/56.8/55.2 & 93.1/63.8/61.7/49.7 & \cellcolor{oursblue!12}\textbf{97.8}/\textbf{76.8}/\textbf{69.9}/\textbf{93.0} \\
    cable        & 51.1/13.6/20.2/26.0 & 52.9/9.9/16.9/25.4 & 61.3/16.6/23.4/35.8 & 85.3/27.2/35.6/57.7 & 85.6/39.3/43.1/59.5 & \cellcolor{oursblue!12}\textbf{95.1}/\textbf{51.8}/\textbf{51.3}/\textbf{82.1} \\
    capsule      & 40.5/6.3/9.2/23.6 & 83.3/16.1/23.2/71.8 & 95.9/\textbf{58.5}/\textbf{62.1}/90.2 & 92.0/34.0/38.9/70.4 & 94.6/43.1/48.1/81.6 & \cellcolor{oursblue!12}\textbf{97.9}/48.5/51.5/\textbf{94.0} \\
    carpet       & 70.2/16.3/21.6/39.6 & 55.6/9.7/14.4/30.8 & 90.9/61.2/61.5/80.7 & 79.3/21.1/26.2/52.3 & 87.9/41.3/45.6/71.6 & \cellcolor{oursblue!12}\textbf{97.2}/\textbf{70.0}/\textbf{67.6}/\textbf{83.3} \\
    grid         & 74.3/15.5/22.6/51.5 & 61.4/2.4/6.1/29.2 & 97.6/\textbf{56.6}/\textbf{60.2}/93.2 & 62.8/8.5/12.1/27.6 & 96.0/49.7/53.5/83.9 & \cellcolor{oursblue!12}\textbf{99.4}/55.7/55.1/\textbf{96.4} \\
    hazelnut     & 70.6/31.9/35.8/33.5 & 97.3/70.5/70.2/87.7 & 96.8/79.4/\textbf{79.1}/87.0 & 97.2/63.9/62.2/82.3 & 98.7/\textbf{79.9}/77.4/84.7 & \cellcolor{oursblue!12}\textbf{99.1}/69.2/70.1/\textbf{89.7} \\
    leather      & 79.0/41.1/45.7/55.3 & 79.6/40.5/45.7/65.5 & 97.1/\textbf{76.0}/\textbf{75.0}/91.1 & 98.4/68.9/69.8/94.3 & 98.9/66.2/67.2/92.4 & \cellcolor{oursblue!12}\textbf{99.7}/68.8/66.9/\textbf{98.9} \\
    metal\_nut   & 53.3/26.7/34.9/28.3 & 79.5/34.6/40.1/47.4 & 71.0/50.0/55.6/55.1 & 89.3/41.7/48.1/54.8 & 96.5/75.3/75.0/71.2 & \cellcolor{oursblue!12}\textbf{97.3}/\textbf{85.5}/\textbf{79.1}/\textbf{93.5} \\
    pill         & 58.4/19.9/24.2/37.7 & 88.1/25.1/31.0/63.3 & 79.5/31.7/36.2/56.5 & 95.7/40.0/46.6/75.9 & 96.4/45.7/51.4/83.7 & \cellcolor{oursblue!12}\textbf{99.3}/\textbf{84.2}/\textbf{77.3}/\textbf{97.2} \\
    screw        & 39.3/0.5/1.4/14.6 & 90.8/3.6/8.9/70.9 & 96.8/11.3/22.2/88.7 & 92.3/17.8/22.7/73.6 & 95.4/46.1/50.7/82.2 & \cellcolor{oursblue!12}\textbf{99.4}/\textbf{55.3}/\textbf{54.1}/\textbf{96.9} \\
    tile         & 75.8/41.8/46.2/41.4 & 60.1/31.9/37.0/40.4 & 91.0/73.6/\textbf{72.5}/81.6 & 68.7/27.7/34.4/40.7 & 76.4/45.6/47.5/51.2 & \cellcolor{oursblue!12}\textbf{94.8}/\textbf{74.4}/68.1/\textbf{82.5} \\
    toothbrush   & 68.3/13.8/18.7/34.2 & 88.4/14.2/21.1/62.3 & 83.1/29.0/38.4/65.5 & 97.9/60.9/61.0/85.1 & 98.8/62.0/63.1/86.6 & \cellcolor{oursblue!12}\textbf{99.2}/\textbf{63.5}/\textbf{65.4}/\textbf{94.8} \\
    transistor   & 44.6/15.7/24.1/15.6 & 50.2/14.5/22.5/25.2 & 61.6/24.1/30.3/34.4 & 85.3/34.7/38.5/58.5 & 87.1/53.5/55.4/61.0 & \cellcolor{oursblue!12}\textbf{93.7}/\textbf{58.1}/\textbf{56.1}/\textbf{88.7} \\
    wood         & 67.2/23.9/30.0/33.8 & 78.3/48.0/51.3/66.1 & 89.9/\textbf{71.3}/\textbf{70.5}/79.8 & 79.9/35.5/38.1/55.3 & 86.8/48.6/50.0/66.9 & \cellcolor{oursblue!12}\textbf{95.7}/70.9/65.4/\textbf{91.4} \\
    zipper       & 50.1/8.3/13.7/18.9 & 61.3/9.4/17.5/32.9 & 91.0/36.0/41.7/72.1 & 91.1/24.0/36.5/50.3 & 94.3/47.4/52.0/63.8 & \cellcolor{oursblue!12}\textbf{99.1}/\textbf{79.3}/\textbf{74.2}/\textbf{96.8} \\
    \midrule
    \rowcolor{gray!12}
    \textbf{Mean} & 60.2/19.5/24.9/31.6 & 73.0/23.1/28.8/49.9 & 85.4/46.5/50.6/70.2 & 87.1/37.6/41.8/62.3 & 92.4/53.8/56.1/72.7 & \cellcolor{oursblue!12}\textbf{97.7}/\textbf{67.5}/\textbf{64.8}/\textbf{91.9} \\
    \bottomrule
  \end{tabular}
  }
\end{table*}

\begin{table*}[t]
\centering
\caption{Test-time compute ablation on MVTec-AD. We report mean results over the 15 categories. All values except FPS are percentages. Best values within each setting are shown in bold.}
\label{tab:mvtec_budget_ablation_full}
\scriptsize
\setlength{\tabcolsep}{3.5pt}
\renewcommand{\arraystretch}{1.05}
\resizebox{\textwidth}{!}{
\begin{tabular}{llrrrrrrrrr}
\toprule
\multirow{2}{*}{Ablation} & \multirow{2}{*}{Setting} & \multirow{2}{*}{$KT$} & \multicolumn{3}{c}{Image-level} & \multicolumn{4}{c}{Pixel-level} & \multirow{2}{*}{FPS} \\
\cmidrule(lr){4-6}\cmidrule(lr){7-10}
 & & & AUROC & AP & F1-max & AUROC & AP & F1-max & PRO & \\
\midrule
\multicolumn{11}{l}{\textit{Fix $K=5$, vary $T$}} \\
 & $T=2$ & 10 & 97.9 & 99.3 & 96.8 & 96.2 & 63.4 & 61.7 & 89.9 & \textbf{37.48} \\
 & $T=4$ & 20 & 98.1 & 99.3 & 97.2 & 96.9 & 65.5 & 63.2 & 90.3 & 18.85 \\
 & $T=6$ & 30 & 98.2 & 99.4 & 97.3 & 97.1 & 65.9 & 63.5 & 91.1 & 12.56 \\
 & $T=8$ & 40 & 98.2 & 99.3 & \textbf{97.5} & 97.2 & 66.3 & 63.7 & 91.0 & 9.42 \\
 & $T=10$ & 50 & 98.2 & 99.3 & 97.4 & 97.2 & 66.4 & 63.9 & \textbf{91.5} & 7.56 \\
 & $T=12$ & 60 & 98.2 & 99.3 & 97.3 & 97.3 & 66.7 & 64.1 & 91.2 & 6.31 \\
 & $T=14$ & 70 & 98.2 & 99.3 & 97.4 & 97.3 & 66.8 & 64.1 & 91.4 & 5.40 \\
 & $T=16$ & 80 & 98.1 & 99.2 & 97.1 & 97.3 & 66.8 & 64.2 & 91.3 & 4.73 \\
 & $T=18$ & 90 & \textbf{98.3} & \textbf{99.4} & 97.2 & 97.3 & 66.9 & 64.2 & 91.2 & 4.20 \\
 & $T=20$ & 100 & 98.2 & 99.3 & 97.3 & \textbf{97.3} & \textbf{66.9} & \textbf{64.3} & 91.0 & 3.78 \\
\midrule
\multicolumn{11}{l}{\textit{Fix $T=5$, vary $K$}} \\
 & $K=2$ & 10 & 96.2 & 98.5 & 95.7 & 96.1 & 62.8 & 60.6 & 88.9 & \textbf{34.96} \\
 & $K=4$ & 20 & 97.8 & 99.2 & 97.1 & 96.8 & 65.3 & 62.9 & 90.3 & 18.61 \\
 & $K=6$ & 30 & 98.4 & 99.4 & 97.4 & 97.1 & 66.2 & 63.8 & 91.0 & 12.61 \\
 & $K=8$ & 40 & 98.5 & 99.4 & 97.6 & 97.3 & 66.5 & 64.1 & 91.4 & 9.60 \\
 & $K=10$ & 50 & 98.6 & 99.5 & 97.6 & 97.4 & 66.7 & 64.1 & 91.3 & 7.73 \\
 & $K=12$ & 60 & 98.6 & 99.5 & 97.8 & 97.5 & 66.8 & 64.3 & 91.6 & 6.46 \\
 & $K=14$ & 70 & 98.7 & 99.5 & 97.7 & 97.5 & 66.9 & 64.3 & 91.8 & 5.55 \\
 & $K=16$ & 80 & 98.7 & 99.5 & \textbf{97.9} & 97.6 & 67.0 & 64.4 & 92.0 & 4.87 \\
 & $K=18$ & 90 & 98.7 & 99.5 & 97.7 & 97.6 & 67.1 & 64.5 & \textbf{92.0} & 4.33 \\
 & $K=20$ & 100 & \textbf{98.8} & \textbf{99.6} & 97.8 & \textbf{97.6} & \textbf{67.1} & \textbf{64.5} & 91.8 & 3.90 \\
\midrule
\multicolumn{11}{l}{\textit{Joint $K=T$}} \\
 & $K=T=2$ & 4 & 95.9 & 98.5 & 95.5 & 95.4 & 60.5 & 58.9 & 87.7 & \textbf{86.40} \\
 & $K=T=4$ & 16 & 97.8 & 99.2 & 97.2 & 96.7 & 65.0 & 62.8 & 90.4 & 23.11 \\
 & $K=T=6$ & 36 & 98.3 & 99.4 & 97.3 & 97.2 & 66.3 & 63.8 & 91.1 & 10.51 \\
 & $K=T=8$ & 64 & 98.6 & 99.5 & 97.4 & 97.5 & 67.1 & 64.5 & 91.5 & 6.01 \\
 & $K=T=10$ & 100 & 98.7 & 99.5 & 97.7 & 97.7 & 67.5 & 64.8 & 91.9 & 3.87 \\
 & $K=T=12$ & 144 & 98.7 & 99.5 & 97.7 & 97.7 & 67.6 & 64.9 & 92.2 & 2.70 \\
 & $K=T=14$ & 196 & 98.7 & 99.5 & 97.8 & 97.8 & 67.8 & 65.1 & 92.2 & 1.99 \\
 & $K=T=16$ & 256 & 98.8 & 99.5 & 97.7 & 97.8 & 67.8 & 65.1 & 92.2 & 1.53 \\
 & $K=T=18$ & 324 & 98.8 & 99.5 & 97.7 & 97.9 & \textbf{68.0} & \textbf{65.3} & 92.2 & 1.20 \\
 & $K=T=20$ & 400 & \textbf{98.8} & \textbf{99.5} & \textbf{97.8} & \textbf{97.9} & 67.9 & 65.2 & \textbf{92.3} & 0.98 \\
\bottomrule
\end{tabular}
}
\end{table*}

\begin{figure*}[t]
    \centering
    \includegraphics[width=\textwidth]{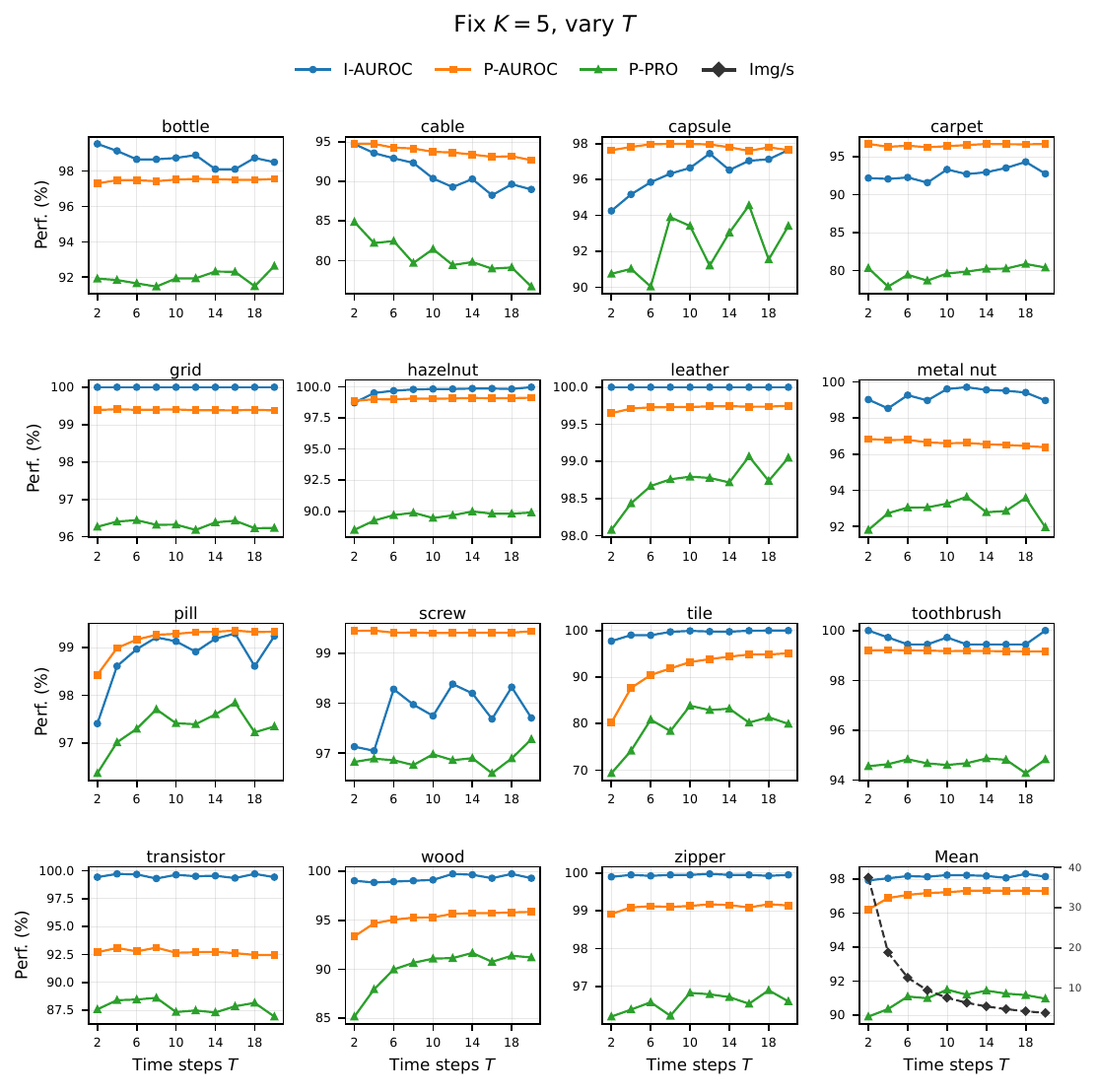}
    \caption{
    Per-category ablation on MVTec-AD with fixed $K=5$ and varying number of time steps $T$.
    Each panel reports I-AUROC, P-AUROC, and P-PRO; the Mean panel additionally shows FPS.
    }
    \label{fig:mvtec_ablation_fixK_varyT}
\end{figure*}

\begin{figure*}[t]
    \centering
    \includegraphics[width=\textwidth]{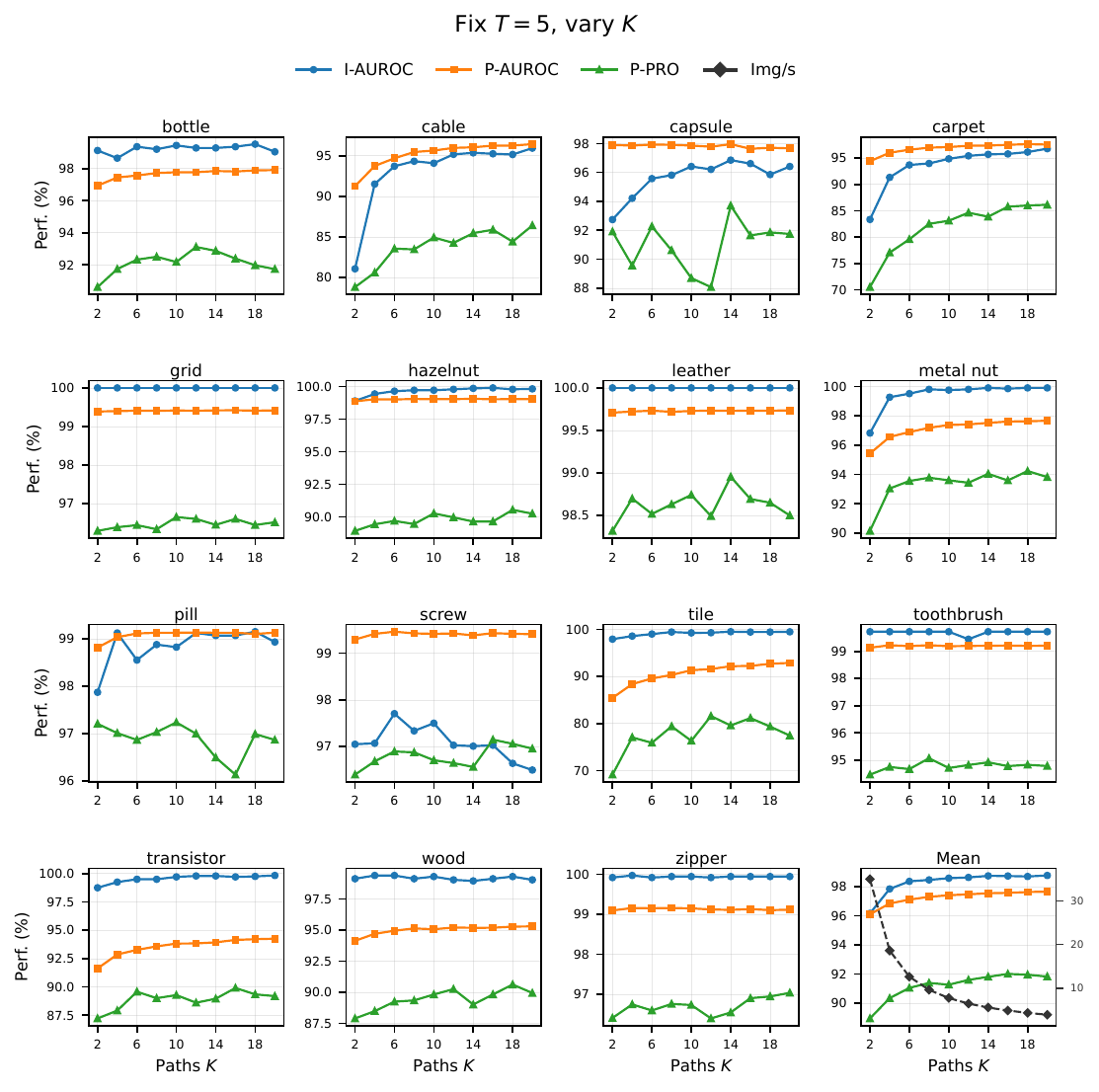}
    \caption{
    Per-category ablation on MVTec-AD with fixed $T=5$ and varying number of paths $K$.
    Each panel reports I-AUROC, P-AUROC, and P-PRO; the Mean panel additionally shows FPS.
    }
    \label{fig:mvtec_ablation_fixT_varyK}
\end{figure*}

\begin{figure*}[t]
    \centering
    \includegraphics[width=\textwidth]{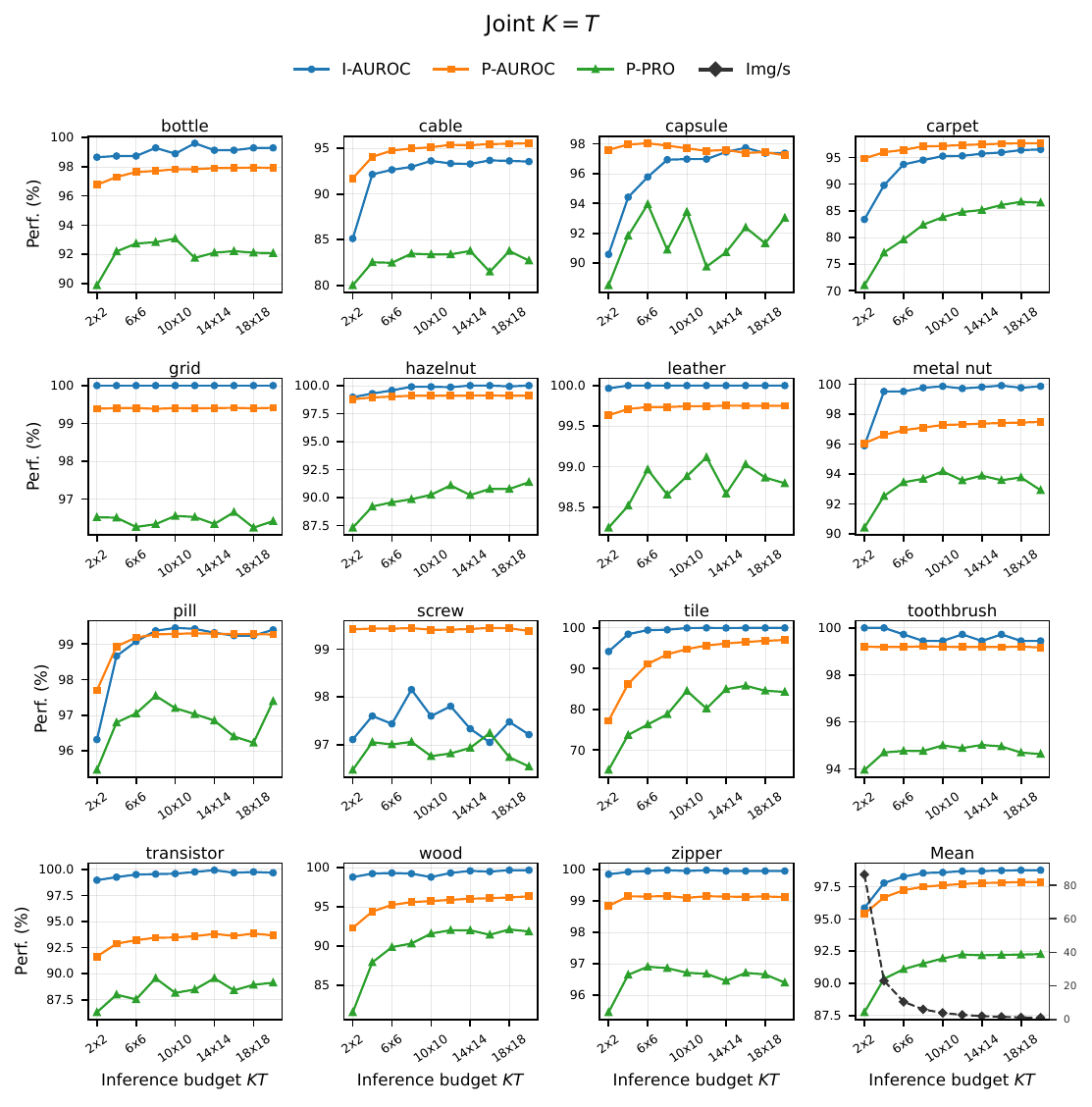}
    \caption{
    Per-category ablation on MVTec-AD under joint test-time compute scaling with $K=T$.
    Each panel reports I-AUROC, P-AUROC, and P-PRO; the Mean panel additionally shows FPS.
    }
    \label{fig:mvtec_ablation_joint_KeqT}
\end{figure*}

\begin{table*}[t]
\centering
\caption{Ablation on path aggregation for Flow Mismatching. We report mean results over categories on MVTec-AD and VisA.  Best values within each dataset and metric are shown in bold.}
\label{tab:path_aggregator_ablation}
\scriptsize
\setlength{\tabcolsep}{5.2pt}
\renewcommand{\arraystretch}{1.08}
\definecolor{oursblue}{RGB}{40,120,200}
\resizebox{0.9\textwidth}{!}{
\begin{tabular}{lrrrrrrr}
\toprule
\multirow{2}{*}{Aggregator} & \multicolumn{3}{c}{Image-level} & \multicolumn{4}{c}{Pixel-level} \\
\cmidrule(lr){2-4}\cmidrule(lr){5-8}
 & AUROC & AP & F1-max & AUROC & AP & F1-max & PRO \\
\midrule
\multicolumn{8}{l}{\textit{MVTec-AD}} \\
Average & 94.4 & 97.6 & 95.0 & 94.9 & 60.4 & 58.3 & 87.7 \\
$q_{30}$ quantile & 97.5 & 99.0 & 96.8 & 96.3 & 65.2 & 62.6 & 90.2 \\
$q_{20}$ quantile & 98.0 & 99.2 & 97.1 & 96.7 & 66.2 & 63.6 & 91.1 \\
$q_{10}$ quantile & 98.4 & 99.4 & 97.3 & 97.0 & 67.0 & 64.2 & 91.5 \\
 Minimum (default) & \textbf{98.7} & \textbf{99.5} & \textbf{97.7} & \textbf{97.7} & \textbf{67.5} & \textbf{64.8} & \textbf{91.9} \\
\midrule
\multicolumn{8}{l}{\textit{VisA}} \\
Average & 96.7 & 97.8 & 93.4 & 94.4 & 39.0 & 45.3 & 85.5 \\
$q_{30}$ quantile & 97.9 & 98.5 & 94.9 & 96.8 & 40.7 & 46.5 & 89.4 \\
$q_{20}$ quantile & 97.9 & \textbf{98.5} & 94.9 & 97.2 & \textbf{40.9} & \textbf{46.6} & 89.9 \\
$q_{10}$ quantile & 97.8 & 98.3 & 94.7 & 97.4 & 40.7 & 46.2 & 90.5 \\
 Minimum (default) & \textbf{98.0} & 98.4 & \textbf{95.2} & \textbf{97.5} & 40.2 & 45.8 & \textbf{90.6} \\
\bottomrule
\end{tabular}
}
\end{table*}

\begin{table*}[h]
\centering
\caption{Ablation on pooled categories vs. condition label training on MVTec-AD and VisA.}
\label{tab:ablation_multiclass}
\setlength{\tabcolsep}{3.8pt}
\renewcommand{\arraystretch}{1.12}
\resizebox{\textwidth}{!}{
\begin{tabular}{lcccccccccccccc}
\toprule
\multirow{4}{*}{\textbf{Class}}
& \multicolumn{7}{c}{\textbf{Pooled categories training}}
& \multicolumn{7}{c}{\textbf{Conditional category label training}( \textit{default})} \\
\cmidrule(lr){2-8}\cmidrule(lr){9-15}
& \multicolumn{3}{c}{Image-level} & \multicolumn{4}{c}{Pixel-level}
& \multicolumn{3}{c}{Image-level} & \multicolumn{4}{c}{Pixel-level} \\
\cmidrule(lr){2-4}\cmidrule(lr){5-8}
\cmidrule(lr){9-11}\cmidrule(lr){12-15}
& AUROC & AP & $F_{1}$-max
& AUROC & AP & $F_{1}$-max & PRO
& AUROC & AP & $F_{1}$-max
& AUROC & AP & $F_{1}$-max & PRO \\
\midrule
\multicolumn{15}{c}{\textbf{MVTec-AD}} \\
\midrule
bottle       & 98.5 & 99.6 & 98.4 & 96.8 & 72.6 & 65.1 & 90.5 & 99.1 & 99.7 & 97.6 & 97.8 & 76.8 & 69.9 & 93.0 \\
cable        & 91.8 & 95.9 & 90.7 & 92.8 & 43.4 & 43.0 & 78.0 & 93.2 & 96.6 & 90.8 & 95.1 & 51.8 & 51.3 & 82.1 \\
capsule      & 92.6 & 98.4 & 94.6 & 97.1 & 49.1 & 50.6 & 93.1 & 97.4 & 99.5 & 96.4 & 97.9 & 48.5 & 51.5 & 94.0 \\
carpet       & 96.8 & 99.1 & 94.8 & 97.0 & 68.5 & 66.5 & 87.8 & 95.4 & 98.7 & 93.0 & 97.2 & 70.0 & 67.6 & 83.3 \\
grid         & 100 & 100 & 100 & 99.4 & 59.7 & 58.3 & 96.5 & 100 & 100 & 100 & 99.4 & 55.7 & 55.1 & 96.4 \\
hazelnut     & 100 & 100 & 100 & 99.1 & 71.0 & 70.9 & 89.6 & 99.9 & 99.9 & 99.3 & 99.1 & 69.2 & 70.1 & 89.7 \\
leather      & 100 & 100 & 100 & 99.7 & 68.6 & 68.5 & 99.1 & 100 & 100 & 100 & 99.7 & 68.8 & 66.9 & 98.9 \\
metal\_nut   & 99.7 & 99.9 & 98.9 & 93.4 & 75.5 & 69.0 & 90.4 & 99.8 & 100.0 & 99.5 & 97.3 & 85.5 & 79.1 & 93.5 \\
pill         & 98.2 & 99.7 & 97.9 & 99.2 & 84.7 & 78.8 & 97.4 & 99.3 & 99.9 & 98.9 & 99.3 & 84.2 & 77.3 & 97.2 \\
screw        & 87.5 & 94.7 & 90.4 & 98.9 & 46.7 & 53.0 & 95.5 & 97.8 & 99.2 & 96.7 & 99.4 & 55.3 & 54.1 & 96.9 \\
tile         & 99.9 & 100 & 98.8 & 93.9 & 72.9 & 67.1 & 77.5 & 99.9 & 100.0 & 99.4 & 94.8 & 74.4 & 68.1 & 82.5 \\
toothbrush   & 98.6 & 99.5 & 96.7 & 98.7 & 58.2 & 61.8 & 93.6 & 99.4 & 99.8 & 98.4 & 99.2 & 63.5 & 65.4 & 94.8 \\
transistor   & 99.0 & 98.4 & 96.4 & 87.7 & 43.1 & 43.0 & 82.8 & 99.7 & 99.6 & 97.5 & 93.7 & 58.1 & 56.1 & 88.7 \\
wood         & 98.4 & 99.5 & 96.8 & 95.0 & 70.2 & 66.0 & 91.0 & 99.3 & 99.8 & 98.4 & 95.7 & 70.9 & 65.4 & 91.4 \\
zipper       & 99.8 & 100 & 99.2 & 98.8 & 78.9 & 74.0 & 95.4 & 99.9 & 100.0 & 99.6 & 99.1 & 79.3 & 74.2 & 96.8 \\
\midrule
\textbf{Mean} & 97.4 & 99.0 & 96.9 & 96.5 & 64.2 & 62.4 & 90.5 & 98.7 & 99.5 & 97.7 & 97.7 & 67.5 & 64.8 & 91.9 \\
\midrule
\multicolumn{15}{c}{\textbf{VisA}} \\
\midrule
candle       & 94.8 & 95.5 & 89.2 & 98.6 & 37.8 & 41.7 & 97.1 & 96.6 & 97.0 & 90.9 & 98.8 & 39.0 & 41.8 & 97.0 \\
capsules     & 94.8 & 97.1 & 92.0 & 99.8 & 72.5 & 72.1 & 97.8 & 95.4 & 97.2 & 92.3 & 99.9 & 71.7 & 73.5 & 97.7 \\
cashew       & 96.2 & 97.9 & 95.1 & 91.9 & 46.8 & 50.9 & 77.9 & 97.4 & 98.6 & 96.6 & 87.5 & 45.4 & 53.8 & 76.7 \\
chewinggum   & 98.3 & 99.3 & 96.4 & 98.1 & 51.7 & 52.1 & 60.5 & 99.2 & 99.6 & 97.5 & 98.2 & 38.5 & 45.0 & 61.4 \\
fryum        & 97.7 & 98.9 & 94.6 & 97.5 & 51.4 & 54.5 & 94.3 & 98.3 & 99.2 & 94.8 & 96.8 & 43.5 & 47.8 & 94.8 \\
macaroni1    & 99.5 & 99.5 & 96.6 & 99.8 & 30.4 & 37.0 & 98.7 & 99.3 & 99.2 & 96.6 & 99.9 & 34.6 & 42.9 & 98.7 \\
macaroni2    & 94.2 & 94.3 & 88.5 & 99.8 & 24.8 & 31.8 & 98.8 & 96.0 & 96.1 & 90.2 & 99.9 & 28.8 & 36.5 & 98.9 \\
pcb1         & 96.3 & 95.4 & 93.1 & 95.5 & 26.3 & 36.2 & 93.5 & 97.5 & 96.4 & 95.7 & 95.7 & 25.1 & 37.1 & 94.4 \\
pcb2         & 98.8 & 98.7 & 96.5 & 97.3 & 21.2 & 26.7 & 90.2 & 99.0 & 99.0 & 97.5 & 97.7 & 14.0 & 22.5 & 90.2 \\
pcb3         & 99.3 & 99.2 & 97.5 & 98.6 & 21.0 & 32.2 & 95.7 & 99.6 & 99.6 & 98.0 & 98.9 & 21.2 & 32.1 & 96.0 \\
pcb4         & 99.5 & 99.5 & 98.0 & 97.4 & 49.6 & 54.0 & 84.6 & 99.3 & 99.2 & 97.0 & 97.9 & 51.4 & 52.0 & 86.5 \\
pipe\_fryum  & 98.0 & 99.0 & 94.3 & 99.4 & 68.6 & 66.7 & 94.6 & 98.6 & 99.3 & 95.6 & 99.3 & 69.6 & 65.2 & 95.1 \\
\midrule
\textbf{Mean} & 97.3 & 97.9 & 94.3 & 97.8 & 41.9 & 46.3 & 90.3 & 98.0 & 98.4 & 95.2 & 97.5 & 40.2 & 45.8 & 90.6 \\
\bottomrule
\end{tabular}}
\end{table*}

\begin{table*}[t]
\centering
\caption{Training-epoch ablation on MVTec-AD. Best values across epochs are shown in \textbf{bold}.}
\label{tab:mvtec_epoch_ablation_full}
\scriptsize
\setlength{\tabcolsep}{4.2pt}
\renewcommand{\arraystretch}{1.05}
\resizebox{0.9\textwidth}{!}{
\begin{tabular}{llrrrrrrr}
\toprule
\multirow{2}{*}{Ablation} & \multirow{2}{*}{Epoch} & \multicolumn{3}{c}{Image-level} & \multicolumn{4}{c}{Pixel-level} \\
\cmidrule(lr){3-5}\cmidrule(lr){6-9}
 & & AUROC & AP & F1-max & AUROC & AP & F1-max & PRO \\
\midrule
\multicolumn{9}{l}{\textit{Training duration}} \\
 & 100 & 91.7 & 96.4 & 93.9 & 95.6 & 57.3 & 56.7 & 87.9 \\
 & 200 & 95.3 & 98.0 & 95.1 & 96.2 & 61.0 & 60.0 & 89.4 \\
 & 300 & 96.3 & 98.4 & 96.1 & 96.6 & 63.0 & 61.6 & 90.5 \\
 & 400 & 96.3 & 98.5 & 95.7 & 96.6 & 63.7 & 62.1 & 90.4 \\
 & 500 & 97.4 & 98.9 & 96.4 & 96.9 & 64.4 & 62.8 & 91.0 \\
 & 600 & 97.5 & 98.9 & 96.6 & 97.0 & 64.8 & 63.0 & 91.4 \\
 & 700 & 97.4 & 98.9 & 96.3 & 97.1 & 65.3 & 63.4 & 91.2 \\
 & 800 & 97.8 & 99.2 & 96.8 & 97.4 & 66.0 & 63.7 & 91.4 \\
 & 900 & 98.0 & 99.2 & 97.0 & 97.3 & 65.9 & 63.6 & 91.3 \\
 & 1000 & 98.0 & 99.2 & 97.1 & 97.3 & 66.4 & 64.0 & 91.1 \\
 & 1100 & 98.0 & 99.2 & 97.3 & 97.4 & 66.3 & 63.9 & 91.5 \\
 & 1200 & 98.1 & 99.2 & 97.1 & 97.4 & 66.8 & 64.3 & 91.8 \\
 & 1300 & 98.2 & 99.3 & 97.2 & 97.6 & 67.1 & 64.5 & 91.4 \\
 & 1400 & 98.2 & 99.3 & 97.3 & 97.6 & 67.0 & 64.3 & 91.7 \\
 & 1500 & 98.2 & 99.3 & 97.3 & 97.6 & 67.2 & 64.6 & 91.8 \\
 & 1600 & 98.4 & 99.4 & 97.4 & 97.5 & 67.3 & \textbf{64.9} & 91.8 \\
 & 1700 & 98.5 & 99.4 & 97.5 & 97.6 & 67.2 & 64.6 & 91.7 \\
 & 1800 & 98.5 & 99.4 & 97.6 & 97.4 & 67.1 & 64.3 & 91.6 \\
 & 1900 & 98.5 & 99.5 & 97.6 & 97.6 & 67.4 & 64.7 & 91.8 \\
 & 2000 & \textbf{98.7} & \textbf{99.5} & \textbf{97.7} & \textbf{97.7} & \textbf{67.5} & 64.8 & \textbf{91.9} \\
\bottomrule
\end{tabular}
}
\end{table*}

\clearpage
\section{Qualitative results}
\label{app:quality}
We present more qualitative results on VisA and MVTec in Figure~\ref{fig:visa},\ref{fig:mvtec}. Our method demonstrates strong localization ability, capable of capturing even the smallest anomaly.
\begin{figure}[h]
    \centering
    \includegraphics[width=1.0\linewidth]{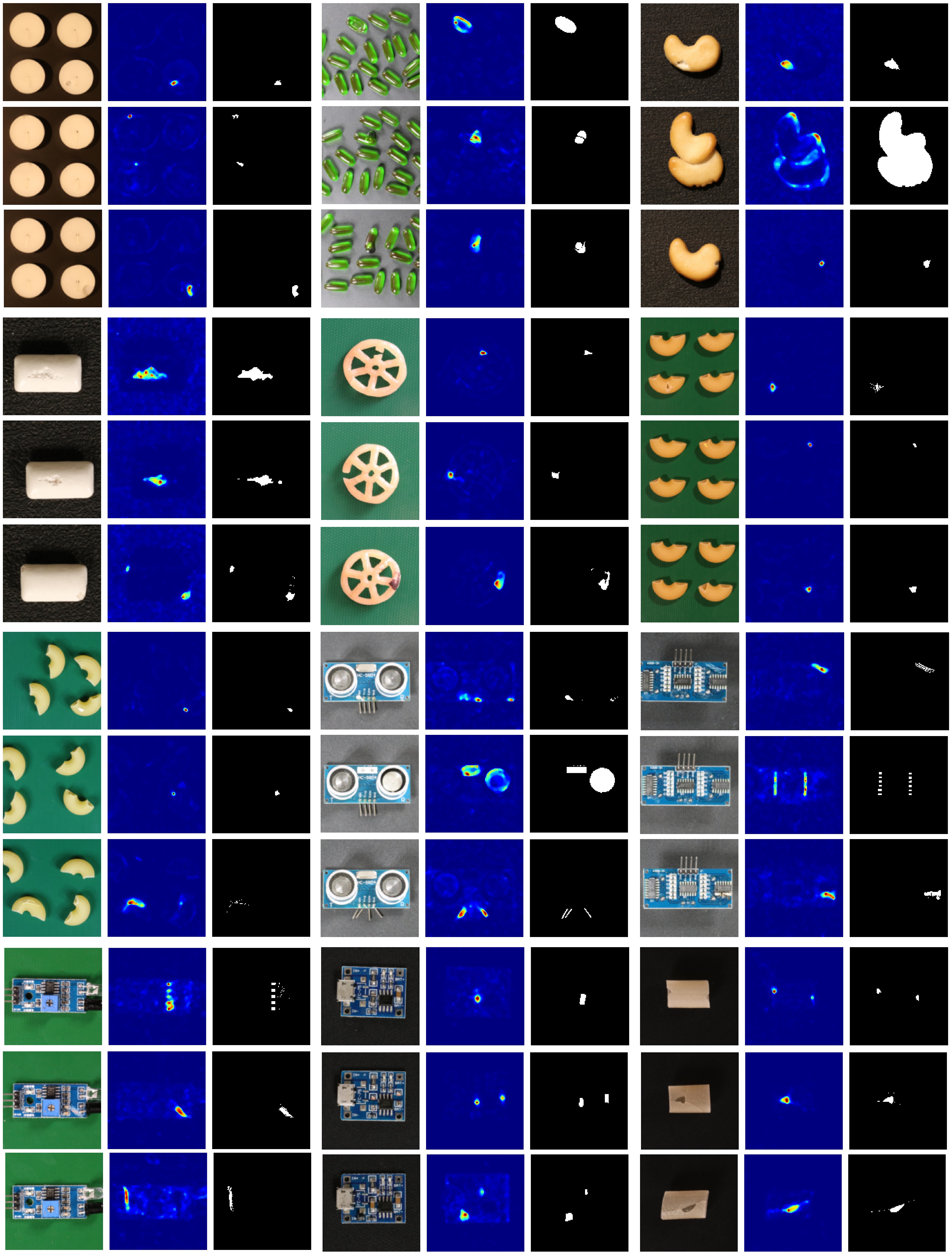}
    \caption{Qualitative results of Flow Mismatching on VisA dataset.}
    \label{fig:visa}
\end{figure}

\begin{figure}
    \centering
    \includegraphics[width=1.0\linewidth]{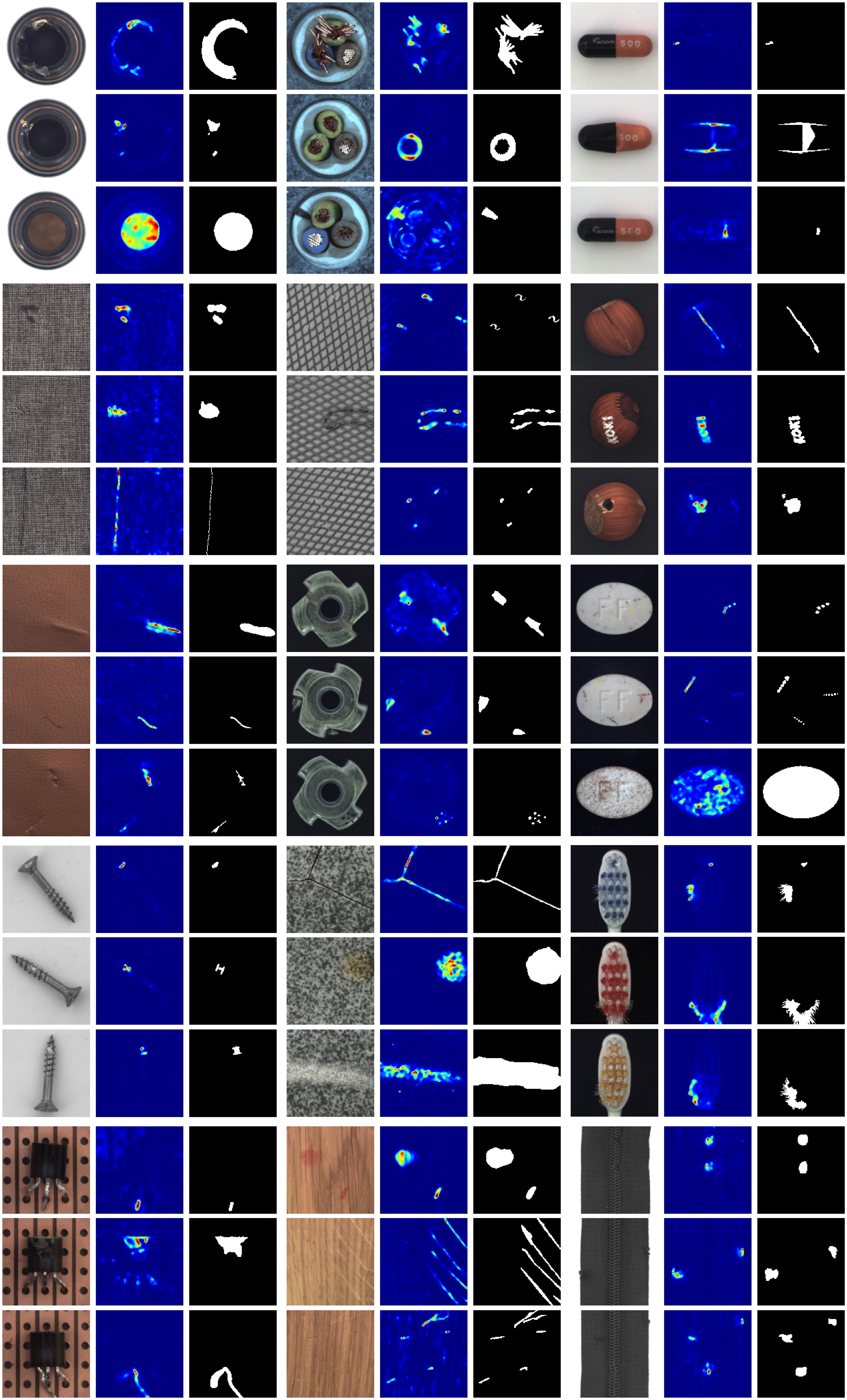}
    \caption{Qualitative results of Flow Mismatching on MVTec dataset.}
    \label{fig:mvtec}
\end{figure}

\end{document}